\documentclass[]{fairmeta}

\usepackage{subcaption}

\usepackage{amsmath,amsthm,amssymb,bm, mathtools}
\usepackage{amsfonts}
\usepackage{thmtools} 
\usepackage{bbm}
\usepackage{algorithm}
\usepackage{algorithmic}

\usepackage{float}
\newfloat{algorithm}{t}{lop}
\usepackage{enumitem}

\usepackage{pifont}

\newcommand{\eTnn}{\texttt{e3nn}\@\xspace}

\definecolor{RoyalBlue}{rgb}{0.25, 0.41, 0.88}
\newcommand{\cellhi}{\cellcolor{RoyalBlue!15}}


\newcommand{\cev}[1]{\reflectbox{\ensuremath{\vec{\reflectbox{\ensuremath{#1}}}}}}

\newenvironment{talign*}
 {\csname align*\endcsname}
 {\endalign}

\newcommand{\pbase}{p^{\text{base}}}
\newcommand{\pbasepbc}{\bar{p}^{\text{base}}}

\renewcommand{\P}{\mathcal{P}}

\newcommand{\T}{\mathbb{T}}
\newcommand{\Z}{\mathbb{Z}}
\newcommand{\R}{\mathbb{R}}
\newcommand{\E}{\mathbb{E}}

\newcommand{\fX}{\bm{X}}
\usepackage{xspace}
\newcommand*{\eg}{{\it e.g.}\@\xspace}
\newcommand*{\ie}{{\it i.e.}\@\xspace}

\def\dt{{dt}}
\def\rd{{d}}
\newcommand{\norm}[1]{\|#1\|}

\newcommand{\pr}[1]{\left(#1\right)}
\DeclareMathOperator*{\argmin}{arg\,min}

\newcommand{\KL}{D_{\text{KL}}}
\newcommand{\dif}{\operatorname{d\!}{}}

\DeclarePairedDelimiterX{\infdivx}[2]{(}{)}{%
  #1\;\delimsize\|\;#2%
}
\newcommand{\kldiv}{D_{\text{KL}}\infdivx}

\definecolor{cadetblue}{rgb}{0.37, 0.62, 0.63}
\definecolor{mygray}{gray}{0.95}
\definecolor{mediumpurple}{rgb}{0.46, 0.44, 0.68}
\definecolor{periwinkle}{rgb}{0.8, 0.8, 1.0}
\definecolor{purplemountainmajesty}{rgb}{0.59, 0.47, 0.71}
\newcommand{\graybox}[1]{%
\vspace{-1em} 
\begin{center}			
\colorbox{mygray} {		
\begin{minipage}{0.987\linewidth} 	
\centering
\vspace{-1em}   
{#1}    
\end{minipage}}			
\end{center}
}

\makeatletter
\DeclareRobustCommand{\cev}[1]{%
  {\mathpalette\do@cev{#1}}%
}
\newcommand{\do@cev}[2]{%
  \vbox{\offinterlineskip
    \sbox\z@{$\m@th#1 x$}%
    \ialign{##\cr
      \hidewidth\reflectbox{$\m@th#1\vec{}\mkern4mu$}\hidewidth\cr
      \noalign{\kern-\ht\z@}
      $\m@th#1#2$\cr
    }%
  }%
}
\makeatother


\usepackage[noabbrev,nameinlink]{cleveref}
\theoremstyle{plain}  
\newtheorem{theorem}{Theorem}[section]  
\newtheorem{lem}[theorem]{Lemma}
\newtheorem{prop}[theorem]{Proposition}

\theoremstyle{definition}  
\newtheorem{definition}[theorem]{Definition}

\title{Adjoint Sampling: Highly Scalable Diffusion Samplers via Adjoint Matching}

\author[2,\dagger,*]{Aaron Havens}
\author[1,*]{Benjamin Kurt Miller}
\author[1,3,*]{Bing Yan}
\author[4]{Carles Domingo-Enrich}
\author[1]{Anuroop Sriram}
\author[1]{Brandon Wood}
\author[1]{Daniel Levine}
\author[2]{Bin Hu}
\author[1]{Brandon Amos}
\author[1]{Brian Karrer}
\author[1,*]{Xiang Fu}
\author[1,*]{Guan-Horng Liu}
\author[1,*]{Ricky T. Q. Chen}

\affiliation[1]{FAIR at Meta}
\affiliation[2]{University of Illinois}
\affiliation[3]{New York University}
\affiliation[4]{Microsoft Research New England}

\contribution[*]{Core contributors}
\contribution[\dagger]{Work done during internship at FAIR}

\abstract{
We introduce \emph{Adjoint Sampling}, a highly scalable and efficient algorithm for learning diffusion processes that sample from unnormalized densities, or energy functions. It is the first on-policy approach that allows significantly more gradient updates than the number of energy evaluations and model samples, allowing us to scale to much larger problem settings than previously explored by similar methods.
Our framework is theoretically grounded in stochastic optimal control and shares the same theoretical guarantees as Adjoint Matching, being able to train without the need for corrective measures that push samples towards the target distribution.
We show how to incorporate key symmetries, as well as periodic boundary conditions, for modeling molecules in both cartesian and torsional coordinates.
We demonstrate the effectiveness of our approach through extensive experiments on classical energy functions, and further scale up to neural network-based energy models where we perform amortized conformer generation across many molecular systems.
To encourage further research in developing highly scalable sampling methods, we plan to open source these challenging benchmarks, where successful methods can directly impact progress in computational chemistry.
}

\metadata[Code \& benchmark]{\url{https://github.com/facebookresearch/adjoint_sampling}}
\metadata[Models]{\url{https://huggingface.co/facebook/adjoint_sampling}}

\begin{document}

\maketitle

\section{Introduction}
Sampling from complex, high-dimensional distributions underlies many important problems in computational science, with applications spanning molecular modeling, Bayesian inference , and generative modeling. In particular, we are interested in sampling from the target distribution with only access to its unnormalized energy function $E$, which defines the Boltzmann distribution
\begin{equation}\label{eq:p_star}
    \mu(x) = \frac{\exp\left( -\frac{1}{\tau} E(x) \right)}{Z},
\end{equation}
where $Z = \int_{\mathbb{R}^d} \exp \left(-\tfrac{1}{\tau}E(x)\right) \dif{x} < \infty$ is the unknown normalization constant. The Boltzmann distribution describes the equilibrium state of many physical systems, where $E(x)$ denotes the energy of a configuration $x$, and $\tau > 0$ is a temperature parameter. Efficiently sampling from such distributions remains challenging, especially for high-dimensional systems with intricate energy landscapes. Additionally many energy functions are extremely computationally expensive, e.g. requiring physics simulations.

Traditional approaches, such as Markov Chain Monte Carlo (MCMC) and Sequential Monte Carlo (SMC) using well-designed Markov Chains~\citep{neal2001annealed,neal2011mcmc,del2006sequential}, provide asymptotically unbiased samples but often suffer from slow mixing and poor scalability to high-dimensional settings. This necessitates the design of better transition densities and smarter proposal distributions. Recent works try to address this by
augmenting sampling with learned proposal distribution~\citep{albergo2019flow, arbel2021annealed, gabrie2022adaptive} via normalizing flows~\citep{chen2018neural,rezende2015variational}.\looseness=-1

It may seem natural to look towards the recent explosion of diffusion and flow-based generative models~\citep{song2019generative,ho2020denoising, lipman2023flow, albergo2023stochastic}. 
However, a na\"ive adaptation of these data-driven generative modeling frameworks requires access to ground truth data. 
This limitation is particularly significant in applications such as molecular simulations and physics-based inference, where direct access to samples from the target distribution is often unavailable. 
As a result, prior attempts often require an augmentation with sequential Monte Carlo or importance-sampling \citep{phillips2024particle,de2024target,akhounditerated}, making these methods highly inefficient in terms of energy function evaluations.\looseness=-1

The connection between sampling and diffusion processes was established by~\citet{tzen2019theoretical} through using classical results of stochastic optimal control (SOC) and Schrodinger-Bridge problems~\citep{pavon1989stochastic, daipra91, follmer2005entropy, chen2016relation}. Using this formulation, \citet{zhang2022path} show that by directly parameterizing the drift of the controlled process, one could solve an SOC problem given unnormalized density for sampling tasks.
This concept is further generalize by~\citet{berner2023optimal} and \citet{richter2024improved}; however, all of these method require computationally expensive simulation of the diffusion process per gradient update. Furthermore, they require at least one---sometimes many---energy evaluations per gradient update.

To overcome these challenges, we introduce \emph{Adjoint Sampling}, a novel and extremely efficient variational inference framework based on stochastic control of diffusion processes, which we apply at much larger scale than previous methods. 
Our method is built on top of Adjoint Matching \citep{domingoenrich2024adjoint}, a recent method developed for solving general stochastic control problems which we specialize and improve for efficiently learning to sample.

\begin{figure}[t!]
    \centering
    \includegraphics[width=1.0\linewidth]{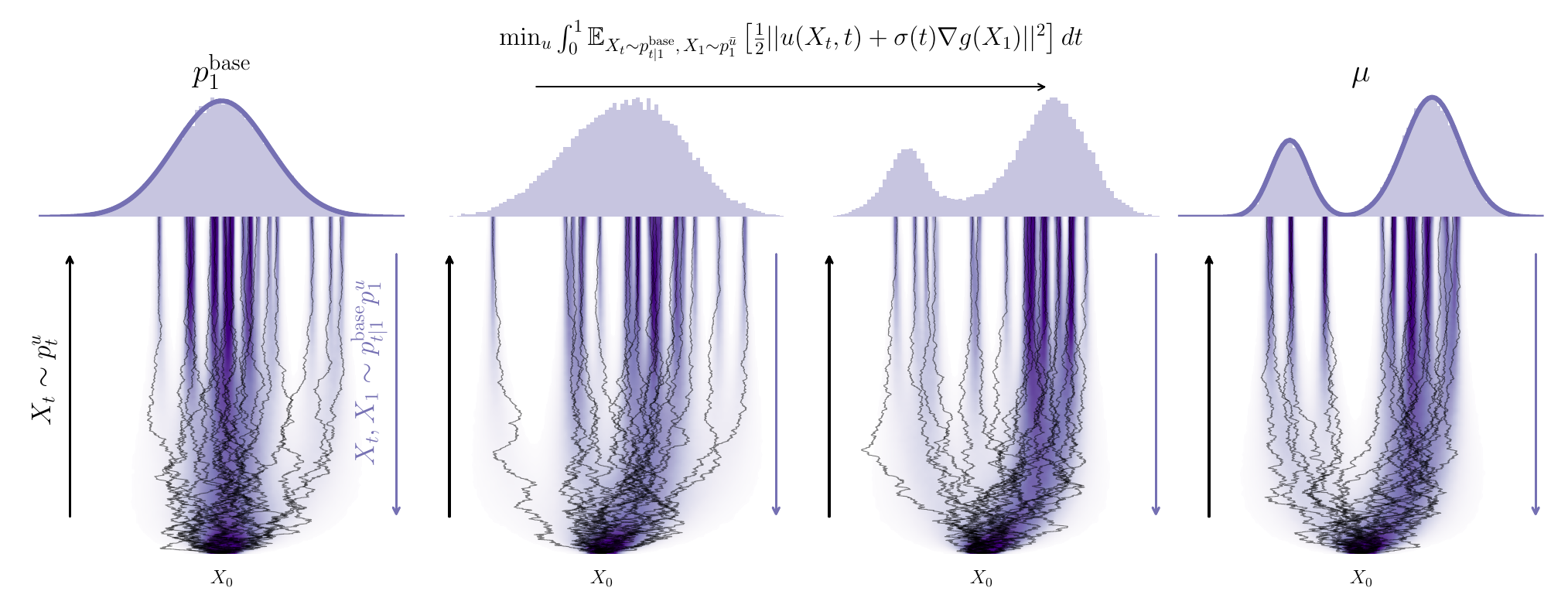}
    \caption{
    Starting from the uncontrolled diffusion process $\pbase_1$ (left-most panel), Adjoint Sampling uses the Reciprocal Projection {\color{mediumpurple}{$(X_t, X_1) \sim \pbase_{t|1}p^u_1$}} of the current controlled SDE to approximate the joint trajectory distribution $p^u_t$, allowing us to take several gradient steps on the RAM objective per evaluated sample and energy gradient $(X_1$, $\nabla g(X_1))$. After several iterations, Adjoint Sampling converges to the target Boltzmann $\mu$ (right-most panel).}
    \label{fig:enter-label}
\end{figure}

Our contributions are as follows:
\begin{itemize}
    \item \textbf{Efficiency}: Our method is the first on-policy approach to allow far more gradient updates per model sample and energy evaluation. This is extremely important for scaling up to difficult amortized settings.
    \item \textbf{Theoretically Grounded}: Our method is grounded in stochastic control, following the recent work of Adjoint Matching \citep{domingoenrich2024adjoint}. We improve upon Adjoint Matching by proposing a new objective that implicitly projects the model onto a set of optimal controls.
    We do not require corrective measures such as importance sampling or sequential Monte Carlo for our theoretical guarantees.
    \item  \textbf{Structure}: Graph and Euclidean symmetries can be easily incorporated. We also adopt the method to handle periodic boundary conditions for modeling torsion angle representations of molecular systems.
    \item  \textbf{New Benchmarks}: We introduce amortized molecule sampling benchmarks that challenge new methods to be applicable at scale. Being successful at these benchmarks directly drives progress in chemistry.
\end{itemize}

\section{Preliminaries}

In this section, we briefly introduce an optimization problem over a class of stochastic processes that samples from $\mu$ at the optimal solution at the fixed time $t=1$. This control perspective of sample generation was observed by~\citet{tzen2019theoretical}, with similar formulations found in \citet{zhang2022path, berner2023optimal}. A more in-depth introduction to stochastic control for sampling can be found in \Cref{app:SOC}.

\subsection{Optimizing Diffusion Processes for Sampling}
Consider the stochastic differential equation (SDE)
\begin{align}\label{eq:controlled_sde}
    \dif{X_t} = 
    \sigma(t) u(X_t, t) \dif{t} + \sigma(t) \dif{B_t},\quad X_0 = 0,
\end{align}
where $\sigma:[0,1]\rightarrow \mathbb{R}$ is a scalar noise function, $u: \mathbb{R}^d \times [0,1] \rightarrow \mathbb{R}^d$
is a learnable \emph{control} and $(B_t)_t$ is a $d$-dimensional Brownian motion. Under mild conditions on $\sigma$ and $u$, this controlled process~\eqref{eq:controlled_sde} uniquely defines time-marginals $p^u_{t}(X_t)$ for all $t\in \mathbb{R}$ and we seek a drift $u$ such that $p^u_{1}(X_1) = \mu(X_1)$. We denote $\fX = \{X_t : 0\leq t \leq 1 \}$ a sample trajectory and $p^u(\fX)$ and $\pbase(\fX)$ the distributions over trajectories\footnote{
    Formally, path measures of stochastic processes and their KL divergences are defined by Radon-Nikodyn derivative, \eg $\frac{\rd \mathbb{P}^u}{\rd \mathbb{P}^\text{base}}(\fX) = \exp(\int^1_0 \frac{1}{2} \lVert u_t \rVert^2 \dt + \int^1_0 u_t \cdot \rd B_t )$ with $\fX \sim \mathbb{P}^u$.
} generated by the controlled process~\eqref{eq:controlled_sde} and base process $\dif{X_t} = \sigma(t) \dif{B_t}$ (i.e. setting $u\equiv 0$ in \eqref{eq:controlled_sde}), respectively.
The drift that transports the Dirac distribution to the target density $\mu$ at time $t=1$ is not unique; however, we can choose a particular target path density that is a \emph{Schr\"odinger bridge}~\citep{pavon1989stochastic, chen2016relation}  between the Dirac and the target distribution given by
\begin{equation}\label{eq:brownian_bridge}
    p^*(\fX) = \pbase(\fX|X_1)\mu(X_1),
\end{equation}
where $\pbase(\fX|X_1)$ is the posterior distribution of the base process conditioned on arriving at $X_1$. This Schr\"odinger bridge is, among other processes that coincides with $\mu$ at $t=1$, the one that deviates the least from the base process in KL divergence. 
Notably, the path KL to $p^*$ over trajectories with respect to the control $u$ can be expressed analytically by factorizing $p^*(\fX) = \pbase(\fX)\tfrac{\mu(X_1)}{\pbase_1(X_1)}$ and invoking Girsanov's Theorem~\citep{protter2005stochastic},
\begin{align}
\label{eq:KL_regularized_main}
    \mathcal{L}_\text{SOC}(u) = \kldiv{p^{u}(\fX)}{p^*(\fX)} 
    &= \kldiv{p^u(\fX)}{\pbase(\fX)} + \mathbb{E}_{p^u} \log\left(\frac{\pbase_{1}(X_1)}{\mu(X_1)} \right)
    \\
    &=\label{eq:soc_cost}
       \mathbb{E}_{p^u} \left[\int_0^1 \tfrac{1}{2}\lVert u(X_t, t) \rVert^2 \dif{t} + \log\left(\frac{\pbase_1(X_1)}{\mu(X_1)} \right)\right]. 
\end{align}
To this end, we propose to minimize the objective~\eqref{eq:soc_cost}, which, intuitively, minimizes the control energy required to transport to $\mu$. 
In fact, this criteria corresponds exactly with a minimum-energy stochastic optimal control (SOC) problem  with terminal cost function given by $\log \tfrac{\pbase_1(x)}{\mu(x)}$. 
This particular SOC problem is heavily studied in literature and has a unique optimal solution $u^*$ satisfying~\citep{kappen2005path, tzen2019neural} (See \Cref{app:SOC} for more details)
\begin{align}\label{eq:KL_minimized}
    \kldiv{p^{u^*}(\fX)}{p^*(\fX)} =-\log \mathbb{E}_{\pbase} \left[\frac{\mu(X_1)}{\pbase_1(X_1)}\right] = 0.
\end{align}
Therefore $p^*$ can indeed be achieved by $u^*$ and our problem is then reduced to optimizing the SOC objective \eqref{eq:soc_cost} as efficiently as possible. This is precisely what our proposed method \emph{Adjoint Sampling} addresses.

\section{Adjoint Sampling}
Now that we have equated the sampling problem as the solution of an SOC problem, we will now propose a highly efficient method to solve SOC problems of the form
\begin{subequations}\label{eq:soc} 
\begin{eqnarray}
    \label{eq:soc_terminal_cost} 
    \min_{u} \mathbb{E}_{\fX \sim p^u}\left[ \int_0^1 \tfrac{1}{2} \lVert u(X_t, t) \rVert^2 \dif{t} + g(X_1)\right]\\
    \label{eq:soc_terminal_cost_2}
    \text{ s.t. } \dif{X_t} = 
    \sigma(t) u(X_t, t) \dif{t} + \sigma(t) \dif{B_t},\quad X_0 = 0 
\end{eqnarray}
\end{subequations}
which we will use to learn a stochastic process that samples from a target distribution given only an unnormalized energy function~\eqref{eq:p_star} with
\begin{equation}
    g(x) = \log \pbase_1(x) + \tfrac{1}{\tau}E(x),
\end{equation}
which is \eqref{eq:soc_cost} without the normalization constant as it does not affect the optimal solution.
Na\"ively, one could simulate the SDE \eqref{eq:soc_terminal_cost_2}, and then differentiate through the objective \eqref{eq:soc_terminal_cost}, often referred to as the adjoint method~\citep{BrysonHo69}, and is the method of choice for prior works~\citep{chen2018neural,zhang2022path}. However, this is very slow as it requires two simulations per iteration, one for the state $X_t$ and another one for backpropagating through the SDE.

\subsection{Adjoint Matching} \label{subsec:AM}

Adjoint Matching \citep{domingoenrich2024adjoint} is an algorithm designed to solve a more general family of stochastic optimal control problems, being able to make use of base processes with arbitrary drift $\dif{X_t} = b(X_t,t)\dif{t} + \sigma(t)\dif{B_t}$. Unlike standard adjoint methods~\citep{BrysonHo69} that use gradient-based approaches and differentiate through the objective \eqref{eq:soc_terminal_cost}, Adjoint Matching turns the problem into a moving regression formulation.
In particular, it solves for the fixed point $u$ such that $u(x, t) = -\sigma(t) \nabla J(u; x, t)$ where $J$ is the value function. We discuss Adjoint Matching in detail in \Cref{sec:additional_prelims}. For our objective, the Adjoint Matching loss is 
\begin{align}\label{eq:lean_adjoint_matching_prelim}
\mathcal{L}_{\mathrm{AM}}(u) 
:= \mathbb{E}_{p^{\bar u}}\Bigg[\frac{1}{2} \int_0^{1} \big\| & u(X_t 
,t)
+ \sigma(t)^{\top} \tilde{a}(t;\bm{X}) \big\|^2 \, \mathrm{d}t \Bigg], 
\qquad \fX \sim p^{\bar{u}}, \quad \bar{u} = \texttt{stopgrad}(u), \\
\label{eq:lean_adjoint_prelim_1}
\text{s.t. }\quad \frac{\mathrm{d}}{\mathrm{d}t} \tilde{a}(t;\bm{X}) 
&= - \tilde{a}(t;\bm{X})^{\top} \nabla b (X_t,t),\quad
\tilde{a}(1;\bm{X}) = \nabla g(X_1).
\end{align}
where $\tilde{a}$ is referred to as the \emph{lean adjoint} state, and $\bar{u} = \texttt{stopgrad}(u)$ denotes a stop gradient operation of $u$,
\textit{i.e.}, although $\fX \sim p^{\bar{u}}$ is sampled according to the controlled process. 

Our first key observation is that the original Adjoint Matching algorithm can be significantly simplified in the case of our sampling formulation \eqref{eq:soc}. In particular if we set set $b\equiv 0$, the lean adjoint state equation $\tfrac{\dif}{\dif{t}}\tilde a(t;\fX)=0, \quad \tilde a(1,\fX) = \nabla g(X_1)$ admits the unique analytical solution $\tilde a(t;\fX)= \nabla g(X_1)$ for all $t \in [0,1]$. This leads to a drastically simplified regression loss requiring no additional simulation of the lean adjoint state.
\begin{align}\label{eq:adjoint_matching_terminal_cost}
    \mathcal{L}_{\text{AM}}(u) = \mathbb{E}_{\fX \sim p^{\bar{u}}} \left[  \int_0^1 \frac{1}{2} \lVert u(X_t, t) + \sigma(t) \nabla g(X_1) \rVert^2 \dif{t} \right] 
\end{align}
Roughly speaking, Adjoint Matching offers a simple interpretation: for each intermediate state $X_t$, simply regress the control onto the negative gradient of the terminal cost $- \nabla g(X_1)$ for all possible $X_1$ that can be arrived at from $X_t$. Since this is a moving target, this will, over the course of optimization, slowly shift the process towards regions with small terminal cost $g$.

\subsection{Reciprocal Adjoint Matching} \label{sec:reciprocal_adjoint_matching}

Adjoint Matching still requires two computationally expensive operations at every iteration: (i) simulation of the controlled process and (ii) evaluation of the terminal cost.
Simulation of an SDE is a slow iterative procedure, and the terminal cost can also be costly. 
For instance, in our formulation, the terminal cost involves the energy model $E$ which in many real-world scenarios is extremely computationally expensive. 
Ideally, neither of these two operations should be done frequently.
In this section, we propose modifications to the Adjoint Matching algorithm to make it highly scalable, removing both requirements.

Firstly, notice that the training loss only depends on sampling $(X_t, X_1) \sim p^u_{t,1}$, not the full trajectory. Hence \eqref{eq:adjoint_matching_terminal_cost} is equivalent to sampling pairs $(X_t, X_1)$ from the joint distribution defined by the stochastic process $p^u$:
\begin{align}\label{eq:am}
\begin{split}
    \mathcal{L}_{\text{AM}}(u) = &\int_0^1 \mathbb{E}_{(X_t, X_1) \sim p^{\bar{u}}_{t,1}} \left[\frac{1}{2}\lVert u(X_t, t) + \sigma(t) \nabla g(X_1) \rVert^2 \right] \dif{t}.
\end{split}
\end{align}
This does not yet yield any efficiency gains as the only way to sample $(X_t, X_1) \sim p^u$ is through simulating the controlled process. 
Our key insight is to make use of the knowledge that at the optimal solution $u^*$ we have that 
$p^{u^*}_{t,1}(X_t, X_1) = \pbase_{t|1}(X_t | X_1)\mu(X_1)$ due to \eqref{eq:brownian_bridge}. 
Although the current control $u$ does not necessarily satisfy this property, we can project the path measure---known as a Reciprocal projection which we make formal in \Cref{sec:theory}---onto the Schr\"odinger bridge that generates $p^{\bar u}_{1}$ given by $\pbase_{t|1}(X_t| X_1)p^{\bar u}(X_1)$.
Based on this, we propose the \textit{Reciprocal Adjoint Matching} (RAM) objective:
\graybox{
\begin{align}\label{eq:ram}
\begin{split}
    \mathcal{L}_{\text{RAM}}(u) =
    \int_0^1 \lambda(t) \mathbb{E}_{X_t \sim \pbase_{t|1}, X_1 \sim p^{\bar u}_1} \left[\frac{1}{2}\lVert u(X_t, t) 
    + \sigma(t) \nabla g(X_1) \rVert^2 \right] \dif{t}.
\end{split}
\end{align}}
That is, we sample $X_1$ according to the controlled process, then sample $X_t$ conditioned on $X_1$ using the posterior distribution defined by the base process.
Since the base process is an SDE with zero drift (equivalent to \eqref{eq:controlled_sde} with $u \equiv 0$), conditionals $\pbase_{t|1}(X_t | X_1)$ are known in closed form and can be easily sampled.
This also has the effect of de-correlating samples across time because the $X_t$ samples for different values of $t$ are now conditionally independent given $X_1$, whereas in \eqref{eq:am} the $X_t$ are sampled from the same trajectory.
We additionally apply a time scaling of $\lambda(t) = \frac{1}{\sigma(t)^2}$ which does not affect the optimal solution but improves numerical stability.

We note that \eqref{eq:ram} is related to the training objectives that appear in PDDS \citep{phillips2024particle} and TSM \citep{de2024target}, where the same formula appears inside the expectation; however, the expectations differ and makes a \emph{significant} difference. 
In particular, PDDS and TSM take an expectation with respect to the optimal control, whereas we take an expectation with respect to the current control. 
That is, they require sampling from the target distribution, which is intractable. Unlike PDDS and TSM, which is a simple regression formulation with a fixed regression target, our formulation has a \emph{moving regression target} and is designed to solve for a fixed point (see \Cref{sec:additional_prelims}). 

We can further increase computational efficiency by noting that we can fix the regression target and delay updating it (which we justify theoretically in \Cref{sec:theory}.
To design a \emph{highly efficient} algorithm, we further decouple $p(X_1)$ from the regression problem of learning $u$, delaying updates to $p(X_1)$ and performing multiple iterations to train $u$.
Practically, this leads to the following alternating algorithm which we refer to as \emph{Adjoint Sampling}:\looseness=-1
\begin{enumerate}
    \item \label{enum:alg1} Using the current control $u_i$, construct a buffer $\mathcal{B} = (X_1^{(i)}, \nabla g^{(i)})$ with samples $\{X_1^{(i)}\} \stackrel{iid}{\sim} p^u_1(X_1)$ and $\nabla g^{(i)} = \nabla g(X_1^{(i)})$.
    \item \label{enum:alg2} Obtain updated control $u_{i+1}$ by optimizing \eqref{eq:ram} using samples $\{X_1^{(i)}, \nabla g^{(i)}\} \sim \mathcal{B}$.
\end{enumerate}
We theoretically justify this alternating scheme in the next section as implicitly performing a projection onto a more optimal control that aids in decreasing the SOC objective.
Note that optimizing~\eqref{eq:ram} in Step \ref{enum:alg2} above is extremely cheap due to the analytical form of the forward base process.
Both $\pbase_1$ and the posterior distribution $\pbase_{t|1}$ can be designed as closed-form Gaussian distributions which we detail in \Cref{sec:base_processes}.
This means that \textit{many gradient updates can be carried out without needing to simulate the controlled process or evaluation of the energy model}. We provide detailed pseudo-code in \Cref{alg:adjoint_sampling}.

\begin{algorithm}
\caption{Adjoint Sampling}
\label{alg:adjoint_sampling}
\begin{algorithmic}[1]
        \STATE \textbf{Input:} Terminal Cost: $ g=\log \pbase + \frac{1}{\tau}E$, base process: $p_t^{base}$ given by $d X_t = \sigma(t) d B_t$, outer-loop batch size: $n$, inner-loop batch size $m$, SDE drift network: $u_\theta$, Replay buffer $\mathcal{B}$.
       
        \STATE $\mathcal{B} \leftarrow \varnothing$\
        \WHILE{Outer-Loop}
        \STATE {\color{gray}\# Euler-Maruyama with \textbf{no gradient}}
            \STATE $\{X_1^{(i)}\}_{i=1}^n \sim p^{\bar{u}}_{1}$, \qquad  $\bar{u} = \texttt{stopgrad}(u_\theta)$
            \STATE {\color{gray}\# gradient of energy is evaluated \textbf{once} per sample}
            \STATE $\nabla g^{(i)} \leftarrow \nabla g(X_1^{(i)})$ 
            \STATE $\mathcal{B} \leftarrow \mathcal{B}\cup\{(X_1^{(i)}, \nabla g^{(i)})\}_{i=1}^n$
            \WHILE{Inner-Loop}
            \STATE $\{(X_1^{(j)}, \nabla g^{(j)})\}_{j=1}^{m}  \sim \mathcal{U}(\mathcal{B})$
            \STATE $t^{(j)} \sim \mathcal{U}([0,1]),\quad X_t^{(j)} \sim p^{base}_{t^{(j)}|1 }(x | X_1^{(j)})$
            \STATE $\mathcal{L}_{\text{RAM}}^{(j)} \leftarrow \frac{\lambda(t)}{2}\Big\Vert u_\theta(X_t^{(j)}, t^{(j)}) +\sigma(t)\nabla g^{(j)}  \Big\Vert^2$ 
            \STATE $\theta \leftarrow \texttt{optimizer\_step}(\theta,\nabla_\theta \frac{1}{m}\sum_j\mathcal{L}_{\text{RAM}}^{(j)})$
            \ENDWHILE
        \ENDWHILE
    \STATE \textbf{Output:} SDE sampler drift $u_\theta$.
    \end{algorithmic}
\end{algorithm}

\subsection{Adjoint Sampling Theory}\label{sec:theory}

In this section, we provide theoretical justification to our proposed \Cref{alg:adjoint_sampling}. 
Firstly, let us define a projection $\P$ of a control $u$ onto the SOC solution where $p^u_1$ is the target distribution:
\begin{equation}\label{eq:mark_projection}
    \P(u) = \argmin_v \kldiv{p^{v}(\fX)}{ \pbase(\fX | X_1)p_1^u(X_1)}.
\end{equation}
This is a projection step onto the Schr\"odinger bridge that shares the same terminal distribution $p^u_{1}(X_1)$. In connection to existing literature, this can be understood as a combination of the Reciprocal and Markovian projections of \citet{shi2024diffusion}.
In terms of minimizing the objective, the resulting control process is consistently as effective as, if not superior to, the original control. We formalize this in the following proposition:
\begin{prop}
    \label{prop:projection}
    After projection \eqref{eq:mark_projection}, Reciprocal Adjoint Matching is equivalent to Adjoint Matching,
    \begin{equation}
        \mathcal{L}_\text{RAM}(\P(u)) = \mathcal{L}_\text{AM}(\P(u)),
    \end{equation}
    and furthermore, this projection improves upon on the SOC objective,
    \begin{equation}
        J(u) \ge J(\P(u)), \qquad \text{where } J(u) := \mathbb{E}_{\fX \sim p^u}\left[ \int_0^1 \tfrac{1}{2} \lVert u(X_t, t) \rVert^2 \dif{t} + g(X_1)\right].
    \end{equation}
\end{prop}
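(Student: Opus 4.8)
The plan is to establish the two claims separately, leaning on the Schrödinger-bridge structure baked into the projection $\P$. The key fact is that $v := \P(u)$ is, by definition~\eqref{eq:mark_projection}, the minimizer of $\kldiv{p^{v}(\fX)}{\pbase(\fX\mid X_1)p_1^u(X_1)}$, and by the same argument that gave \eqref{eq:KL_minimized} — this is a minimum-energy SOC problem whose value is $-\log\E_{\pbase}[p_1^u(X_1)/\pbase_1(X_1)]$ if that quantity vanishes, which it does since $p_1^u$ is a genuine probability density — the minimizing control $v$ actually achieves $p^{v}(\fX) = \pbase(\fX\mid X_1)p_1^u(X_1)$ exactly. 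In particular $p^{v}_1 = p^u_1$ and, crucially, the joint satisfies the reciprocal (Brownian-bridge) factorization $p^{v}_{t,1}(X_t,X_1) = \pbase_{t\mid 1}(X_t\mid X_1)\,p^{v}_1(X_1)$.

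\textbf{Equivalence of the two losses.} Compare \eqref{eq:am}, written for the control $v=\P(u)$, with \eqref{eq:ram}. The Adjoint Matching loss $\mathcal{L}_{\text{AM}}(v)$ integrates $\tfrac12\lVert v(X_t,t)+\sigma(t)\nabla g(X_1)\rVert^2$ against $p^{v}_{t,1}$, whereas $\mathcal{L}_{\text{RAM}}(v)$ integrates the same integrand (up to the reweighting $\lambda(t)$, which I will absorb by noting the original $\mathcal{L}_{\text{AM}}$ can be stated with the same $\lambda$, or simply restrict to $\lambda\equiv 1$) against $\pbase_{t\mid 1}(\cdot\mid X_1)\,p^{v}_1(X_1)$. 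By the reciprocal factorization established above these two joint distributions coincide, so the two integrals are term-by-term equal and $\mathcal{L}_{\text{RAM}}(\P(u)) = \mathcal{L}_{\text{AM}}(\P(u))$. (If $\lambda$ is kept, the statement should be read as the two losses using the same $\lambda$; the point is that the measures they average against agree.)

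\textbf{The projection decreases $J$.} Here I use the decomposition from the Preliminaries: for any control $w$, $\kldiv{p^{w}(\fX)}{p^*(\fX)} = J(w) + \log Z$, since \eqref{eq:soc_cost} is exactly $\mathcal{L}_{\text{SOC}}(w)$ and differs from $J(w)$ only by the additive constant $\log Z$ coming from $\mu$ versus the unnormalized target (recall $g = \log\pbase_1 + \tfrac1\tau E$ while the terminal cost in \eqref{eq:soc_cost} is $\log(\pbase_1/\mu)$). Thus minimizing $J$ is the same as minimizing $\kldiv{p^{w}(\fX)}{p^*(\fX)}$. Now split this KL along the terminal time: $\kldiv{p^{w}(\fX)}{p^*(\fX)} = \kldiv{p^{w}_1}{\mu} + \E_{X_1\sim p^{w}_1}\kldiv{p^{w}(\fX\mid X_1)}{\pbase(\fX\mid X_1)}$, using $p^*(\fX)=\pbase(\fX\mid X_1)\mu(X_1)$ and the chain rule for KL divergence of path measures. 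For $w=u$ the second term is nonnegative, and for $w=\P(u)$ it is exactly zero because $\P(u)$ realizes the bridge $p^{\P(u)}(\fX\mid X_1)=\pbase(\fX\mid X_1)$; meanwhile the first term is unchanged since $p^{\P(u)}_1 = p^u_1$. Hence $\kldiv{p^{\P(u)}(\fX)}{p^*(\fX)} \le \kldiv{p^{u}(\fX)}{p^*(\fX)}$, and subtracting $\log Z$ gives $J(\P(u))\le J(u)$.

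\textbf{Main obstacle.} The delicate point is justifying that $\P(u)$ attains the bridge \emph{exactly} rather than merely approximately — i.e. that the infimum in \eqref{eq:mark_projection} is a minimum with value zero. This requires the same regularity hypotheses used for \eqref{eq:KL_minimized} (finiteness of the relevant KL, existence of a well-defined controlled path measure, and that $p_1^u$ has full support / finite relative entropy against $\pbase_1$ so that Girsanov applies). Under those standing assumptions the argument above is essentially bookkeeping with the KL chain rule; without them one would only get the inequality $J(\P(u))\le J(u)$ in a limiting sense and the exact loss equivalence would become an approximate identity. I would state these assumptions explicitly at the top of the proof and otherwise treat the chain-rule manipulations as standard (Girsanov / disintegration of path measures, as in the footnote's Radon–Nikodym formula).
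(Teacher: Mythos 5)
Your proof is correct, and the first half (equivalence of the two losses after projection) is essentially identical to the paper's: both reduce to the observation that $\P(u)$ already lies in the reciprocal class, so the joint law of $(X_t,X_1)$ under $p^{\P(u)}$ coincides with $\pbase_{t|1}\,p^{\P(u)}_1$ and the two losses average the same integrand against the same measure. (Your aside about $\lambda(t)$ is a legitimate point that the paper silently glosses over; the identity should indeed be read with matching weightings.)

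For the inequality $J(u)\ge J(\P(u))$ you take a mildly different route. The paper splits the terminal cost as $\log\frac{\pbase_1}{\mu}=\log\frac{\pbase_1}{p^u_1}+\log\frac{p^u_1}{\mu}$, recognizes the first piece plus the control energy as the Schr\"odinger-bridge objective defining $\P(u)$, and invokes the argmin property with $u$ as a feasible competitor, together with $p^{\P(u)}_1=p^u_1$. You instead write $J(w)+\log Z=\kldiv{p^w}{p^*}$ and disintegrate this KL along the terminal time into $\kldiv{p^w_1}{\mu}$ plus the expected conditional KL of $p^w(\fX\mid X_1)$ against the Brownian bridge; the projection zeroes the conditional term while preserving the marginal term. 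The two arguments are equivalent in content --- the paper's variational inequality, combined with the fact that the bridge objective attains zero at $\P(u)$, is exactly your statement that the conditional path-KL is nonnegative for $u$ and zero for $\P(u)$ --- but yours makes the Pythagorean structure explicit and arguably needs slightly less: only that the projection attains the bridge exactly, not the full argmin characterization. Both proofs rest on the same nontrivial fact, which you correctly isolate as the main obstacle: that the infimum in \eqref{eq:mark_projection} is attained with value zero. That is precisely \Cref{prop:min_DKL} of the paper applied with $p^u_1$ in place of $\mu$, under the same regularity (finite relative entropy, Girsanov applicability), so your proof is complete modulo the standing assumptions you state, which are the same ones the paper implicitly assumes.
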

We defer the proof to \Cref{app:Reciprocal}. The result of \Cref{prop:projection} hints that if we explicitly perform this projection \eqref{eq:mark_projection} at every iteration, then we are effectively performing Adjoint Matching iterations with a more optimal control. However, performing the projection requires additional computation steps, \eg, using algorithm such as Bridge Matching \citep{shi2024diffusion}, which we want to avoid. Fortunately, it turns out that using just the RAM loss is sufficient and will \emph{implicitly} include this exact projection.

Our main theoretical result is the observation that the alternating scheme, where the distribution of $X_1$ samples are fixed (step \ref{enum:alg1}) and we update the control by fully converging the RAM loss (step \ref{enum:alg2}), is \textit{implicitly} performing the projection \eqref{eq:mark_projection} while performing Adjoint Matching on the SOC objective \eqref{eq:soc_cost}. This correspondence is stated informally in the following Theorem.

\begin{theorem}[Theoretical guarantees of Adjoint Sampling (\textit{informal})] \label{thm:adjoint_sampling}
    Starting with any control $u_i$, performing steps \ref{enum:alg1} and \ref{enum:alg2} to obtain $u_{i+1}$ equivalently satisfies
    \begin{equation}
        u_{i+1} = \P(u_i) - \frac{\delta \mathcal{L}_{\text{AM}}}{\delta u} (\P(u_i)),
    \end{equation}
    where $\frac{\delta \mathcal{L}_{\text{AM}}}{\delta u}$ denotes the functional derivative with respect to the control $u$. Moreover, the fixed point where $u = \mathcal{P}(u)$ and $u=u-\frac{\delta \mathcal{L}_{\text{AM}}}{\delta u}(u)$
    is the optimal control $u^*$ to \eqref{eq:soc}.
\end{theorem}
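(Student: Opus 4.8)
The plan is to establish the two parts of Theorem~\ref{thm:adjoint_sampling} separately: first the exact characterization of $u_{i+1}$ as a projection followed by a single functional gradient step, and then the identification of the fixed point with the optimal control $u^*$.

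For the first part, I would start by unpacking what steps~\ref{enum:alg1} and~\ref{enum:alg2} actually compute. Step~\ref{enum:alg1} freezes the terminal law to $p^{u_i}_1$. Step~\ref{enum:alg2} fully minimizes $\mathcal{L}_{\text{RAM}}(\cdot)$ with $X_1 \sim p^{u_i}_1$ held fixed. The key observation is that, because the inner objective $\mathcal{L}_{\text{RAM}}$ with fixed terminal law is a pointwise (in $(X_t,t)$) least-squares regression of $u(X_t,t)$ onto $-\sigma(t)\nabla g(X_1)$, its exact minimizer over the full function class is the conditional expectation $u_{i+1}(x,t) = -\sigma(t)\,\mathbb{E}[\nabla g(X_1) \mid X_t = x]$, where the conditional is taken under the reference bridge coupling $\pbase_{t|1}(X_t\mid X_1)\,p^{u_i}_1(X_1)$. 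I would then argue that this same conditional expectation arises when one applies one step of the Adjoint Matching functional-gradient update $v \mapsto v - \frac{\delta \mathcal{L}_{\text{AM}}}{\delta u}(v)$ at the control $v = \P(u_i)$: by construction $\P(u_i)$ has marginals given exactly by the bridge coupling with terminal law $p^{u_i}_1$ (this is the content of the Reciprocal/Markovian projection definition~\eqref{eq:mark_projection}), and Proposition~\ref{prop:projection} tells us $\mathcal{L}_{\text{RAM}}(\P(u_i)) = \mathcal{L}_{\text{AM}}(\P(u_i))$, so the two loss landscapes and their gradients agree at that point. Since minimizing a quadratic is equivalent to a single (appropriately scaled) Newton/gradient step from any base point whose induced sampling distribution matches, I would make precise that "fully converging the RAM loss" from the projected iterate coincides with $\P(u_i) - \frac{\delta \mathcal{L}_{\text{AM}}}{\delta u}(\P(u_i))$. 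The precise bookkeeping of the time-scaling $\lambda(t) = 1/\sigma(t)^2$ and the $\sigma(t)$ prefactors needs care here so that the functional derivative is the clean object claimed.

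For the second part, I would analyze the fixed point. Suppose $u = \P(u)$, so $u$ already generates a Schr\"odinger bridge with some terminal law $p^u_1$, and suppose additionally $u = u - \frac{\delta \mathcal{L}_{\text{AM}}}{\delta u}(u)$, i.e. $\frac{\delta \mathcal{L}_{\text{AM}}}{\delta u}(u) = 0$, meaning $u$ is a stationary point of Adjoint Matching. From the Adjoint Matching theory (Section~\ref{subsec:AM} and the cited guarantees of \citet{domingoenrich2024adjoint}), a stationary point of $\mathcal{L}_{\text{AM}}$ that is also its own Reciprocal projection must solve the SOC problem~\eqref{eq:soc}; combined with the uniqueness of the SOC optimizer stated around~\eqref{eq:KL_minimized}, this forces $u = u^*$. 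Conversely $u^*$ is trivially a fixed point: it satisfies $p^{u^*}_{t,1} = \pbase_{t|1}\mu$ by~\eqref{eq:brownian_bridge}, so $\P(u^*) = u^*$, and it minimizes $\mathcal{L}_{\text{AM}}$ so the gradient vanishes. I would phrase this as an "if and only if" to make the fixed-point statement sharp.

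The main obstacle I anticipate is making the first part rigorous at the level of path measures rather than formal functional derivatives: one must verify that "converging the RAM loss with frozen $X_1$-law" lands \emph{exactly} on $\P(u_i) - \frac{\delta\mathcal{L}_{\text{AM}}}{\delta u}(\P(u_i))$ and not merely on something in the same equivalence class, which requires identifying the functional derivative of $\mathcal{L}_{\text{AM}}$ explicitly (it is a difference between the current control and a conditional expectation under $p^{\bar u}$) and checking that evaluating it at $\P(u_i)$ replaces the $p^{\bar u}_{t,1}$ coupling with the reference bridge coupling $\pbase_{t|1}p^{u_i}_1$ — precisely the coupling RAM samples from. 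Handling the stop-gradient/on-policy subtlety (that $\fX \sim p^{\bar u}$ but we differentiate only through $u(X_t,t)$) cleanly, and confirming the quadratic structure makes the "one gradient step equals full minimization" identification valid with the right step size, is where the care is needed; everything else is bookkeeping around Proposition~\ref{prop:projection} and the known Adjoint Matching and SOC results.
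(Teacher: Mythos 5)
Your proposal is correct and follows essentially the same route as the paper's proof: both identify the exact minimizer of the frozen-terminal-law RAM objective as the conditional expectation $-\sigma(t)\,\mathbb{E}[\nabla g(X_1)\mid X_t=x]$ under the bridge coupling $\pbase_{t|1}p^{u_i}_1$, observe that this coupling is precisely the one generated by $\P(u_i)$ so that the Adjoint Matching functional derivative at $\P(u_i)$ is $\P(u_i)+\sigma(t)\,\mathbb{E}_{p^{\P(u_i)}}[\nabla g(X_1)\mid X_t=x]$, and conclude by the same algebraic rearrangement; the fixed-point argument likewise matches the paper's appeal to the critical-point characterization of Adjoint Matching from \citet{domingoenrich2024adjoint} together with $\P(u^*)=u^*$.
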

A more precise statement and proof can be found in \Cref{app:Reciprocal}. 
This result provides the theoretical justification for our proposed algorithm (\Cref{alg:adjoint_sampling}). 
In practice, we differ slightly and use a replay buffer that contains samples from multiple prior steps, and we do not perform step \ref{enum:alg2} until convergence. We find this helps smoothen the optimization and improve computational efficiency.

\subsection{Geometric Extensions}

Symmetries are essential to efficiently sample from energy-based models defining physical systems. Molecules are symmetric with respect to atom permutations, rotations, translations, and parity (reflection). We reduce dimensionality and improve data efficiency by enforcing symmetries in our model of the controlled process. Our model is either parameterized by E(n) Equivariant Graph Neural Networks (EGNNs)~\citep{satorras2021n} or Tensor Field Networks~\citep{thomas2018tensorfieldnetworks}, implemented in \eTnn~\citep{e3nn_software, jing2022torsional}, which incorporate graph features while respecting symmetry constraints 
\citep{kondor2018clebschgordannets, weiler20183dsteerablecnns, miller2020relevance, geiger2022e3nn}. We experiment with two controlled processes: one sampling atomic positions using EGNN, and the other sampling \emph{torsion} angles with \eTnn. See \Cref{app:conformers_and_torsions} for details.
\subsubsection{Graph-Conditioned $SE(3)$-Invariant Sampling}
Consider the graph $\mathcal{G}(\mathcal{V},\mathcal{E})$. Each node $v_i \in \mathcal{V}$ and edge $e_{ij} \in \mathcal{E}$ has feature attributes relating the topological structure of the data (\eg, atom types and bond orderings), including a spatial coordinate $x_i \in \mathbb{R}^d$. We can condition an equivariant drift $u_\theta$ on the graph features such that it is equivariant to the spatial symmetries given by the group $g \in G:=\text{Aut}(\mathcal{G}) \times SO(d)$ acting via
$g \cdot x = (P \otimes R) x$. (\ie $u_\theta(g \cdot X_t, t; \mathcal{G}) = g \cdot u_\theta(X_t, t; \mathcal{G})$). Here $P \in \text{Aut}(\mathcal{G})$ are graph automorphisms --- represented as a permutation matrix that reorders the graph nodes while preserving its structure --- and $R \in SO(d)$ is an $d$-dimensional rotation matrix. 
\paragraph{Translation Invariance}
To enforce translation invariance, the system is restricted to the zero center-of-mass subspace $\mathcal{X}^\text{CoM} = \{ x \mid \sum_{i=1}^k x^i = 0 \} \subset \mathbb{R}^{kd}$. This is achieved by projecting the particle positions $y \in \mathbb{R}^{kd}$ to the zero CoM subspace using the projection operator:
\begin{align}
x = \mathcal{A} y, \quad \mathcal{A} = \left( I_k - \frac{1}{k} \mathbf{1}_k \mathbf{1}_k^\top \right)\otimes I_d,
\end{align}
where $I_k$ is the identity matrix, and $\mathbf{1}_k$ is a vector of ones. With this we can define a zero CoM process.
\begin{align*}
        \dif{X_t} = \sigma(t) \mathcal{A} u_\theta(X_t, t;\mathcal{G})\dif{t} + \sigma(t) \mathcal{A}\dif{B_t}, \quad X_0 = 0
\end{align*}
Now $\pbase$ is a singular Gaussian $\pbase_t(x) = \mathcal{N}(x ; 0, \nu_t \mathcal{A}\mathcal{A}^\top)$, which can be sampled by first sampling the isotropic Gaussian $\mathcal{N}(x ; 0, \nu_t)$ and then projecting via $\mathcal{A}$. 
We have $\pbase_t(x) \propto \mathcal{N}(x ; 0, \nu_t)$ for all $x\in \mathcal{X}^\text{CoM}$ and the posterior $\pbase_{t|1}$ also projects onto $\mathcal{X}^\text{CoM}$.

\vspace{-0.75em}
\paragraph{Geometric Adjoint Sampling}
By ensuring the drift is $G$-equivariant and zero CoM, we ensure the model distributions are $G$-invariant for all $t \in [0,1]$~\citep{kohler2020equivariant, xu2022geodiff}.
Putting this all together, our RAM loss~\eqref{eq:ram} can be modified to support the proposed symmetries.
\vspace{-0.5em}
\begin{align}
\mathcal{L}_{\text{GeoRAM}}(\theta) =
    \int_0^1 \lambda(t) \mathbb{E}_{\pbase_{t|1}p^{\bar u}_1} \left[\frac{1}{2}\lVert \mathcal{A}(u_\theta(X_t, t;\mathcal{G}) 
    + \sigma(t) \nabla g(X_1)) \rVert^2 \right] \dif{t}
\end{align}
Assuming that energy $E$ and $\pbase_1$ are $G$-invariant, the target $-\sigma(t)\mathcal{A} \nabla g(X_1)$ for our learned drift will be $G$-equivariant and zero CoM.

\subsubsection{Periodic boundary conditions}\label{subsec:flat_torus}
In many cases we may want to model a state space with periodic boundary conditions. Concretely, we consider a state space that is the flat tori in $n$ dimensions, denoted $\mathbb{T}^n = \mathbb{R}^n / \mathbb{Z}^n$. It is a quotient space resulting from identifying any point $x = (x^1, \dots, x^i, \dots, x^n)$ with $(x^1, \dots, x^i + 1, \dots, x^n)$ for all $i \in [n]$. The derivations for the SOC objective in $\mathbb{R}^n$ \eqref{eq:soc_cost} can directly be extended to this quotient space and also general Riemannian manifolds~\citep{de2022riemannian,thornton2022riemannian,huang2022riemannian}. Let us denote by $\pbasepbc$ the distribution of the SDE modeling the base process, \ie, \eqref{eq:controlled_sde} with $u_t$ = 0, that lives on $\mathbb{T}^n$. The RAM objective \eqref{eq:ram} requires computing $\nabla \log \pbasepbc_1(\cdot)$ as part of the terminal cost, and sampling $X_t$ from $\pbasepbc_{t|1}(\cdot | X_1)$. Simulating the base process in $\mathbb{T}^n$ produces a factorized wrapped Gaussian distribution,
\begin{align}\label{eq:pbase_pbc}
\pbasepbc_t(x) = \prod_{i=1}^n \sum_{k = -\infty}^\infty \pbase_t(x^i + k),
\end{align}
where $\pbasepbc$ is the distribution of the base process in $\R$. In practice, we can easily compute this up to high numerical precision by truncating the summation in \eqref{eq:pbase_pbc}.
Furthermore, we can sample from the backwards transition distribution independently for each dimension $i \in [n]$:
\begin{align}
    k^i \sim p(k^i), \qquad p(k^i) \propto \pbase_1(X_1^i + k^i), \qquad
    X_t^i = Y_t^i \bmod 1.0, \qquad Y_t^i \sim \pbase_{t|1}(Y_t^i | X_1^i + k^i).
\end{align}
In practice, we consider only values of $k$ within some truncated set $\{-k_{\text{trunc}},\dots, k_{\text{trunc}}\}$ with $k_{\text{trunc}}$ sufficiently large to cover the region where $\pbase_1(X_1^i + k)$ is practically non-zero. Details for the derivation can be found in \Cref{sec:flat_tori_derivations}.

\begin{table*}[t]
\caption{Results for the synthetic energy function experiments. We report a geometric $\mathcal{W}_2$ metric based on on~\citet{klein2024equivariant} and 1D energy histogram $E(\cdot)\,\mathcal{W}_2$ metric (visualized in~\Cref{fig:energy_histograms}) with respect to ground truth MCMC samples. We also report a path-measure ESS when applicable. See~\Cref{app:w2_metric} for more details. $^\dagger$The values reported are per sample (\ie, divided by the batch size) and according to the LJ-55 experiment hyperparameters.
}
\centering
\resizebox{\textwidth}{!}{%
\renewcommand{\arraystretch}{1.2}
\setlength{\tabcolsep}{3pt}
\begin{tabular}{@{} l ccc ccc ccc rr @{}}
\toprule
 & \multicolumn{3}{c}{DW-4 $(d=8)$}      & \multicolumn{3}{c}{LJ-13 $(d=39)$}    & \multicolumn{3}{c}{LJ-55 $(d=165)$} & 
\multirow{2}{*}{\small \shortstack{\# $E(\cdot)$ evals \\ per gradient \\ update$^\dagger$}}
&
\multirow{2}{*}{\small \shortstack{\# $u_\theta (\cdot)$ evals \\ per gradient \\ update$^\dagger$}}  
\\ \cmidrule(lr){2-4} \cmidrule(lr){5-7} \cmidrule(lr){8-10} 
Method & path-ESS $\uparrow$ & $\mathcal{W}_2$ $\downarrow$ & $E(\cdot)$ $\mathcal{W}_2$ $\downarrow$ 
& path-ESS $\uparrow$& $\mathcal{W}_2$ $\downarrow$ & $E(\cdot)$ $\mathcal{W}_2$ $\downarrow$  
& path-ESS $\uparrow$& $\mathcal{W}_2$ $\downarrow$ & $E(\cdot)$ $\mathcal{W}_2$ $\downarrow$ \\ \hline
PIS {\tiny\citep{zhang2022path}} &
0.462{\color{gray}\tiny$\pm$0.081} & 0.68{\color{gray}\tiny$\pm$0.23} & 0.65{\color{gray}\tiny$\pm$0.25} &
0.012{\color{gray}\tiny$\pm$0.011}& 1.93{\color{gray}\tiny$\pm$0.07} & 18.02{\color{gray}\tiny$\pm$1.12} &
0.001{\color{gray}\tiny$\pm$0.000}& 4.79{\color{gray}\tiny$\pm$0.45}  & 228.70{\color{gray}\tiny$\pm$131.27}  &
1 & 1000 \\
DDS {\tiny\citep{vargas2023denoising}} &
0.461{\color{gray}\tiny$\pm$0.076}& 0.92{\color{gray}\tiny$\pm$0.11} & 0.90{\color{gray}\tiny$\pm$0.37} &
0.010{\color{gray}\tiny$\pm$0.011}& 1.99{\color{gray}\tiny$\pm$0.13} & 24.61{\color{gray}\tiny$\pm$8.99} &
0.001{\color{gray}\tiny$\pm$0.000}& 4.60{\color{gray}\tiny$\pm$0.09}  & 173.09{\color{gray}\tiny$\pm$18.01}  &
1 & 1000 \\
LogVariance {\tiny\citep{richter2023improved}} &
0.025{\color{gray}\tiny$\pm$0.042} & 1.04{\color{gray}\tiny$\pm$0.29} & 1.89{\color{gray}\tiny$\pm$0.89} &
{\color{gray}---} & {\color{gray}---} & {\color{gray}---} &
{\color{gray}---} & {\color{gray}---}  & {\color{gray}---}  &
1 & 1000 \\
iDEM {\tiny\citep{akhounditerated}} &
{\color{gray}---}& 0.70{\color{gray}\tiny$\pm$0.06} & \cellhi 0.55{\color{gray}\tiny$\pm$0.14} &
{\color{gray}---} & \cellhi 1.61{\color{gray}\tiny$\pm$0.01} & 30.78{\color{gray}\tiny$\pm$24.46} &
{\color{gray}---}& 4.69{\color{gray}\tiny$\pm$1.52} & 93.53{\color{gray}\tiny$\pm$16.31} &
512 & \cellhi 3 \\
iDEM w/ 1 MC sample  &
{\color{gray}---}& 1.21{\color{gray}\tiny$\pm$0.02} & 2.70{\color{gray}\tiny$\pm$0.18} &
{\color{gray}---}& 2.03{\color{gray}\tiny$\pm$0.02} & 22.41{\color{gray}\tiny$\pm$0.18} &
{\color{gray}---}& 5.79{\color{gray}\tiny$\pm$1.60} & 1e32{\color{gray}\tiny$\pm$1e32} &
1 & \cellhi 3 \\
Adjoint Sampling w/o RP (\textbf{Ablation})&
0.448{\color{gray}\tiny$\pm$0.110}& 0.63{\color{gray}\tiny$\pm$0.11} & 1.03{\color{gray}\tiny$\pm$0.23} &
0.159{\color{gray}\tiny$\pm$0.068}& 1.68{\color{gray}\tiny$\pm$0.01} &  2.91{\color{gray}\tiny$\pm$1.39} &
\cellhi 0.094{\color{gray}\tiny$\pm$0.025}& \cellhi 4.50{\color{gray}\tiny$\pm$0.10} & 94.48{\color{gray}\tiny$\pm$76.12} &
\cellhi 0.002 & \cellhi 3 \\
Adjoint Sampling (\textbf{Ours})&
\cellhi 0.627{\color{gray}\tiny$\pm$0.037}& \cellhi 0.62{\color{gray}\tiny$\pm$0.06} & \cellhi 0.55{\color{gray}\tiny$\pm$0.12} &
\cellhi 0.220{\color{gray}\tiny$\pm$0.041}& 1.67{\color{gray}\tiny$\pm$0.01} & \cellhi 2.40{\color{gray}\tiny$\pm$1.25} &
0.066{\color{gray}\tiny$\pm$0.037}& \cellhi 4.50{\color{gray}\tiny$\pm$0.05} & \cellhi 58.04{\color{gray}\tiny$\pm$20.98} &
\cellhi 0.002 & \cellhi 3 \\ 
\bottomrule
\end{tabular}
}
\label{tab:synthetic}
\end{table*}

\section{Related Work}
\paragraph{Learning Augmented MCMC}
Markov Chain Monte Carlo (MCMC) and Sequential Monte Carlo (SMC) methods have been the standard for sampling from complex distributions using a well-designed Markov-chain. 
Due to prohibitively long mixing times and poor scaling in high-dimensions, existing work have combined MCMC and SMC techniques with deep learning.
\citet{albergo2019flow}, \citet{arbel2021annealed} and \citet{gabrie2022adaptive} learn better MCMC proposal distributions with variational inference via normalizing flows~\citep{chen2018neural}. \citet{matthews2022continual} proposed learning proposal distributions for improving SMC using stochastic normalizing flow~\citep{wu2020stochastic}. \citet{albergo2024nets} and \citet{holderrieth2025leaps} starts from MCMC procedures and optimize a learnable component by minimizing the Kolmogorov equations.

\paragraph{MCMC-reliant Diffusion Samplers} There are many works that learn diffusion processes but rely on an auxiliary sampling mechanism to obtain the correct signal for training.
More recent advances leverage score-based diffusion models \citep{song2019generative, ho2020denoising} in unnormalized sampling tasks to improve sample efficiency and scalability.
\citet{phillips2024particle} and \citet{de2024target} demonstrate the effectiveness of simple regression objectives, such as score matching, while
\citet{akhounditerated} propose iterated Denoising Energy Matching (iDEM), an offline, yet biased, algorithm for learning the score from a replay buffer, thereby overcoming scalability issues of simulation. \citet{phillips2024particle} and \citet{de2024target} use a similar loss to ours but relies on sampling from the target distribution, using sequential Monte Carlo (SMC) to obtain samples for training. 
In general, these approaches either require access to ground truth samples via an auxiliary sampling algorithm such as MCMC, or rely on importance-weighted estimation or resampling, all of which require extensive evaluation of the energy function and it is not clear whether these improve upon simple data-driven learning algorithms (\Cref{app:bridge_matching}).

\paragraph{SOC-based Diffusion Samplers}
In contrast, stochastic optimal control (SOC) based samplers reframe sampling tasks as an optimization problem.
\citet{zhang2022path} and \citet{vargas2023denoising}
showcase directly optimizing for controlled processes that match the desired target distribution.
This framework has been generalized in many ways ~\citet{berner2023optimal,richter2024improved,vargas2024transport,chen2024sequential}.
However, all of these methods are hindered by their computational requirements, including computationally expensive differentiation through the sampling procedure, computation of higher-order derivatives in constructing the training objectives, or the need for importance sampling (\ie multiple energy evaluations). 
Adjoint Sampling overcomes these challenges by being able to make significantly more gradient updates per generated sample and energy evaluation.

\paragraph{Off-policy Methods}
Off-policy methods \citep{malkin2023trajectory,richter2023improved,akhounditerated,hua2024simulation} are those which do not need to use samples from the current model. As such, off-policy methods do not inherently make use of the gradient of the energy function, as they do not differentiate through the model sample. To alleviate this, it is typical to parameterize the model using the gradient of the energy function, and it has been found that performance strongly relies on this trick~\citep{he2025no}. As our setting concerns computationally expensive energy functions, this parameterization is no longer feasible.
Furthermore, being off-policy does not imply being more efficient. Many off-policy methods still require the full trajectory to evaluate their loss, only being able to use sub-trajectories if the time-marginals are either additionally learned~\citep{bengio2023gflownet,lahlou2023theory,zhang2024diffusion} or prescribed~\citep{albergo2024nets,chen2024sequential,holderrieth2025leaps}. 
In contrast, Adjoint Sampling is an on-policy training method, explicitly uses the gradient of the energy function, but benefits significantly from requiring only $(X_t, X_1)$ pairs, being able to use a replay buffer and Reciprocal projections to train at a significantly reduced computational cost.

\paragraph{Molecule Conformer Generative Models}
To our knowledge, no deep learning-based sampling method has been applied at scale to generate molecular conformers directly from energy without data. 
Boltzmann generators use importance sampling to improve performance using energy information, but require offline molecular dynamics data to learn the initial models \citep{noe2019boltzmann}. Using an annealed importance sampling method, Boltzmann generators have been able to to sample small, single particle systems without data, but their results are not amortized and require many energy evaluations \citep{midgley2024se}.
Several flow-based generative modeling works have learned to generate conformers and Boltzmann distributions by training on ground truth data~\citep{kohler2020equivariant, jing2022torsional, xu2022geodiff, hassanflow, kohler2023rigid, klein2024timewarp, diez2024generation}. 
These works did not learn solely from existing energy models.

\section{Experiments}

\begin{figure}
    \centering
    \includegraphics[width=0.95\linewidth]{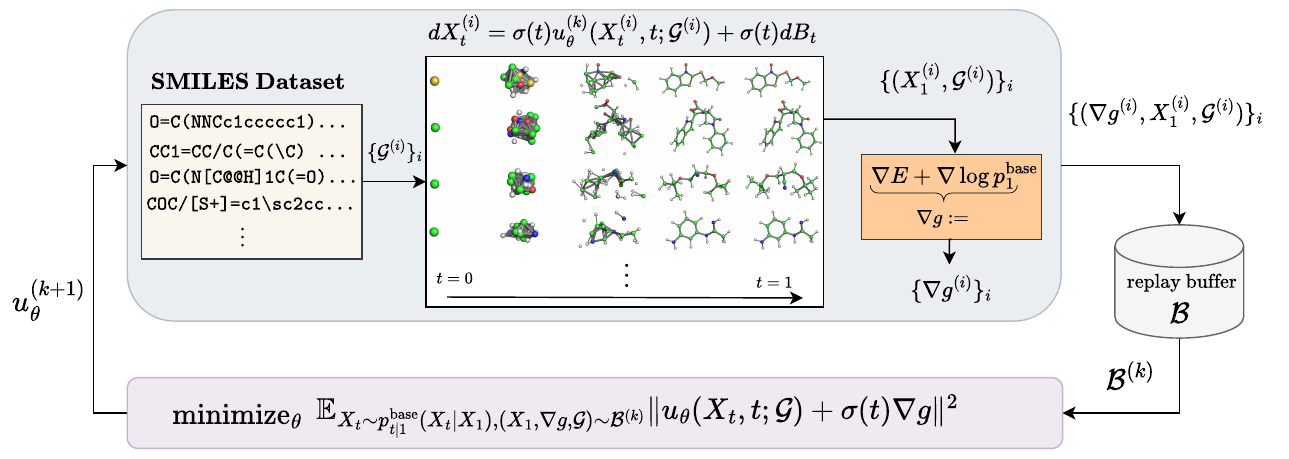}
    \caption{The iterations of Adjoint Sampling for amortized molecular conformer generation. SMILES strings condition the stochastic differential equation (SDE) on a specific molecular graph $\mathcal{G}^{(i)}$, where the final states and energy gradients and conditioning $(\nabla g^{(i)},X^{(i)}_1, \mathcal{G}^{(i)})$ are stored into a replay buffer. The model is trained by minimizing the RAM loss using the sample buffer, progressively transforming the samples into realistic molecular conformations.}
    \label{fig:enter-label}
\end{figure}

We now evaluate Adjoint Sampling on several energy functions over multi-particle systems. We compare to previous works~\citep{zhang2022path, akhounditerated} on synthetic energy benchmarks, whose energy is known analytically and are very cheap to evaluate. 
We then propose a challenging \emph{molecular conformer generation benchmark}, where one trains amortized models to sample conformers for a large dataset of organic molecules. For this task, we used the eSEN energy model~\citep{fu2025learning}.

\paragraph{Reciprocal Projection ablation}
We include an ablation of Adjoint Sampling, where instead of using the Reciprocal projection (RP) to sample $X_t$ given $X_1$, we simply store sample pairs $(X_1, X_t)$ in the buffer and train on the Adjoint Matching objective \eqref{eq:am}. We call this \emph{Adjoint Sampling w/o RP}, which helps demonstrate the effectiveness of the Reciprocal projection. Ablation results for conformer generation are differed to~\Cref{app:adjoint_ablation}~\Cref{tab:adjoint_sampling_ablation}.

\subsection{Synthetic Energy Functions}
Several synthetic energy functions of n-particle bodies proposed and benchmarked by previous works~\citep{kohler2020equivariant, midgley2022flow,klein2024equivariant, akhounditerated}. We consider three energy functions: A 2D 4-particle Double-Well Potential (DW-4), a 3D 13-particle Lennard-Jones potential (LJ-13) and finally a a 55-particle Lennard-Jones energy (LJ-55). See \Cref{app:synthetic_energy} for details.

\paragraph{Baselines}
Along with the Adjoint Sampling ablation, we compare against the most recent state-of-the-art sampler iDEM~\citep{akhounditerated} who claims to be first learned sampler to scale successfully to LJ-55. We also compare against PIS~\citep{zhang2022path} and DDS~\citep{vargas2023denoising}, being the most comparable method to Adjoint Sampling in its SOC / reverse KL formulation. Lastly, for completeness we present an offline variant of the PIS loss referred to as the LogVariance~\citep{richter2023improved}, which is actually gradient-free.
All methods use the EGNN~\citep{satorras2021n} architecture for all experiment settings, where PIS and DDS use fewer number of layers making back-propogation through the SDE tractable.

\begin{table*}[ht]
\caption{Recall and precision metrics for large scale amortized conformer generation. Coverage values are for thresholds of 1.25\AA. Standard deviations are computed across molecules in the test set. Results split by number of rotatable bonds can be found in \Cref{fig:recall_comparison_across_rotb}.}
\centering
\renewcommand{\arraystretch}{1.3}
\resizebox{\textwidth}{!}{%
\begin{tabular}{@{} l l cccc cccc }
    \toprule
    & & \multicolumn{4}{c}{{SPICE}} & \multicolumn{4}{c}{{GEOM-DRUGS}} \\
    \cmidrule(lr){3-6} \cmidrule(lr){7-10}
    & & \multicolumn{2}{c}{{Recall} } & \multicolumn{2}{c}{{Precision}} & \multicolumn{2}{c}{{Recall}} & \multicolumn{2}{c}{{Precision}} \\
    & Method & Cov. $\uparrow$ & AMR $\downarrow$ & Cov. $\uparrow$ & AMR $\downarrow$ & Cov. $\uparrow$ & AMR $\downarrow$ &
    Cov. $\uparrow$ & AMR $\downarrow$ \\
    \midrule
    & RDKit ETKDG 
& 72.74{\color{gray}\tiny$\pm$33.18} & 1.04{\color{gray}\tiny$\pm$0.52} & 69.68{\color{gray}\tiny$\pm$37.11} & 1.14{\color{gray}\tiny$\pm$0.64} & 63.51{\color{gray}\tiny$\pm$34.74} & 1.15{\color{gray}\tiny$\pm$0.61} & \cellhi 69.77{\color{gray}\tiny$\pm$38.23} & \cellhi 1.09{\color{gray}\tiny$\pm$0.66} \\
    & Torsional AdjSampling 
& 85.06{\color{gray}\tiny$\pm$24.61} &  0.86{\color{gray}\tiny$\pm$0.30} & \cellhi 70.42{\color{gray}\tiny$\pm$34.54} & \cellhi 1.06{\color{gray}\tiny$\pm$0.54} &  72.91{\color{gray}\tiny$\pm$31.17} & \cellhi 0.98{\color{gray}\tiny$\pm$0.40} & 67.85{\color{gray}\tiny$\pm$36.01} & \cellhi 1.09{\color{gray}\tiny$\pm$0.55} \\
    & Cartesian AdjSampling 
& 82.22{\color{gray}\tiny$\pm$25.72} & 0.96{\color{gray}\tiny$\pm$0.26} & 49.13{\color{gray}\tiny$\pm$33.01} & 1.26{\color{gray}\tiny$\pm$0.38} & 60.93{\color{gray}\tiny$\pm$35.15} & 1.20{\color{gray}\tiny$\pm$0.43} & 28.44{\color{gray}\tiny$\pm$27.77} & 1.86{\color{gray}\tiny$\pm$0.64} \\
    & Cartesian AdjSampling (+pretrain) 
& \cellhi 89.42{\color{gray}\tiny$\pm$17.48} & \cellhi 0.84{\color{gray}\tiny$\pm$0.24} & 65.93{\color{gray}\tiny$\pm$29.53} & 1.12{\color{gray}\tiny$\pm$0.34} & \cellhi 72.98{\color{gray}\tiny$\pm$30.82} & 1.02{\color{gray}\tiny$\pm$0.34} & 45.14{\color{gray}\tiny$\pm$31.46} & 1.47{\color{gray}\tiny$\pm$0.52} \\
    \midrule
    \parbox[t]{2mm}{\multirow{4}{*}{\rotatebox[origin=c]{90}{\color{cadetblue}w/ relaxation}}} 
        & RDKit ETKDG 
    & 81.61{\color{gray}\tiny$\pm$27.58} & 0.79{\color{gray}\tiny$\pm$0.44} & 74.51{\color{gray}\tiny$\pm$35.07} & 0.97{\color{gray}\tiny$\pm$0.64} & 71.72{\color{gray}\tiny$\pm$29.73} & 0.93{\color{gray}\tiny$\pm$0.53} & \cellhi 75.37{\color{gray}\tiny$\pm$32.76} & \cellhi 0.89{\color{gray}\tiny$\pm$0.60} \\
        & Torsional AdjSampling 
    & 88.25{\color{gray}\tiny$\pm$21.17} & 0.72{\color{gray}\tiny$\pm$0.33} & 74.66{\color{gray}\tiny$\pm$32.96} & 0.94{\color{gray}\tiny$\pm$0.59} & 76.62{\color{gray}\tiny$\pm$27.73} & 0.87{\color{gray}\tiny$\pm$0.40} & 71.85{\color{gray}\tiny$\pm$32.34} & 0.97{\color{gray}\tiny$\pm$0.54} \\
        & Cartesian AdjSampling 
    & 94.10{\color{gray}\tiny$\pm$15.67} & 0.68{\color{gray}\tiny$\pm$0.28} & 62.80{\color{gray}\tiny$\pm$29.63} & 1.02{\color{gray}\tiny$\pm$0.37} & 79.08{\color{gray}\tiny$\pm$29.44} & 0.89{\color{gray}\tiny$\pm$0.45} & 47.88{\color{gray}\tiny$\pm$30.92} & 1.40{\color{gray}\tiny$\pm$0.61} \\
        & Cartesian AdjSampling (+pretrain) 
    & \cellhi 96.65{\color{gray}\tiny$\pm$\phantom{0}7.51} & \cellhi 0.60{\color{gray}\tiny$\pm$0.23} & \cellhi 77.20{\color{gray}\tiny$\pm$25.76} & \cellhi 0.92{\color{gray}\tiny$\pm$0.35} & \cellhi 87.01{\color{gray}\tiny$\pm$22.79} & \cellhi 0.76{\color{gray}\tiny$\pm$0.34} & 61.96{\color{gray}\tiny$\pm$31.55} & 1.10{\color{gray}\tiny$\pm$0.46} \\
    \bottomrule
\end{tabular}
}
\label{tab:conformer_generation}
\end{table*}

\begin{figure*}[h]
\centering
\rotatebox[origin=l]{90}{\,Cartesian}
\includegraphics[width=0.16\linewidth]{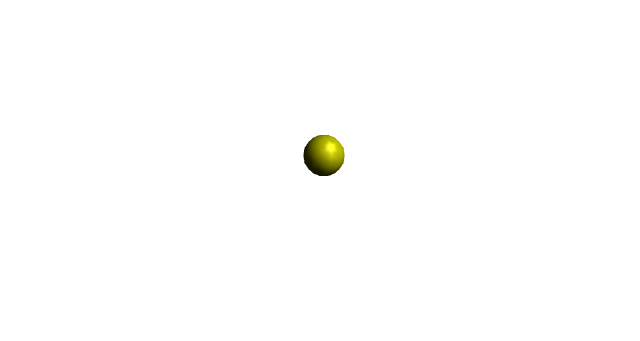}%
\includegraphics[width=0.16\linewidth]{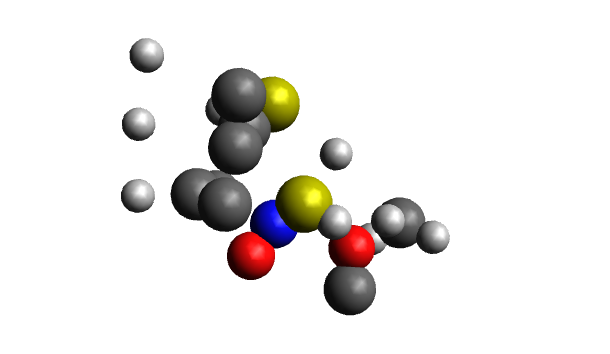}%
\includegraphics[width=0.16\linewidth]{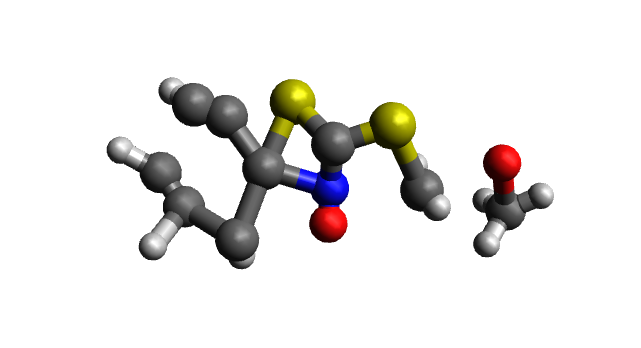}%
\includegraphics[width=0.16\linewidth]{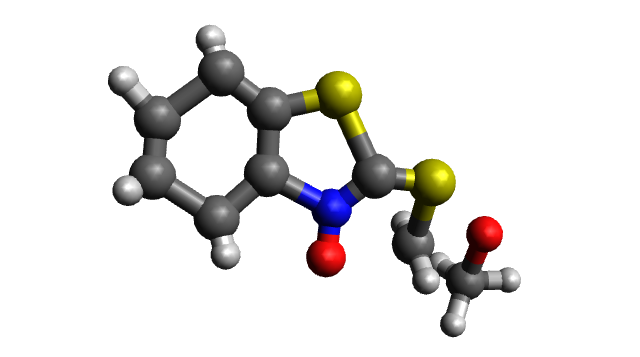}%
\includegraphics[width=0.16\linewidth]{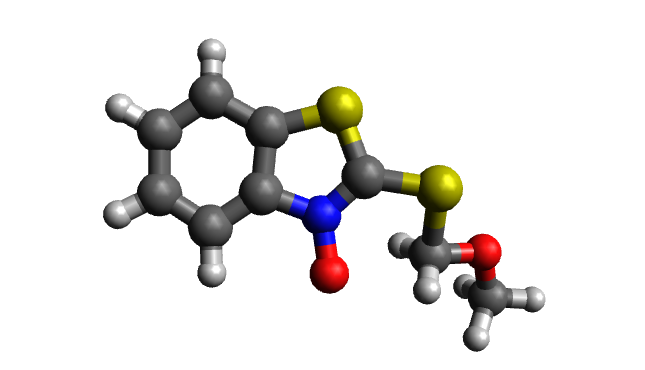}%
\includegraphics[width=0.16\linewidth]{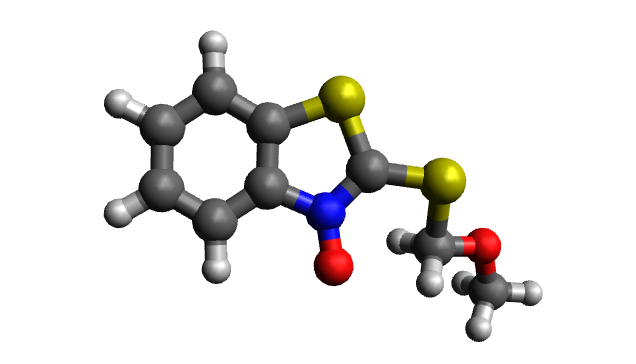}\\
\rotatebox[origin=l]{90}{\;Torsional}
\begin{subfigure}[t]{0.16\linewidth}
\includegraphics[width=\linewidth]{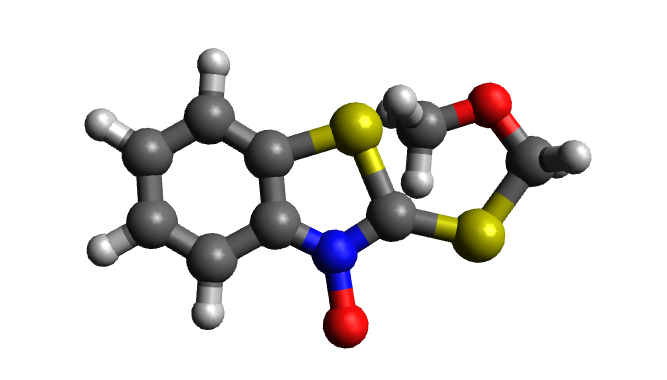}
\caption*{$X_0$}
\end{subfigure}%
\begin{subfigure}[t]{0.64\linewidth}
\includegraphics[width=0.25\linewidth]{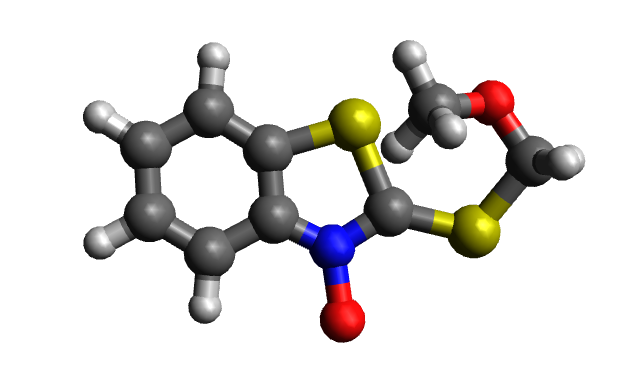}%
\includegraphics[width=0.25\linewidth]{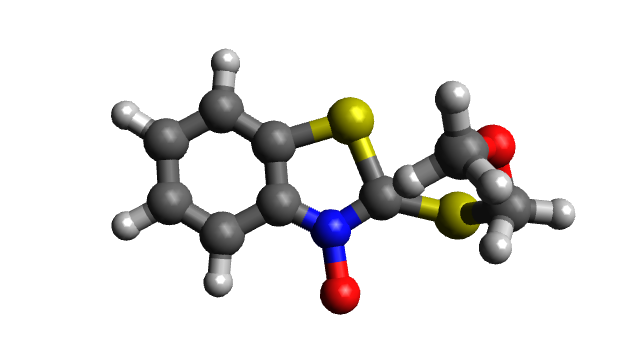}%
\includegraphics[width=0.25\linewidth]{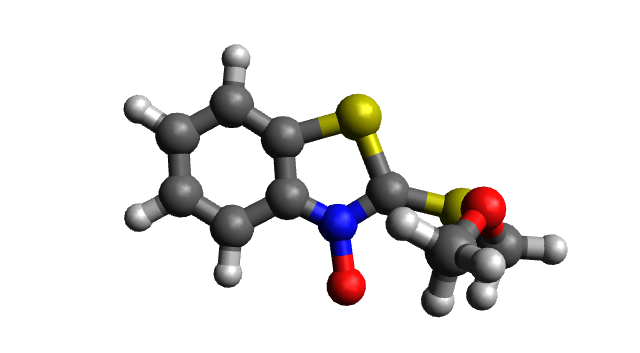}%
\includegraphics[width=0.25\linewidth]{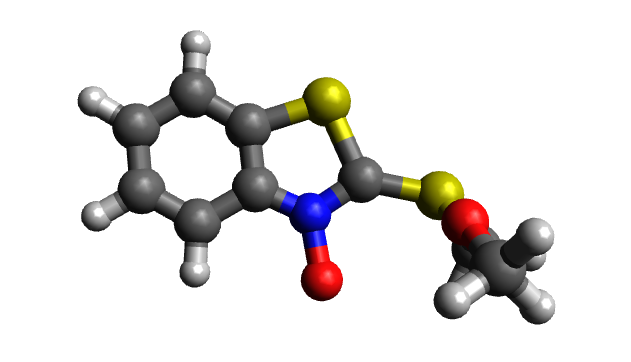}
\vspace{-1em}
\caption*{
\begin{tikzpicture}
  \draw[->] (-5,0) -- node[below] {Generation process} (5,0);
\end{tikzpicture}
}
\end{subfigure}%
\begin{subfigure}[t]{0.16\linewidth}
\includegraphics[width=\linewidth]{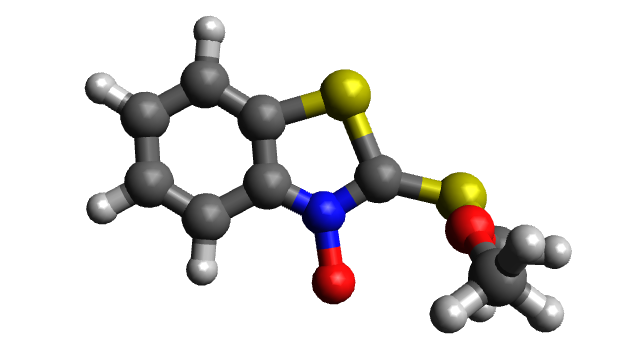}
\caption*{$X_1$}
\end{subfigure}
\caption{The figure depicts two sampled trajectories from trained Adjoint Sampling models that use either the cartesian or the torsional representations. They target conformations of the held-out SMILES string \texttt{COCSc1sc2ccccc2[n+]1[O-]}. The left frame $X_0$ comes from the initial Dirac distribution and the right frame $X_1$ is a sampled conformer.
}\label{fig:generation_process}
\end{figure*}

\begin{figure*}[ht]
\centering
    \begin{subfigure}[b]{0.5\linewidth}
    \centering
    \caption*{SPICE}
    \includegraphics[width=0.9\linewidth]{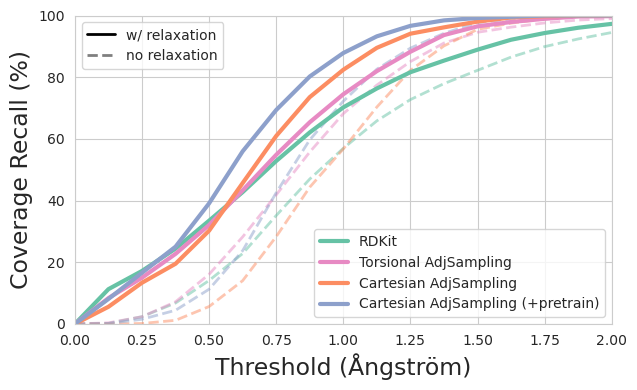}
    \end{subfigure}%
    \begin{subfigure}[b]{0.5\linewidth}
    \centering
    \caption*{GEOM-DRUGS}
    \includegraphics[width=0.9\linewidth]{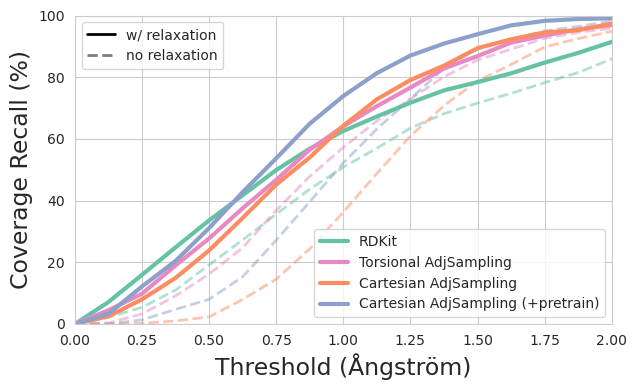}
    \end{subfigure}
    \vspace{-2em}
    \caption{Recall coverage versus RMSD threshold for Adjoint Sampling variants and RDKit. We show performance both with and without relaxation.}
\label{fig:recall_plot}
\end{figure*}
\vspace{-1em}
\paragraph{Evaluation}
In Table~\ref{tab:synthetic} we report our metrics including $\mathcal{W}_2$: the geometric 2-Wasserstien distance (taking into account symmetries by first optimizing a rigid transformation \citep{klein2024equivariant} between model generated samples and the ground truth MCMC data and $E(\cdot) \mathcal{W}_2$: The 1D $\mathcal{W}_2$ distance between the distribution of energy values produced by generated samples and ground-truth and path effective sample size (ESS): Roughly measuring the proportion of informative sample trajectories with respect to each algorithms target SDE process. Since iDEM is not built on a reverse KL objective, the path-ESS metric is not meaningful and is hence omitted. Please see~\Cref{app:w2_metric} for more details. To demonstrate the energy sample efficiency of Adjoint Sampling we report the number of energy function evaluations per gradient update.

\paragraph{Results}
It can be seen clearly in Table~\ref{tab:synthetic} that Adjoint Sampling is on par with iDEM in terms of particle $\mathcal{W}$-2 distances, however iDEM struggle to remain in low
energy regions as indicated by the much larger $E(\cdot) \mathcal{W}_2$ on the LJ-13 and LJ-55 experiments (visualized as energy histograms in~\Cref{fig:energy_histograms}). Moreover, \emph{iDEM uses $\mathcal{O}(100,000)$ more energy evaluations than Adjoint Sampling per gradient update} since it relies on Monte Carlo (MC) estimation of the ground truth score function. 
This is what makes scaling iDEM to more complex energy functions intractable. To test a more efficient version of iDEM, we also run iDEM using only a single MC sample for estimating their objective, however it proved to yield overall unstable metrics, most likely due to its biased MC score target estimator. 
In regards to SOC-based methods, PIS and DDS struggle to perform beyond LJ13, while the offline method LogVariance can not scale beyond DW-4. These results indicate that Adjoint Sampling is the preferred method for large and complex energy models.

\subsection{Sampling Conformers from an Energy Function}

We provide a new benchmark to encourage research into highly scalable sampling methods capable of finding \emph{conformers} from a molecular energy model. 
These are local minima on the molecule's potential energy surface, \ie a fixed molecular topology has a set of conformers determined by locally stable configurations of rotations around its bonds. Depending on the flexibility, a molecule can have one or many different conformers.

Sampling conformations based on quantum mechanical energy functions such as density functional theory (DFT) is an important but challenging task. Due to the high cost of DFT calculations, we use an energy model that was pretrained on the SPICE-MACE-OFF dataset~\citep{eastman2023spice, kovacs2023mace} called \emph{eSEN}~\citep{fu2025learning}. It has shown very high accuracy in predicting the DFT energy. eSEN is a transferable message passing neural network model that can predict the energy of a given molecular conformation for molecules containing \{H, C, N, O, F, P, S, Cl, Br, I\}. It featurizes a molecule with a sparse graph representation, encoding the chemical element in the node features and the Cartesian displacements in the edge features. The SPICE-MACE-OFF dataset comprehensively samples the energy surface in high and low energy regions, enabling eSEN to be generally applicable to many conformations of small molecules.

Using the Adjoint Sampling framework, we can sample atomic coordinates of a specific molecule by conditioning on its connectivity graph. These atomic coordinates are proposed conformations. The graph conditional method is appealing as it enables amortization, \ie learning a sampler for molecules consisting of arbitrary combinations of organic elements and bonds. We design two approaches:

\begin{enumerate}
\item \textbf{Cartesian Adjoint Sampling} (\Cref{subsec:cartesian_sampling}): We sample all 3D Cartesian coordinates for each atom, conditioned on elements (nodes) and bond type (edges).
\item \textbf{Torsional Adjoint Sampling} (\Cref{app:torsion_sampling}): We sample \emph{only} the torsion angles in the molecule, conditioned on elements, bond lengths, and bond angles (3-tuples).
\end{enumerate}
In addition, we evaluate a \emph{pretraining} step on RDKit samples using Bridge Matching~\citep{shi2024diffusion} which aids in initial exploration of low energy regions~(see \Cref{app:bridge_matching} for more details).
In our experiments, we scale the eSEN energy by the Boltzmann constant and a low temperature. In Cartesian Adjoint Sampling, we introduce additional energy terms to regularize bond lengths, ensuring we are targeting the correct molecule according to its connectivity. The terms do not appear in Torsional Adjoint Sampling; they are unnecessary since the torsional SDE cannot break bonds by design.

\paragraph{Datasets}
We consider two datasets of molecular structures: SPICE~\citep{eastman2023spice} and GEOM-DRUGS~\citep{axelrod2022geom}. 
SPICE enumerates over 23,000 diverse drug-like molecules and their properties; it is designed to be utilized for machine learning. All of our models are trained on SPICE; however, we utilize \emph{merely} the SMILES strings to extract the molecular topology. This is possible because Adjoint Sampling does not require any data during training, \ie, we do \emph{not} use the atomic configurations!
We evaluate generalization capabilities by computing precision and recall on a subset of GEOM-DRUGS, see \Cref{app:geom_drugs_info}. 
This is one the largest datasets and most relevant to pharmaceutical drug design.

\looseness=-1000
We split the SPICE molecules into a train set and a 80 molecule test set, allowing us to validate our sampler's ability to extrapolate to unseen molecules. We produce reference conformers for evaluation purposes using RDKit \citep{landrum2013rdkit}; CREST~\citep{pracht2020automated}, which is considered the gold standard in molecular chemistry; and ORCA~\citep{orca}.
Because this procedure is computational expensive, we do this only for the SPICE test set. GEOM-DRUGS provides conformers for us. See \Cref{app:data_prep} for details about the datasets.

\paragraph{Baselines} We compare our approach to RDKit ETKDG \citep{riniker2015better}, a chemistry-based conformer generation method. RDKit combines a rules-based distance matrix method, an iterative refinement algorithm, a 3D embedding, and experimental torsional preferences to generate conformers, making it a suitable domain-specific baseline.

\paragraph{Evaluation}
Our assessment focuses on reference conformer coverage \emph{recall}, which measures the percentage of reference CREST conformers recovered by the generated samples, and coverage \emph{precision}, which evaluates how often generated conformers closely match a low-energy reference structure. In particular, recall is a very important metric for molecular design because exploration is typically challenging and good mechanisms exist to filter candidates. The number of reference conformers varies significantly for molecules under consideration, ranging from tens to thousands.
We also report Absolute Mean RMSD (AMR) as an error metric, with lower values indicating better structural fidelity. See~\Cref{app:rmsd_metrics} for more details. Results are presented both with and without relaxation, where post-generation optimization helps refine molecular conformers in Table~\ref{tab:conformer_generation}. While samples without relaxation are a test of the sampling algorithm's performance, testing with relaxation is ultimately what brings the most value to computational chemistry being a standard refinement process (see~\Cref{app:conformers_and_torsions}).

\paragraph{Results} As seen in \Cref{tab:conformer_generation}, across both SPICE and GEOM-DRUGS, Adjoint Sampling (both Cartesian and Torsional) outperform the RDKit baseline in Recall. On generalizing to the GEOM-DRUGS dataset, we show worse precision than RDKit but significantly improved recall. As depicted in \Cref{fig:recall_plot}, Cartesian Adjoint Sampling outperforms RDKit on almost all threshold values, with and without relaxation. While relaxation improves coverage across all methods, it particularly benefits the performance of Adjoint Sampling, making the gap between RDKit-ETKDG even larger. We believe this comes from the fact that Adjoint Sampling naturally explores more of the configuration space by being initialized as a noisy stochastic process.

Without relaxation or pretraining, the Torsional AdjSampling variant performs significantly better than Cartesian Variant in precision and recall on both datasets. However, after pretraining Cartesian AdjSampling surpasses the Torsional variant and even further exceeds in performance after relaxation, (being pretrained or not).
This may be due to the Torsional representation evolving on a constrained space as seen in \Cref{fig:generation_process}, where the Cartesian representation evolves with the energy regularizer~(see \Cref{subsec:cartesian_sampling}) but is unconstrained. This state representation will affect how the controlled process explores the configuration space, suggesting that the unconstrained Cartesian representation samples are more diverse. 


Finally, in \Cref{fig:recall_comparison_across_rotb} we also compare across different number of rotatable bonds---the number of conformers increases drastically with the number of rotatable bonds (see \cref{fig:spice_drugs_confs})---where we see that the gap between our model and RDKit increases as the difficulty increases.

\section{Conclusion}
We introduce Adjoint Sampling, a highly scalable approach for learning diffusion-based samplers from energy functions. 
Using the Reciprocal Adjoint Matching objective and a replay buffer, our method enables efficient training by allowing many gradient updates with few energy evaluations and model samples. 
Based on continuous-time diffusion processes, our framework naturally integrates symmetries and periodic boundary conditions, making it effective for molecular modeling and conformer generation. 
We achieve state-of-the-art performance on synthetic energy functions and are the first to scale up to much more difficult conformer generation. We open-source our benchmarks to encourage further advancements in highly scalable sampling methods.\looseness=-1

\bibliographystyle{assets/plainnat}
\bibliography{paper}

\appendix

\section{Additional preliminaries} \label{sec:additional_prelims}

\subsection{Stochastic optimal control}\label{app:SOC}

Stochastic optimal control (SOC; \cite{bellman1957,fleming2012deterministic,sethi2018optimal}) considers general optimization problems over stochastic differential equations. Specifically, a class of SOC problems can be expressed as the following optimization problem:
\begin{align} \label{eq:control_problem_def}
    &\min\limits_{u \in \mathcal{U}} \mathbb{E} \big[ \int_0^1 
    \left(\frac{1}{2} \|u(X^u_t,t)\|^2 + f(X^u_t,t) \right) \, \mathrm{d}t + 
    g(X^u_1) \big], \\
    \begin{split}
    \text{s.t.}~ \mathrm{d}X^u_t =  \left( b(X^u_t,t) + \sigma(t) u(X^u_t,t) \right) \, \mathrm{d}t + 
    \sigma(t) \mathrm{d}B_t, \qquad X^u_0 \sim p_0
    \end{split} 
    \label{eq:controlled_SDE}
\end{align}
where in \eqref{eq:controlled_SDE}, $X_t^u \in \R^d$ is the state of the stochastic process, $u : \R^d \times [0,1] \to \R^d$ is commonly referred to as the control, $b : \R^d \times [0,1] \to \R^d$ is a base drift, and $g : [0,1] \to \R^{d \times d}$ is the diffusion coefficient. These jointly define the \emph{controlled process} $\fX^u \sim p^u$ that we are interested in optimizing; often both $b$ and $g$ are fixed and we only optimize over the control $u$. 

In the following proposition show that for the case of $b\equiv 0$, $f\equiv 0$ and $X_0=0$ that the Schrodinger-Bridge $p^*(\fX) = \pbase(\fX | X_1)\mu(X_1)$ can actually be achieved by $p^u(\fX)$.
\begin{prop}\label{prop:min_DKL}
For the stochastic process \eqref{eq:controlled_sde}, there exists a unique minimizer $u^*$ to the following optimization problem
\begin{equation}\label{eq:path_measure_KL_prop}
\min_u \KL\left(p^{u}(\fX) || p^*(\fX) \right) = 0,
\end{equation}
and the optimal controlled distribution $p^{u^*}$ satisfies:
\begin{align}\label{eq:optimal_dist}
    p^{u^*}_{t,1}(X_t, X_1) = p^{*}_{t,1}(X_t, X_1) = \pbase_{t|1}(X_t {\mid} X_1) \mu(X_1),
\end{align}
 where $\pbase_{t|1}(X_t|X_1)$ is the posterior distribution of the base process conditioned on $X_1$.
\end{prop}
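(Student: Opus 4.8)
The plan is to establish Proposition~\ref{prop:min_DKL} by directly computing the path-measure KL divergence and exhibiting the control that drives it to zero. First I would recall the factorization already used in the main text, $p^*(\fX) = \pbase(\fX)\tfrac{\mu(X_1)}{\pbase_1(X_1)}$, which is valid because conditioning the base process on $X_1$ and then re-weighting the terminal marginal from $\pbase_1$ to $\mu$ is the same as taking the Schr\"odinger bridge $\pbase(\fX\mid X_1)\mu(X_1)$. Using Girsanov's theorem (the Radon--Nikodym derivative quoted in the footnote) together with this factorization gives
\begin{align*}
\KL\left(p^u(\fX) \,\|\, p^*(\fX)\right)
= \mathbb{E}_{p^u}\left[\int_0^1 \tfrac{1}{2}\lVert u(X_t,t)\rVert^2 \dif{t} + \log\frac{\pbase_1(X_1)}{\mu(X_1)}\right],
\end{align*}
which is exactly $\mathcal{L}_{\text{SOC}}(u)$ in \eqref{eq:soc_cost}. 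So the proposition reduces to an existence-and-uniqueness statement about the minimizer of this stochastic optimal control objective with terminal cost $g(x) = \log\pbase_1(x) - \log\mu(x)$ (up to the additive constant $\log Z$, which does not affect the argmin).

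Next I would invoke the classical verification theorem for minimum-energy (linearly-solvable / path-integral) control: for $b \equiv 0$, $f \equiv 0$, the Hamilton--Jacobi--Bellman equation for the value function $V(x,t)$ linearizes under the Hopf--Cole transform $\phi = \exp(-V)$, and one obtains $\phi(x,t) = \mathbb{E}_{\pbase}[\exp(-g(X_1)) \mid X_t = x]$, with the optimal control $u^*(x,t) = \sigma(t)\nabla\log\phi(x,t)$. Plugging $g = \log\pbase_1 - \log\mu$ gives $\exp(-g(X_1)) = \mu(X_1)/\pbase_1(X_1)$, hence $\phi(x,0) = \mathbb{E}_{\pbase}[\mu(X_1)/\pbase_1(X_1)] = \int \pbase_1(x_1)\tfrac{\mu(x_1)}{\pbase_1(x_1)}\dif{x_1} = 1$ since $\mu$ integrates to one. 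Therefore $V(0,0) = -\log\phi(0,0) = 0$, and since $V(0,0)$ is precisely the optimal value $\min_u \mathcal{L}_{\text{SOC}}(u) = \min_u \KL(p^u\|p^*)$ (modulo the constant that was dropped; here with $g$ including $\log\pbase_1 - \log\mu$ the constant is zero), the minimum is $0$. Because KL divergence is nonnegative and vanishes only when the two path measures coincide, this forces $p^{u^*}(\fX) = p^*(\fX)$ as path measures, which immediately yields \eqref{eq:optimal_dist} by marginalizing to the $(t,1)$ coordinates, using $p^*_{t,1}(X_t,X_1) = \pbase_{t|1}(X_t\mid X_1)\mu(X_1)$. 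Uniqueness of $u^*$ follows from strict convexity: $\mathcal{L}_{\text{SOC}}(u)$ is strictly convex in the drift along the path measure (the control-energy term $\tfrac12\lVert u\rVert^2$ is strictly convex and the reparametrization from $u$ to $p^u$ is affine in the appropriate sense), so the minimizer of the path-measure KL is unique.

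The main obstacle I anticipate is making the Hopf--Cole / verification-theorem step rigorous at the level of generality the paper wants: one needs mild regularity on $\sigma$, $u$, and on $E$ (so that $\phi$ is finite and smooth, the SDE is well-posed, Girsanov applies, and the candidate $u^*$ is admissible, i.e.\ lies in $\mathcal{U}$ and generates a path measure mutually absolutely continuous with $\pbase$). Rather than re-derive the path-integral control theory from scratch, I would cite the standard references already in the bibliography (\citet{kappen2005path, tzen2019neural, dai1991stochastic}) for the HJB-linearization and optimality, and keep the novel content of the proof confined to (i) the Girsanov computation identifying the KL with $\mathcal{L}_{\text{SOC}}$ and (ii) the evaluation $\phi(0,0) = 1$ that pins the optimal value to zero; the marginal identity \eqref{eq:optimal_dist} is then a one-line consequence. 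A secondary subtlety is the zero-center-of-mass / singular-Gaussian case, but since this proposition is stated for the plain $\R^d$ setting I would defer that to the geometric extensions and not address it here.
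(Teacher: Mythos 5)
Your proposal is correct and follows essentially the same route as the paper's proof: the Girsanov computation identifying the path KL with the SOC objective, the linearly-solvable-control value function $V(x,t)=-\log\mathbb{E}_{\pbase}[\exp(-g(X_1))\mid X_t=x]$ evaluated at the origin to give $-\log\int\mu=0$, and the passage from equality of path measures to equality of the $(t,1)$-marginals (the paper phrases this last step via the data processing inequality, you via direct marginalization — an immaterial difference). The only cosmetic point is that the paper simply cites \citet{kappen2005path} for uniqueness of $u^*$ rather than spelling out the convexity argument.
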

\begin{proof}
By using the definition of the KL divergence of path measures, $p^*(\fX) = \pbase(\fX | X_1) \mu(X_1)$ can be factorized as
\begin{align}\label{eq:KL_regularized}
    D_{\text{KL}}\left(p^{u}(\fX) || p^*(\fX) \right)= 
    D_{\text{KL}}\left(p^{u}(\fX) || \pbase(\fX)\frac{\mu(X_1)}{\pbase_1(X_1)} \right)= D_{\text{KL}}\left(p^u(\fX) || \pbase(\fX)\right) + \mathbb{E}_{\fX \sim p^u} \log\left(\frac{\pbase(X_1)}{\mu(X_1)} \right).
\end{align}
By Girsanov's Theorem~\citep{protter2005stochastic} to further evaluate the right-hand side as
\begin{align*}
       D_{\text{KL}}\left(p^u(\fX) || \pbase(\fX)\right) + \mathbb{E}_{\fX \sim p^u} \log\left(\frac{\pbase(X_1)}{p^*(X_1)} \right) = \mathbb{E}_{\fX \sim p^u} \left[\int_0^1 \tfrac{1}{2}\lVert u(X_t, t) \rVert^2 \dif{t} + \log\left(\frac{\pbase(X_1)}{p^*(X_1)} \right)\right].
\end{align*}
This criteria corresponds exactly with a minimum-energy stochastic optimal control (SOC) problem  with stage cost $\tfrac{1}{2}\Vert u(X_t, t)\Vert^2$ and terminal cost function $g(X_1)=\log\left(\frac{\pbase(X_1)}{p^*(X_1)} \right)$.
\begin{align}\label{eq:soc-app}
        &\min_{u} \mathbb{E}_{\fX \sim p^u}\left[ \int_0^1 \tfrac{1}{2} \lVert u(X_t, t) \rVert^2 \dif{t} + g(X_1)\right] \\
    &\text{ s.t. } \dif{X_t} = 
    \sigma(t) u(X_t, t) \dif{t} + \sigma(t) \dif{B_t},\quad X_0 = 0 
\end{align}

We can define the cost-to-go of our problem at a particular state and time $t$ by
\begin{align}
    J(u; x, t) := \mathbb{E}_{\fX \sim p^u}\left[ \int_0^1 \tfrac{1}{2} \lVert u(X_t, t) \rVert^2 \dif{t} + g(X_1) \mid X_t = x\right]
\end{align}
then we know that (under mild conditions on $g$) the \emph{value function} or cost-to-go under the optimal control takes a rather surprising form and that indeed the optimal control $u^*$ is unique and takes the form of a time-varying function of the current state~\citep{kappen2005path}:
\begin{align}
    V(x, t) := \min_{u} J(u; x, t) =  -\log \mathbb{E}_{\pbase}\big[ \exp(-g(X_1)) \mid X_t = x \big],\quad     u^*(x, t) = - \sigma(t) \nabla_x V(x ,t)
\end{align}
With this fact about the optimal solution of the SOC problem, we can upper bound the aforementioned path KL divergence between the optimally controlled process and our target Schrodinger-bridge as
\begin{align}
     D_{\text{KL}}\left(p^{u^*}(\fX) || p^*(\fX) \right) = V(X_0, 0) &= -\log \mathbb{E}_{\fX \sim \pbase}\big[ \exp(-g(X_1)) \big]\\
     &= -\log \int_{\mathbb{R}^d} \mu(x) \dif{x} = 0,
\end{align}
By the data processing inequality we have 
\begin{align}
    D_{\text{KL}}\left(p^{u^*}_{t,1}(X_t, X_1) || p^*_{t,1}(X_t, X_1) \right) \leq  D_{\text{KL}}\left(p^{u^*}(\fX) || p^*(\fX)\right) = 0 ,\quad \forall t \in [0,1]
\end{align}
which implies that $p^{u^*}_{t,1}(X_t,X_1) = \pbase_{1|t}(X_t|X_1)\mu(X_1)$ for all $t\in [0,1]$. This is exactly our claim and completes the proof.
\end{proof}

\subsection{Adjoint Matching}

Adjoint Matching \citep{domingoenrich2024adjoint} is an algorithm designed to solve stochastic optimal control problems of the form \eqref{eq:control_problem_def}. Unlike standard adjoint methods~\citep{BrysonHo69} that differentiate through the objective and perform gradient descent, Adjoint Matching directly tries to find the fixed point
\begin{equation}\label{eq:fixed_point}
    u(x, t) = - \sigma(t) \nabla J(u; x, t) 
\end{equation}
where $J$ is the expected future cost according to the control $u$,
\begin{equation}
    J(u; x, t) = \E_{p^u} \left[ \int_t^1 \tfrac{1}{2}\norm{u(X_s, s)}^2 +f(X_s, s) \mathrm{d}s + g(X_1^u) \Bigg| X_t = x \right].
\end{equation}
Adjoint Matching then solves this fixed point \eqref{eq:fixed_point} by replacing $\nabla_x J$ with a stochastic estimator. This stochastic estimator is the \emph{lean adjoint} \citep{domingoenrich2024adjoint} denoted by $\tilde{\alpha}$. \citet{domingoenrich2024adjoint} then showed that the unique solution to the fixed point 
\begin{equation}
    u(x,t) = -\sigma(t) \E_{\fX \sim p^u} \left[ \tilde{\alpha}(x, t) | X_t = x\right]
\end{equation}
is the optimal control $u^*$. This then motivated the following objective as a means to solve this fixed point problem.
\begin{align}\label{eq:lean_adjoint_matching}
\mathcal{L}_{\mathrm{AM}}(u) 
:= \mathbb{E}_{p^{\bar u}}\Bigg[\frac{1}{2} \int_0^{1} \big\| & u(X_t 
,t)
+ \sigma(t)^{\top} \tilde{a}(t;\bm{X}) \big\|^2 \, \mathrm{d}t \Bigg], 
\qquad \fX \sim p^{\bar{u}}, \quad \bar{u} = \texttt{stopgrad}(u), \\
\label{eq:lean_adjoint_1}
\text{where }\quad \frac{\mathrm{d}}{\mathrm{d}t} \tilde{a}(t;\bm{X}) 
&= - (\tilde{a}(t;\bm{X})^{\top} \nabla_x b (X_t,t) + \nabla_x f(X_t,t)), \\ 
\label{eq:lean_adjoint_2}
\tilde{a}(1;\bm{X}) &= \nabla_{X_1} g(X_1).
\end{align}
where $\tilde{a}$ is referred to as the ``lean adjoint'' state, and $\texttt{stopgrad}(\cdot)$ denotes a stop gradient operation, \textit{i.e.}, although $\fX \sim p^{\bar{u}}$ is sampled according to the controlled process, Adjoint Matching does not differentiate through the sampling procedure.

Note that for the general problem formulation \eqref{eq:control_problem_def}, Adjoint Matching requires two simulations, one to sample a trajectory from the stochastic process $\fX \sim p^u$ and one to solve the lean adjoint state backwards in time \eqref{eq:lean_adjoint_1} from a terminal condition at $t=1$ \eqref{eq:lean_adjoint_2}. 

\paragraph{Adjoint Matching is greatly simplified for simple base processes.}
One of our key observations is that the Adjoint Matching algorithm greatly simplifies when the $b = 0$ and $f=0$, since then \eqref{eq:lean_adjoint_1} is zero. This means that the lean adjoint state $\tilde{a}(t, \fX) = \nabla g(X_1)$ for all $t$. The Adjoint Matching algorithm hence reduces to
\begin{align}
\mathcal{L}_{\mathrm{AM}}(u) 
:= \mathbb{E}_{p^{\bar u}}\Bigg[ \frac{1}{2} \int_0^{1}\big\| & u(X_t,t)
+ \sigma(t)^{\top}  \nabla_{X_1} g(X_1) \big\|^2 \, \mathrm{d}t \Bigg],  
\qquad \fX \sim p^{\bar{u}}, \quad \bar{u} = \texttt{stopgrad}(u).
\end{align}

\section{Base process derivations} \label{sec:base_processes}

\subsection{Euclidean space}

Our base process is modeled by a stochastic differential equation ($X_t \in \R^n$ and $\sigma: \R \rightarrow \R^+$):
\begin{equation}\label{eq:base_process}
    dX_t = \sigma(t) dB_t, \qquad X_0 = 0
\end{equation}corresponding to \eqref{eq:controlled_sde} with the control set to zero, \ie, $u_t = 0$. With this, the forward transition distributions are ($t > s$):
\begin{equation}\label{eq:pbase_forward_transition}
    \pbase_{t|s}(x | X_s) = \mathcal{N}(x | X_s, \nu_{t|s} I), \qquad \text{ where } \nu_{t|s} = \int_{s}^t \sigma(s)^2 ds.
\end{equation}
With $s = 0$ and $X_0 = 0$, we obtain the time marginals of \eqref{eq:base_process}:
\begin{equation}\label{eq:pbase_marginals}
    \pbase_{t}(x) = \mathcal{N}(x | 0, \nu_{t} I), \qquad \text{ where } \nu_{t} = \int_{0}^t \sigma(s)^2 ds.
\end{equation}
Furthermore, the backward transition distributions are ($t > s$): 
\begin{equation}\label{eq:pbase_backward_transition}
    \pbase_{s|t}(x | X_t) = \frac{\pbase_{s}(x) \pbase_{t|s}(X_t | x)}{\pbase_t(X_t)} = \mathcal{N}(x | \alpha_{s|t} X_t, \cev{\nu}_{s|t} I),
\end{equation}
\begin{equation}
    \text{ where } \alpha_{s|t} = \frac{\nu_s}{\nu_s + \nu_{t|s}}, \quad \cev{\nu}_{s|t} = \left( \frac{1}{\nu_s} + \frac{1}{\nu_{t|s}} \right)^{-1} = \alpha_{s|t} \nu_{t|s}
\end{equation}
As such, the Reciprocal Adjoint Matching objective \eqref{eq:ram} becomes tractable when $\nu_{t|s}$ can be derived in closed form.

\paragraph{Constant noise schedule.}
Setting the diffusion coefficient to be a constant $\sigma(t) = \sigma$ results in 
\begin{equation}
    \nu_{t|s} = \sigma^2 (t - s), \qquad \alpha_{s|t} = \tfrac{s}{t}
\end{equation}
and hence the following terminal and backward transition distributions:
\begin{equation}
    \pbase_{1}(X_1) = \mathcal{N}(X_1 ; 0, \sigma^2 I), \qquad \pbase_{t|1}(x | X_1) = \mathcal{N}(x ; t X_1, \sigma^2 (1-t) t I).
\end{equation}

\paragraph{Geometric noise schedule.} \citet{song2021scorebased, karras2022elucidating} propose the following geometric schedule (stated here in reverse-time)
\begin{equation}
    \label{eqn:geometric_noise_schedule_sigma}
    \sigma(t) = \sigma_{\text{min}} \left(\tfrac{\sigma_\text{max}}{\sigma_{\text{min}}}\right)^{1-t} \sqrt{2 \log \tfrac{\sigma_\text{max}}{\sigma_\text{min}}}
\end{equation}
results in
\begin{equation}
    \nu_{t|s} = \sigma_\text{max}^2 \left(\left(\tfrac{\sigma_\text{min}}{\sigma_\text{max}}\right)^{2s} - \left( \tfrac{\sigma_\text{min}}{\sigma_\text{max}} \right)^{2t} \right), \qquad 
    \alpha_{s|t} = \frac{\left(\tfrac{\sigma_\text{min}}{\sigma_\text{max}}\right)^{2s} - 1}{\left(\tfrac{\sigma_\text{min}}{\sigma_\text{max}}\right)^{2t} - 1}
\end{equation}
and hence the following terminal and backward transition distributions:
\begin{equation}
    \pbase_{1}(X_1) = \mathcal{N}(X_1; 0, \sigma_\text{max}^2 \left(1 - \left( \tfrac{\sigma_\text{min}}{\sigma_\text{max}} \right)^{2} \right) I), \qquad 
    \pbase_{t|1}(X_t | X_t) = \mathcal{N}(X_t ; \alpha_{t|1} X_1, \alpha_{t|1} \nu_{1|t} I ).
\end{equation}

\subsection{Flat tori}\label{sec:flat_tori_derivations}

Simulating the base process \eqref{eq:base_process} in $\mathbb{T} = \R / \mathbb{Z}$ produces a wrapped Gaussian distribution $\pbasepbc_{t}$
\begin{equation}\label{eq:pbase_pbc_marginals}
\pbasepbc_{t}(x) = \sum_{k \in \mathbb{Z}} \pbase_{t}(x + k),
\end{equation}
where $\pbase_{t}$ is the time marginal of the base process over $\R$ defined in \eqref{eq:pbase_marginals} and $\mathbb{Z}$ is the set of integers. 
We can equivalently can interpret \eqref{eq:pbase_pbc_marginals} as marginalizing a random variable $k$. That is, we view $\pbase_{t}(x + k)$ as a joint distribution $p_t(x, k)$ over $x \in \T$ and $k \in \Z$,
\begin{equation}
    p_t(x, k) := \underbrace{\frac{\pbase_{t}(x + k)}{\int_{x' \in \T} \pbase_{t}(x' + k)}}_{:= p_t(x | k)} \underbrace{\int_{x' \in \T} \pbase_{t}(x' + k) \vphantom{\frac{\pbase_{t}(x + k)}{\int_{x_t \in \T} \pbase_{t|s}(x_t + k_t | x_s)}} }_{:= p_t(k)}.
\end{equation}
Intuitively, the variable $k$ represents which sub-interval on $\R$ was collapsed to obtain $x$.
Furthermore, we can infer $k$ given $x$ at time $t$:
\begin{equation}
    p_t(k | x) = \frac{ p_t(x, k) }{ \sum_{k'} p_t(x, k') },
\end{equation}
hence arriving at the following backward transition distribution ($s < t$) in which we marginalize $k$:
\begin{equation}\label{eq:pbase_pbc_backwards}
    \pbasepbc_{s|t}(x | X_t) = \sum_{k \in \Z} p_{s|t}(x | k, X_t) p_t(k | X_t)  = \sum_{k \in \mathbb{Z}} \left( \sum_{k'\in \Z} \pbase_{s|t}(x + k' | X_t + k) \right) \frac{\pbase_t(X_t + k)}{\sum_{k' \in \mathbb{Z}} \pbase_t(X_t + k') },
\end{equation}
where $\pbase_{s|t}(x | X_t + k)$ is the backward transition probability in $\R$ defined in \eqref{eq:pbase_backward_transition}. Thus sampling from $\pbasepbc_{t|1}(x | X_1)$ given in \eqref{eq:pbase_pbc_backwards} can be done by the following sampling process 
\begin{enumerate}
    \item Sample $k \sim p(k)$ where $p(k) \propto \pbase_1(X_1 + k)$.
    \item Sample $Y_t \sim \pbase_{t|1}(\cdot | X_1 + k)$.
    \item Project $X_t = Y_t \bmod 1.0$.
\end{enumerate}
For the $n$-dimensional $\T^n$, since the base process is factorized, we simply sample independently for each dimension.

\section{Theory of Reciprocal Adjoint Matching} \label{app:theory}

As a preliminary, let us first define precisely what is meant be \emph{Reciprocal projection}. We take some terminology and results from \citet{shi2024diffusion}, denoting $\mathcal{M}$ the subset of Markov path measures associated with an SDE of the form $\mathrm{d} X_t = v(X_t,t) \mathrm{d}t +
\sigma(t) \, \mathrm{d}B_t$, with $\sigma$, $v$ locally Lipschitz.
\begin{definition}[Reciprocal class, Def.~3 of \citet{shi2024diffusion}]
A Borel probability measure $\Pi \in \mathbf{P}(C)$ is in the Reciprocal class $\mathcal{R}(\mathbb{Q})$ of $\mathbb{Q} \in M$ if $\Pi = \Pi_{0,T} \mathbb{Q}_{(0,T)|0,T}$. 
\end{definition}
The Reciprocal projection of $\mathbb{P} \in \mathbf{P}(C)$ is defined as $\Pi^* = \mathcal{P}_{\mathcal{R}(\mathbb{Q})}(\mathbb{P}) = \mathbb{P}_{0,T} \mathbb{Q}_{(0,T)|0,T}$. By construction, $\Pi^*$ belongs to the Reciprocal class $\mathcal{R}(\mathbb{Q})$. The following characterization shows that the Reciprocal projection is indeed a projection, i.e. $\mathcal{P}_{\mathcal{R}(\mathbb{Q})}^2 = \mathcal{P}_{\mathcal{R}(\mathbb{Q})}$.
\begin{lem}[Reciprocal projection characterization, Prop.~4 of \citet{shi2024diffusion}]
    Let $\mathbb{P} \in \mathbf{P}(C)$, 
    $\Pi^{*} = \mathcal{P}_{\mathcal{R}(\mathbb{Q})}(\mathbb{P})$. 
    Then, $\Pi^{*} = \argmin_{\mathbb{P}} \{\mathrm{KL}(\mathbb{P}|\mathbb{Q}) : \Pi \in \mathcal{R}(\mathbb{Q}) \}$.
\end{lem}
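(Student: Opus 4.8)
The claim is that the Reciprocal projection $\Pi^{*} = \mathcal{P}_{\mathcal{R}(\mathbb{Q})}(\mathbb{P}) = \mathbb{P}_{0,T}\,\mathbb{Q}_{(0,T)|0,T}$ really is the KL-projection of $\mathbb{P}$ onto the Reciprocal class, i.e. $\Pi^{*} = \argmin\{\mathrm{KL}(\mathbb{P}\,|\,\Pi) : \Pi \in \mathcal{R}(\mathbb{Q})\}$, which in particular makes $\mathcal{P}_{\mathcal{R}(\mathbb{Q})}$ idempotent. The plan is to reduce the whole statement to the disintegration of path measures over their endpoint pair $(X_{0},X_{T})$ together with the chain rule for relative entropy; nothing about the dynamics of $\mathbb{Q}$ is used beyond the fact that its bridge $\mathbb{Q}_{(0,T)|0,T}$ is well defined.

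First I would fix the canonical factorization of the path space $C = C([0,T];\R^{d})$ into the endpoint pair and the interior path $(X_{t})_{0<t<T}$. Because $C$ is Polish, regular conditional distributions exist, so any $\mathbb{P} \in \mathbf{P}(C)$ factors as $\mathbb{P} = \mathbb{P}_{0,T}\,\mathbb{P}_{(0,T)|0,T}$, and by definition of the Reciprocal class every $\Pi \in \mathcal{R}(\mathbb{Q})$ factors as $\Pi = \Pi_{0,T}\,\mathbb{Q}_{(0,T)|0,T}$ --- only the endpoint law is free. The chain rule for KL then yields, for every such $\Pi$,
\begin{equation*}
\mathrm{KL}(\mathbb{P}\,|\,\Pi) = \mathrm{KL}(\mathbb{P}_{0,T}\,|\,\Pi_{0,T}) + \E_{\mathbb{P}_{0,T}}\!\left[\mathrm{KL}\!\left(\mathbb{P}_{(0,T)|0,T}\,\big|\,\mathbb{Q}_{(0,T)|0,T}\right)\right],
\end{equation*}
where the expectation is over the endpoint pair drawn from $\mathbb{P}_{0,T}$.

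The key point is that the second term is independent of $\Pi_{0,T}$: it depends only on $\mathbb{P}$ and on the bridge of $\mathbb{Q}$, and is therefore a fixed constant over $\mathcal{R}(\mathbb{Q})$. Hence minimizing $\mathrm{KL}(\mathbb{P}\,|\,\Pi)$ over $\mathcal{R}(\mathbb{Q})$ is the same as minimizing $\mathrm{KL}(\mathbb{P}_{0,T}\,|\,\Pi_{0,T})$ over endpoint laws, whose unique minimizer --- by nonnegativity of KL with equality iff the arguments coincide --- is $\Pi_{0,T} = \mathbb{P}_{0,T}$, i.e. $\Pi^{*} = \mathbb{P}_{0,T}\,\mathbb{Q}_{(0,T)|0,T} = \mathcal{P}_{\mathcal{R}(\mathbb{Q})}(\mathbb{P})$. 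Idempotency is then immediate: $\Pi^{*}$ has interior conditional $\mathbb{Q}_{(0,T)|0,T}$ and endpoint law $(\Pi^{*})_{0,T} = \mathbb{P}_{0,T}$, so a second application of $\mathcal{P}_{\mathcal{R}(\mathbb{Q})}$ reproduces it. Substituting $\Pi = \Pi^{*}$ into the displayed identity (and noting $\mathrm{KL}(\Pi^{*}\,|\,\mathbb{Q}) = \mathrm{KL}(\mathbb{P}_{0,T}\,|\,\mathbb{Q}_{0,T})$) also hands us the Pythagorean identity $\mathrm{KL}(\mathbb{P}\,|\,\mathbb{Q}) = \mathrm{KL}(\mathbb{P}\,|\,\Pi^{*}) + \mathrm{KL}(\Pi^{*}\,|\,\mathbb{Q})$.

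The main obstacle is purely measure-theoretic: one has to justify the disintegration and the chain-rule identity on the infinite-dimensional space $C$, which relies on existence of regular conditional probabilities there, and one must treat the degenerate cases $\mathbb{P} \not\ll \mathbb{Q}$ or $\mathrm{KL}(\mathbb{P}\,|\,\mathbb{Q}) = \infty$ by reading the chain rule with values in $[0,\infty]$ (the characterization is then vacuous or reduces to a statement about the finite component $\mathrm{KL}(\mathbb{P}_{0,T}\,|\,\mathbb{Q}_{0,T})$). Nonemptiness of $\mathcal{R}(\mathbb{Q})$ and well-definedness of the bridge are already guaranteed by the standing local-Lipschitz assumptions on $(\sigma,v)$. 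Since this is exactly Prop.~4 of \citet{shi2024diffusion}, in the paper I would cite that result, but the above is the self-contained argument.
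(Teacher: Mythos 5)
Your proof is correct, and it is essentially the argument behind the cited result: the paper itself gives no proof of this lemma (it is quoted verbatim from Prop.~4 of \citet{shi2024diffusion}), and the disintegration over the endpoint pair combined with the chain rule for relative entropy is exactly how that reference establishes it. You also rightly read the statement as $\argmin_{\Pi}\{\mathrm{KL}(\mathbb{P}\,|\,\Pi):\Pi\in\mathcal{R}(\mathbb{Q})\}$ --- the transcription in the paper, with $\mathrm{KL}(\mathbb{P}\,|\,\mathbb{Q})$ and $\argmin_{\mathbb{P}}$, is a typo, since that quantity does not depend on $\Pi$ at all --- and your handling of the degenerate infinite-KL case and the resulting Pythagorean identity are both sound.
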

Throughout the paper, we set $\mathbb{Q} = \mathbb{B}_{\nu}$, i.e. $\mathbb{Q}$ is the probability measure of the process $(\int_0^t \sigma(s) \, \mathrm{d}B_s)_{t \in [0,T]}$, which we denote as $\mathcal{P}_{\mathcal{R}(\mathbb{Q})}$.

Although this projection is exactly what is used by the proposed reciprocal adjoint matching objective~\Cref{eq:ram}, we will now proceed to define a similar projection in terms of drifts which will allow us to analyze an idealized version of the adjoint sampling algorithm.

\subsection{Reciprocal projections decrease the SOC objective} \label{app:Reciprocal}

In the case that $p^u = \mathbb{P}$ is generated by some drift function $u$, we can view the reciprocal projection as a projection of drifts $\mathcal{P}$ as proposed in \Cref{thm:adjoint_sampling}, noting that the unique drift generating $\mathcal{P}_{\mathcal{R}}(p^u)$ is given as the solution of the following Schrodinger bridge problem
\begin{align}       
        \mathcal{P}(u) := \argmin_{w}&~\mathbb{E}_{\fX \sim p^w}\left[ \int_0^1 \tfrac{1}{2} \lVert w(X_t, t) \rVert^2 \dif{t} + \log \pr{\frac{\pbase(X_1)}{p^{u}(X_1)}}\right].
        \label{eq:Reciprocal-soc}
\end{align}
This projection of drifts allows us to reason about the RAM loss with respect to both the AM and SOC functionals which are described in~\Cref{prop:projection}.
\begin{proof}[\textbf{Proof of \Cref{prop:projection}}]
Let $v=\mathcal{P}(u)$. Because $\mathcal{P}^2(u) = \mathcal{P}(u)$, the reciprocal projection ``approximation'' introduced in RAM is exactly the distribution induced by $v$ and so $\mathcal{L}_{\text{AM}}(v) = \mathcal{L}_{\text{RAM}}(v)$.

To show that $J(u) \geq J(\mathcal{P}(u))$, we simply apply the definition of $\mathcal{P}(u)$~\eqref{eq:Reciprocal-soc} which yields
\begin{align*}
        J(u) 
        &= \E_{\fX \sim p^u}\left[ \int_0^1 \tfrac{1}{2} \lVert u(X_t, t) \rVert^2 \dif{t} + \pr{\log \frac{\pbase}{\mu}}(X_1)\right] \\
        &= \E_{\fX \sim p^u}\left[ \int_0^1 \tfrac{1}{2} \lVert u(X_t, t) \rVert^2 \dif{t} + \pr{\log \frac{\pbase}{p^u} + \log {\frac{p^u}{\mu}}}(X_1)\right] \\
        &\ge \E_{\fX \sim p^v}\left[ \int_0^1 \tfrac{1}{2} \lVert v(X_t, t) \rVert^2 \dif{t} + \pr{\log \frac{\pbase}{p^u}}(X_1)\right] + \E_{X_1 \sim p^u}\left[ \log {\frac{p^u}{\mu}}(X_1) \right] 
        && \text{\hypersetup{hidelinks}\color{lightgray} by \eqref{eq:Reciprocal-soc}} \\
        &= \E_{\fX \sim p^v}\left[ \int_0^1 \tfrac{1}{2} \lVert v(X_t, t) \rVert^2 \dif{t} + \pr{\log \frac{\pbase}{p^u} + \log {\frac{p^u}{\mu}}}(X_1)\right] 
        && {\color{lightgray} p^v_1(X_1) = p^u_1(X_1) } \\
        &= J(v) = J(\mathcal{P}(u)).
    \end{align*}
\end{proof}

\subsection{Adjoint Sampling Preserves Critical Points of Adjoint Matching} 

\label{subsec:alternating_algorithm}
Consider a slightly more general version of the Reciprocal Adjoint Matching (RAM) objective with arbitrary fixed reference drift $v$,
\begin{align}
    \mathcal{L}(u; v) =\mathbb{E}_{\fX \sim p^v} \left[
    \int_0^1 \frac{1}{2}\lVert u(X_t, t) 
    + \sigma(t) \nabla g(X_1) \rVert^2 \dif{t}\right] \\
    s.t. \quad \dif{X_t} = \sigma(t)u(X_t,t) \dif{t} + \sigma(t)\dif{B_t},\quad X_0 =0
\end{align}
Then the RAM and AM loss can be both be equivalently restated in terms of $\mathcal{L}$
\begin{align*}
    \mathcal{L}_{\text{RAM}}(u) = \mathcal{L}(u; \tilde u),\quad \mathcal{L}_{\text{AM}}(u) = \mathcal{L}(u,\bar u)\quad s.t. \quad \tilde u = \mathcal{P}(\bar u),\quad ,\bar u = \texttt{stopgrad}(u),
\end{align*}
where RAM is shown by first projecting via $\mathcal{P}$, and then setting this as the reference drift with stop gradient applied.

If in our main algorithm we consider sequentially minimizing the RAM loss where starting from some drift $v_i$ we only populate the buffer $\mathcal{B}$ with samples from $p^{v_i}$, we can model adjoint sampling via the following update procedure:
\begin{align}\label{eq:iterative_AS}
    v_{i+1} = \argmin_{u} \mathcal{L}(u; \tilde v_i),\quad s.t. \quad \tilde v_i = \mathcal{P}(v_i)
\end{align}
with this iterative formulation, we can actually re-frame Adjoint Sampling in terms of the stationary condition of the Adjoint matching loss derived in~\cite{domingoenrich2024adjoint}. To show this, we first consider the \emph{first variation} of $\mathcal{L}$ with respect to the first argument which is defined by
\begin{align*}
\delta \mathcal{L}(u; v) (\eta) := \frac{\dif{}}{\dif{\varepsilon}} \mathcal{L}(u+\varepsilon \eta; v)|_{\varepsilon=0}= \mathbb{E}_{\fX \sim p^v}\left[\int^1_0 \langle u(X_t, t) + \sigma(t) \nabla g(X_1), \eta(X_t,t) \rangle  \dif{t}\right]
\end{align*}
the first-order necessary condition of an extremum of this cost functional is that $\delta \mathcal{L}(u;v) \equiv 0$ which is to say
\begin{align}
\delta \mathcal{L}(u; v) (\eta) =
\mathbb{E}_{\fX \sim p^v}\left[\int^1_0 \langle u(X_t, t) + \sigma(t) \mathbb{E}_{p^v}\left[\nabla g(X_1)\mid X_t\right], \eta(X_t,t) \rangle \dif{t} \right]=
0,\quad \forall \eta \in \mathcal{C}(\mathbb{R}^d \times [0,1],\mathbb{R})
\end{align}
We know that if $p^v$ has support over $\mathbb{R}^d$ and $\mathbb{E}_{p^v}\left[\int^1_0 \langle \frac{\delta \mathcal{L}}{\delta u}(X_t,t), \eta(X_t,t) \rangle  \dif{t}\right] = 0$ for all $\eta$, then it must be that $\frac{\delta \mathcal{L}}{\delta u}(u;v)(x,t)=0$ almost everywhere. This is the functional derivative with respect to $u$ which satisfies
\begin{align}
\frac{\delta \mathcal{L}}{\delta u}(u;v) (x,t) := u(x, t) + \sigma(t) \mathbb{E}_{p^{v}}\left[\nabla g(X_1)\mid X_t = x \right] = 0,\quad a.e.
\end{align}
In view of this general necessary condition, we can identify the corresponding necessary condition for $\mathcal{L}_{\text{AM}}(u)$.
\begin{align}\label{eq:AM_first_variation}
    \frac{\delta \mathcal{L}_{\text{AM}}}{\delta u} (u)(x,t) := \frac{\delta \mathcal{L}}{\delta u} (u;\bar u)(x,t) = u(x, t) + \sigma(t) \mathbb{E}_{p^{\bar u}}\left[\nabla g(X_1)\mid X_t = x \right] = 0,\quad a.e.
\end{align}
This coincides with the functional derivative derived in \cite{domingoenrich2024adjoint}, where it was observed that the critical points of Adjoint Matching are exactly when \eqref{eq:AM_first_variation} is zero. Furthermore, it was determined that the only critical point is the optimal control $u^*$ to the SOC problem~\eqref{eq:soc}.
We can now state \Cref{thm:adjoint_sampling} more precisely and provide a proof in the following Theorem.

\begin{theorem}[Adjoint Sampling] \label{thm:adjoint_sampling_formal}

Suppose we have a feedback $u_i$. Then performing the iteration~\eqref{eq:iterative_AS} to obtain $u_{i+1}$ is equivalent to the iteration
\begin{equation}
        u_{i+1} = \P(u_i) - \frac{\delta \mathcal{L}_{AM}}{\delta u}(\P(u_i)),
\end{equation}
Moreover, $u$ is a fixed point satisfying $u = \mathcal{P}(u)$ and $u=u-\frac{\delta \mathcal{L}_{AM}}{\delta u}(u)$ if and only if $u$ is a critical point of $\mathcal{L}_{\text{AM}}$ (i.e. it
    is the optimal control $u^*$ to \eqref{eq:soc}).
\end{theorem}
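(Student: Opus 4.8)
The plan is to unpack the definition of the iteration \eqref{eq:iterative_AS} and show that minimizing $\mathcal{L}(u;\tilde v_i)$ over $u$, with $\tilde v_i = \mathcal{P}(v_i)$ fixed, produces exactly the control whose drift is $\sigma(t)$ times the negative conditional expectation of $\nabla g(X_1)$ under the \emph{projected} reference process $p^{\tilde v_i}$. First I would argue that $\mathcal{L}(u;\tilde v_i)$ is a strictly convex functional in $u$ (it is a squared $L^2$-type objective over a fixed measure $p^{\tilde v_i}$, since the sampling distribution does not depend on $u$ after the stop-gradient / reference substitution), so its unique minimizer is characterized by the first-order stationarity condition $\frac{\delta \mathcal{L}}{\delta u}(u;\tilde v_i) \equiv 0$ almost everywhere, which by the computation already carried out in the excerpt reads
\begin{equation*}
    u_{i+1}(x,t) = -\sigma(t)\, \mathbb{E}_{p^{\tilde v_i}}\!\left[\nabla g(X_1)\mid X_t = x\right].
\end{equation*}
Then I would rewrite this using the identity $v - \frac{\delta \mathcal{L}_{\mathrm{AM}}}{\delta v}(v)(x,t) = -\sigma(t)\,\mathbb{E}_{p^{v}}[\nabla g(X_1)\mid X_t=x]$, which is immediate from \eqref{eq:AM_first_variation}: the right-hand side of the stationarity condition for $u_{i+1}$ is precisely $\tilde v_i - \frac{\delta \mathcal{L}_{\mathrm{AM}}}{\delta u}(\tilde v_i)$ evaluated at $\tilde v_i = \mathcal{P}(v_i)$. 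This gives the claimed update $u_{i+1} = \mathcal{P}(u_i) - \frac{\delta \mathcal{L}_{\mathrm{AM}}}{\delta u}(\mathcal{P}(u_i))$.

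**Fixed point analysis.** For the equivalence, suppose $u$ satisfies both $u = \mathcal{P}(u)$ and $u = u - \frac{\delta \mathcal{L}_{\mathrm{AM}}}{\delta u}(u)$. The second equation forces $\frac{\delta \mathcal{L}_{\mathrm{AM}}}{\delta u}(u) \equiv 0$, i.e. $u$ is a critical point of $\mathcal{L}_{\mathrm{AM}}$; by the result of \citet{domingoenrich2024adjoint} invoked just above the theorem, the only such critical point is the SOC optimum $u^*$. Conversely, if $u = u^*$, then $\frac{\delta \mathcal{L}_{\mathrm{AM}}}{\delta u}(u^*) = 0$ gives the second fixed-point equation, and it remains to check $u^* = \mathcal{P}(u^*)$; this follows because by \Cref{prop:min_DKL} the optimal process already satisfies $p^{u^*}_{t,1} = \pbase_{t|1}(X_t\mid X_1)\mu(X_1) = \pbase_{t|1}(X_t\mid X_1)p^{u^*}_1(X_1)$, so $p^{u^*}$ is already in the Reciprocal class $\mathcal{R}(\mathbb{B}_\nu)$ and the Reciprocal projection acts as the identity on it.

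**Main obstacle.** The routine parts are the convexity/stationarity argument and the algebraic rewriting via \eqref{eq:AM_first_variation}. The subtler point that I would be careful about is the interchange between the ``path-measure'' Reciprocal projection $\mathcal{P}_{\mathcal{R}}$ of \citet{shi2024diffusion} and the ``drift'' projection $\mathcal{P}$ defined in \eqref{eq:Reciprocal-soc}: one must be sure that $p^{\mathcal{P}(v)} = \mathcal{P}_{\mathcal{R}}(p^v)$, i.e. that the minimizer of the Schrödinger-bridge problem \eqref{eq:Reciprocal-soc} really is the drift of the reciprocal-projected measure, and that such a drift exists and is locally Lipschitz so that $p^{\mathcal{P}(v)}$ is well defined and admits the conditional expectations appearing in the stationarity condition. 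This is where regularity assumptions on $v$, $g$, and $\sigma$ enter, and I would state them explicitly (or cite the corresponding conditions in \citet{shi2024diffusion} and \citet{domingoenrich2024adjoint}) rather than belabor the measure-theoretic details. A secondary care point is that the first-variation derivation in the excerpt implicitly uses $\mathbb{E}_{p^v}[\nabla g(X_1)\mid X_t = x]$ being well-defined $p^v_t$-a.e. and that $p^v_t$ has full support, which again should be folded into the standing hypotheses.
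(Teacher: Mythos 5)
Your proposal is correct and follows essentially the same route as the paper's proof: pointwise minimization of $\mathcal{L}(u;\mathcal{P}(u_i))$ yields $u_{i+1}(x,t) = -\sigma(t)\,\mathbb{E}_{p^{\mathcal{P}(u_i)}}[\nabla g(X_1)\mid X_t=x]$, which is then rewritten via the functional derivative identity \eqref{eq:AM_first_variation}, and the fixed-point equivalence is argued exactly as in the paper (the second equation forces the AM critical point, uniqueness gives $u^*$, and $u^*=\mathcal{P}(u^*)$ since the optimal process already lies in the Reciprocal class). Your additional remarks on convexity, the drift-versus-path-measure projection identification, and the required regularity are reasonable elaborations of points the paper leaves implicit, but do not change the argument.
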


\begin{proof}
By minimizing $\mathcal{L}(u;\tilde u_i)$ point-wise for a fixed $\tilde u_i = \mathcal{P}(u_i)$, we know the optimal iterates take the form
\begin{align}
    u_{i+1}(x,t) &= -\sigma(t) \mathbb{E}_{p^{\tilde u_i}}\left[ \nabla g(X_1) \mid  X_t = x \right].
\end{align}
Recalling the Adjoint Matching loss $\mathcal{L}_{\text{AM}}(u) = \mathcal{L}(u;\bar u)$ we can identify its functional derivative at $\mathcal{P}(u_i)$ that stems from first variation to be
\begin{align}
    \frac{\delta \mathcal{L}_{AM}}{\delta u}(\P(u_i)) = \mathcal{P}(u_i) + \sigma(t) \mathbb{E}_{p^{\mathcal{P}(u_i)}}\left[ \nabla g(X_1) \mid  X_t = x \right].
\end{align}
Through a simple algebraic manipulation, we can arrive at a new expression for the iterate given by
\begin{align*}
    u_{i+1}(x,t) = \mathcal{P}(u_i)(x,t) - \frac{\delta \mathcal{L}_{AM}}{\delta u}\left(\P(u_i)\right)(x,t).
\end{align*}
We will now show that $v$ is a fixed point of the Adjoint Sampling iteration and $\mathcal{P}(v) = v$ if and only if $v$ is a critical point of $\mathcal{L}_{\text{AM}}$ (and hence $v=u^*$). First of all, if $v$ is a critical point of $\mathcal{L}_{\text{AM}}$, then it must be $v=u^*$~\citep{domingoenrich2024adjoint} and $\frac{\delta \mathcal{L}_{AM}}{\delta u}(v)= 0$. Because $v=u^*$ is the optimal solution of the SOC loss, it is its own projection and so $\mathcal{P}(u^*) - \frac{\delta \mathcal{L}_{AM}}{\delta u}(\P(u_i))(u^*) = u^*$. Therefore $v$ is a fixed point of the Adjoint Sampling iteration. 

On the other hand, if $v$ is a fixed-point of the Adjoint Sampling iteration and $\mathcal{P}(v)=v$, our iteration formula tell us that $\frac{\delta \mathcal{L}_{AM}}{\delta u}(v) = 0$ and so $v$ is a critical point of $\mathcal{L}_{\text{AM}}$ and that $v=u^*$ necessarily.
\end{proof}

There are some important  practical difference between this statement and what we do in practice. In practice, we optimize over a finite size buffer $\mathcal{B}$, where the iterations discussed here in~\Cref{thm:adjoint_sampling_formal} assumes we sample from $p^{u_i}_1$. Furthermore, we do not need to exactly solve the RAM step to convergence in practice and only partially optimize for a fixed number of iterations before collecting new samples for $\mathcal{B}$. This may be desirable as it will have a smoothing affect on our functional gradient descent of~\Cref{thm:adjoint_sampling_formal} which tracks a large unit step-size gradient update.

\section{Bridge Matching and Pretraining}\label{app:bridge_matching}

There may be situations where we have some initial data distribution $p^{\text{data}}$, but want to improve our generated samples even further using an energy function. We can in fact pretrain our control drift $u_\theta$ on this data using \emph{Bridge Matching}~\citep{peluchetti2023diffusion,peluchetti2023non,shi2024diffusion,liu2023generalized,liu20232,somnath2023aligned,ma2024sit,chen2024probabilistic}, which can be similarly formulated as a controlled SDE diffusing from $X_0 = 0$ and switch to Adjoint Sampling from the energy without changing the optimal distribution.

Specifically, given our controlled stochastic process~\Cref{eq:controlled_sde}, we can write down the following Bridge Matching loss to regress onto
\begin{align}
\int^1_0 \lambda(t) \mathbb{E}_{\pbase_{t|1}(X_t|X_1)p^{\text{data}}(X_1)} \lVert \sigma(t) u_\theta(X_t,t) -  \sigma(t)^2 \nabla_{X_t} \log \pbase_{1|t}(X_1 | X_t)\rVert^2 \mathrm{d} t,
\end{align}
where for general base process $\dif{X_t} = \sigma(t)\dif{B_t}$, we have
\begin{align*}
    \pbase_{1|t}(x |X_t) = \mathcal{N}(x | X_t, \nu_{1|t} I),\quad \nu_{1|t} = \int^1_t \sigma(s)^2 \dif{s}.
\end{align*}
Therefore our score target is
\begin{align*}
    \sigma(t)^2 \nabla_{X_t} \log \pbase_{1|t}(X_1|X_t)
    = \sigma(t)^2 \nabla_{X_t} \left( -\frac{1}{2 }\frac{(X_1- X_t)^2}{\nu_{1|t}} + \text{const}\right) 
    =  \sigma(t)^2 \frac{X_1 - X_t}{\nu_{1|t}}.
\end{align*}
Additionally, we apply a scaling $\lambda(t) := \frac{\sqrt{\alpha(t)}}{\sigma(t)}$ to the loss so that the target is $(X_1 - X_t)$ which improves numerical stability, resulting in the following Bridge Matching objective.
\begin{align}
    \mathcal{L}_{\text{BM}}(\theta)=\int^1_0 \mathbb{E}_{\pbase_{t|1}(X_t|X_1)p^{\text{data}}(X_1)} \left\lVert \frac{\nu_{1|t}}{\sigma(t)}u_\theta(X_t,t) -  (X_1 - X_t) \right\rVert^2 \mathrm{d} t
\end{align}

\section{Synthetic Energy Experiment Details}\label{app:synthetic_energy}

\subsection{Double Well Potential (DW-4)}
We use the same double-well potential as in iDEM~\citep{akhounditerated}, which was originally proposed in~\citet{kohler2020equivariant}. DW-4 describes a pair-wise distance potential energy for a system of 4 particles $\{x_1, x_2, x_3, x_4 \}$, where each particle has 2 spatial dimensions $x_i \in \mathbb{R}^2$ $(d=8)$. The potentials analytical form is given by:
\begin{equation}
    E(x) = \frac{1}{\tau}\sum_{ij} a (d_{ij} - d_0) + b (d_{ij} - d_0)^2 + c (d_{ij} - d_0)^4,\quad d_{ij} = \lVert x_i - x_j \rVert_2
\end{equation}
where we set $a=0$, $b=-4$, $c=0.9$ and temperature $\tau = 1$.
\subsection{Lennard-Jones Potential (LJ-13, LJ-55)}
Similarly to DW-4, the Lennard-Jones potential is also based pair-wise distances of a system with $n$ particles, but each each particle has 3 spatial dimensions. Its analytical form is given by
\begin{equation}
    E^{\text{LJ}}(x) = \frac{\epsilon}{\tau} \sum_{ij} \left( \left(\frac{r_m}{d_{ij}}\right)^6 - \left(\frac{r_m}{d_{ij}}\right)^{12} \right),\quad  d_{ij} = \lVert x_i - x_j \rVert_2 
\end{equation}
where \( r_m, \tau, \epsilon \) and \( c \) are physical constants. As in \citet{kohler2020equivariant} and \citet{akhounditerated}, we add the additional a harmonic potential:

\begin{equation}
    E^{\text{osc}}(x) = \frac{1}{2} \sum_i \| x_i - x_{\text{COM}} \|^2
\end{equation}

where \( x_{\text{COM}} \) refers to the center of mass of the system. Therefore, the final energy is then \( E^{\text{Tot}} = E^{\text{LJ}}(x) + c E^{\text{osc}}(x) \), for \( c \) the oscillator scale. As in previous work, we use \( r_m = 1, \tau = 1, \epsilon = 1 \) and \( c = 1.0 \).

\subsection{Architectures and Hyper parameters}
In this section we give a brief overview of the parameters used to train Adjoint Sampling on the synthetic energy functions. For all experiments, we an EGNN architecture similar to~\citet{akhounditerated}. Results for iDEM on all experiments are reproduced using the same configurations found in~\citet{akhounditerated}.

\textbf{DW-4: }
For the DW-4 energy we use an EGNN with 3 layers and 128 hidden features. We train Adjoint Sampling for 1000 outer iterations, generating 512 new samples and energy evaluations per iteration into a buffer of max size 10000. In each iteration we optimize on 500 batches of batch size 512 from the replay buffer, using a learning rate of $3\times 10^{-4}$. We use a geometric noise schedule with $\sigma_{min} = 10^{-4}$ and $\sigma_{max} = 3.0$.

\textbf{LJ-13: }
For the LJ-13 energy we use an EGNN with 5 layers and 128 hidden features. We train Adjoint Sampling for 1000 outer iterations, generating 1024 new samples and energy evaluations per iteration into a buffer of max size 10000. In each iteration we optimize on 500 batches from the replay buffer of batch size 512, using a learning rate of $3\times 10^{-4}$. We use a geometric noise schedule with $\sigma_{min} = 10^{-3}$ and $\sigma_{max} = 3.0$.

\textbf{LJ-55: }
For the LJ-55 energy we use an EGNN with 5 layers and 128 hidden features. We train Adjoint Sampling for 1000 outer iterations, generating 128 new samples and energy evaluations per iteration into a buffer of max size 10000. In each iteration we optimize on 500 batches from the replay buffer of batch size 512, using a learning rate of $3\times 10^{-4}$. We use a geometric noise schedule with $\sigma_{min} = 0.5$ and $\sigma_{max} = 3.0$.

\subsection{Reported Metrics}\label{app:w2_metric}

\paragraph{Geometric $\mathcal{W}_2$}

Because our energy and diffusion process is invariant to rotational and permutations symmetries, we also take these symmetries into account when measuring distance between generated and ground truth point clouds (\eg, from long run MCMC). Note this $\mathcal{W}_2$ metric is different from what is reported in~\citet{akhounditerated}, which uses the euclidean metric.

\begin{equation}
    \mathcal{W}_2(\mu, \nu) = \left( \inf_{\pi} \int \pi(x,y) d(x,y)^2 \,\dif{x}\dif{y} \right)^{\frac{1}{2}}
\end{equation}

where \( \pi \) is the transport plan with marginals constrained to \( \mu \) and \( \nu \) respectively. Here the distance metric $d$ takes into account all point-cloud symmetries, and is obtained by minimizing the squared Euclidean distance over all possible combinations of rotations, reflections ($O(d)$), and permutations ($\mathbb{S}(k)$) for $k$ particles of spatial dimension $d$.

\begin{equation}
    d(x_0, x_1) = \min_{R \in O(d), P \in \mathbb{S}(k)} \| x_0 - (R\otimes P) x_1 \|_2^2.
\end{equation}

However, computing the exact minimal squared distance is computationally infeasible in practice. Therefore, we adopt the approach of~\citet{kohler2020equivariant} and approximate the minimizer by performing a sequential search

\begin{equation}
    d(x_0, x_1) \approx \min_{R \in SO(d)} \| x_0 - (R\otimes \tilde{P}) x_1 \|_2^2, \quad 
    \tilde{P} = \arg\min_{P \in \mathbb{S}(k)} \| x_0 - P x_1 \|_2^2.
\end{equation}

\paragraph{Energy $\mathcal{W}_2$ ($E(\cdot)\,\mathcal{W}_2$)} 
An informative way to assess the quality of the generated samples is to look at their energy distribution with respect to the ground truth distribution obtained from energies of long-run MCMC simulations. This shows how well the generated samples are avoiding high-energy regions. Specifically, we use the Wasserstein-2 ($\mathcal{W}_2$) distance on the 1-dimensional energy distribution on $\mathbb{R}$. This results in the standard euclidean $\mathcal{W}_2$ metric

\begin{equation}
\mathcal{W}_2(E_{\mu}, E_{\nu}) = \left( \inf_{\pi} \int \pi(x,y)|x - y|^2 \dif{x}\dif{y} \right)^{\frac{1}{2}},
\end{equation}

\paragraph{Path effective sample size (path-ESS).}

We use importance weights over paths $\fX \sim p^u$~\citep{zhang2022path},
\begin{equation} \label{eq:importance_weights}
    w(\fX) = \frac{\mathrm{d} p^*(\fX)}{\mathrm{d} p^u(\fX)} =\frac{\mu(X_1)}{\pbase_1(X_1)}\frac{\dif \pbase(\fX)}{\dif p^u(\fX)}= \exp\left( -\frac{1}{2}\int_0^1  \norm{u(X_t, t)}^2 \dif{t} - \int^1_0 u_t(X_t, t)^\top \mathrm{d} B_t - g(X_1) \right).
\end{equation}
As these are unnormalized weights, we use the normalized effective sample size:
\begin{equation}
    \text{ESS} = \frac{1}{n\mathbb{E}_{\fX \sim p^u}[w(\fX)]} \approx \frac{\left(\sum_{i=1}^n w_i\right)^2}{n\sum_{i=1}^n w_i^2 },
\end{equation}
estimated over $n$ samples $\fX_i \stackrel{iid}{\sim} p^u$ with weights $w_i = w(\fX_i)$. Additionally, this ESS is normalized by the number of samples $n$ and takes a value in $[0,1]$.

\section{Sampling Molecular Conformers}
\label{app:conformers_and_torsions}

Conformer generation is a fundamental task in computational chemistry with applications from drug discovery to catalyzing particular reactions of interest. In the context of this paper, conformers represent different stable states, i.e. local minima on the potential energy surface, of the same molecule. The number of conformers a molecule has is generally proportional to the number of rotatable bonds present in the molecule. For example, large molecules with lots of rotatable bonds have many conformers. A bond is considered rotatable if there is a sufficiently low energy barrier to rotation and the bond is attached to non-terminal heavy atoms \citep{veber2002molecular}. We define a specific substructure to match this definition in \Cref{app:torsion_sampling} and use it to determine the torsion angles sampled by our torsion model. However, we apply a slightly different, more restrictive, definition to classify molecules into strata based on the standard definition from RDKit in \Cref{app:data_prep}.

In many molecules and polymers torsion angles are particularly important degrees of freedom for sampling conformations. A molecule has $3k - 6$ degrees of freedom corresponding to relevant atomic motion. This quantity subtracts global translation (3 degrees) and orientation in space (3 degrees), which are both irrelevant to the intrinsic molecule. The remaining degrees of freedom can be broken down into bond lengths (pairs of atoms), bond angles (3-tuples containing two atoms bonded to one common neighbor), and torsion angles (4-tuples containing a central bond and two atoms connected at either end). The energy required to stretch a bond or distort a bond angle is typically much higher than rotating a torsion angle. As a result, torsion angles are the most important degrees of freedom for determining the distribution of possible molecular states, or the Boltzmann distribution, and only allowing torsion angles to rotate is a common approximation used in many algorithms. 

\paragraph{Relaxation}
Conformers represent local minima in the energy landscape; therefore, we are interested in descent algorithms to refine structures generated by Adjoint Sampling or RDKit to reach those minima. We apply the same energy model, eSEN, that we utilized to train our drift estimator on energy to define an optimization target. Doing gradient descent on energy using forces is called relaxation. In our relaxations, we use the LBFGS algorithm and bound the maximum force norm $F_{\max} \coloneqq \max_{i} \lVert \nabla_{x_{i}} E(\tilde{X_{1}})\rVert$ on any of the $k$ atoms as a stopping criteria, where $\tilde{X_{1}}$ is the latest structure in the descent sequence. We stop when $F_{\max} \leq 0.0154$ eV/\AA\xspace or at 200 descent iterations, whatever comes first.

\subsection{Sampling the target molecule in Cartesian coordinates}
\label{subsec:cartesian_sampling}

\paragraph{Bond Structure Energy Regularization}
Although we condition on the molecular graph, our drift control estimator aims to approximate gradient of the energy function. During the evolution of our stochastic trajectories we may wander through energy troughs that have alternative connectivity than that of our target molecule. Since we aim to generate conformers of a specific target molecule, we need to avoid this. We do so by adding a regularizer to the energy. This regularizer assigns atoms particular bond / no bond lengths based on empirically known interatomic radii and repulsive van der Waals radii.
\begin{equation}
    \tilde{E} (x) = \frac{1}{\tau} E(x) + \alpha E^{\text{reg}}(x)
\end{equation}
where $\alpha > 0$ is regularizer scaling constant. The regularizer energy function $E^{\text{reg}} : \mathbb{R}^d \rightarrow [0,\infty)$ is designed to be a ``flat-bottom'' potential having the property that 
\begin{equation}
    E^{\text{reg}}(x) = 0 \Longleftrightarrow x \in \mathcal{S}
\end{equation}
where $\mathcal{S}$ is the desired molecular structure class. In practice, we slightly relax this to be $x \in \mathcal{S} \implies E^{\text{reg}}(x) = 0$, for training stability and due to our inability to exactly characterize $\mathcal{S}$. 

In practice, we determine an upper-bound distance using well-established empirical covalent radii between atoms with bonds, and lower-bound distances using van der Waal radii between atoms without bonds. We then penalize any pairwise distances which violate these bounds taking into account a relaxation factor $\gamma \geq 1$. With the adjacency matrix $A=(a_{ij})$ and atom type attribute vector $H=(h_i)$, we can determinine the molecule structure class $\mathcal{S}(A, H)$. With accompanying interatomic bond limit functions $d_{\text{bond}}(h_i, h_j)$ and $d_{\text{no-bond}}(h_i, h_j)$, we compute the regularizer iterating over all edges as follows:
\begin{align}
    E^{\text{reg}}(x, A, H; \gamma) = &\sum_{i,j \in I_{\text{bond}}}\max\left\{||x_i - x_j||_2-\gamma \cdot d_{\text{bond}}(h_i, h_j),  0\right\}\\
    + &\sum_{k,l\in I_{\text{no-bond}}}\max\left\{\frac{1}{\gamma}\cdot d_{\text{no-bond}}(h_k, h_l)-||x_k - x_l||_2,  0\right\}
\end{align}
where $I_{\text{bond}}$ and $I_{\text{no-bond}}$ are the index pairs of $A$ corresponding to a bond or no-bond between atoms respectively. Assuming accurate interatomic radiis, our regularizer satisfies $x\in \mathcal{S}(A,H) \implies E^{reg}(x,A,H;\gamma) = 0$ for all $\gamma \geq 1$.
\subsection{Torsion Angles and Torques}
\label{app:torsion_sampling}
\begin{figure}[hbt]
    \centering
    \begin{subfigure}[m]{0.45\textwidth}
        \includegraphics[width=\textwidth]{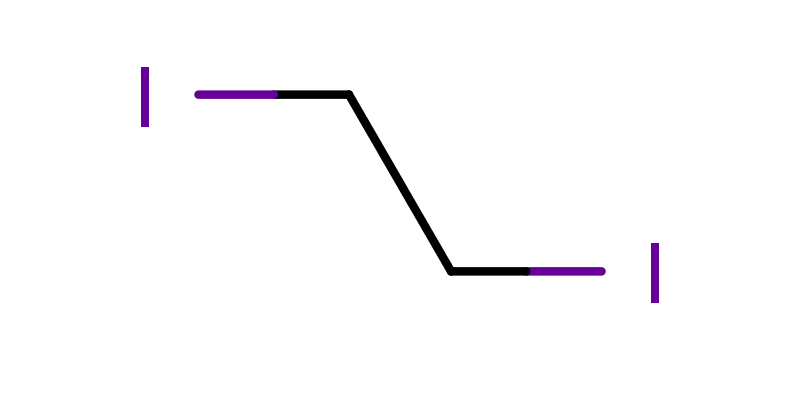}
    \end{subfigure}
    \hfill
    \begin{subfigure}[m]{0.45\textwidth}
        \includegraphics[width=\textwidth]{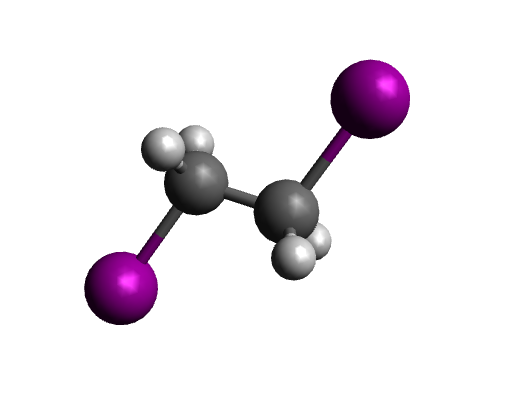}
    \end{subfigure}
    \caption{Multiple representations of the molecule \emph{diiodoethane} with SMILES string ICCI. (Left) Molecular graph representation. (Right) 3D coordinates representation. We chose the torsion angle as I-C-C-I, where I indicates a purple iodine atom and C indicates a gray carbon atom. We identify the torsions with a 4-tuple of indices, selecting the heaviest atoms on either side of the central bond as the first and last members. Ties are broken by arbitrary atomic index. We take the zero dihedral angle to be when first and last members of the 4-tuple are close, the so-called \emph{cis-}isomer. Following \citet{jing2022torsional}, we adjust the dihedral angle by rotating all the atoms at one end of the torsion about the bond axis, leaving the remaining atoms in place. This can be described as a torque pointing along the bond axis. When the torque would asymmetrically act on the molecule, we define the positive direction towards the side of the molecule with more atoms.}
    \label{fig:icci_representations}
\end{figure}

\paragraph{Representing Torsion Angles and Torques}
The orientation around a given bond can be quantified with a torsion angle. Specifically, a torsion angle is defined between four atoms $(a,b,c,d)$, where the torsion angle is the angle between the planes formed by $a,b,c$ and $b,c,d$, which are rotated around the central $bc$ bond. The choice of atoms $a$ and $d$ are typically not unique, thus a different choice of $a$ or $d$ can lead a different torsion angle to describe the same molecular conformation. Our convention is that $a$ and $d$ correspond to the heaviest atoms on either side of the central bond $bc$. We break any ties by arbitrary existing atomic index. Next, we represent torsion angles symbolically\ldots Assuming that atoms $a,b,c,d \in \mathbb{R}^{3}$ are positions in Cartesian space, we define a few vectors that lie along undirected bond axes: 
\begin{align}
    dx_{1} &\coloneqq \overrightarrow{ab} \coloneqq b - a, & dx_{2} &\coloneqq \overrightarrow{bc} \coloneqq c - b, & dx_{3} &\coloneqq \overrightarrow{cd} \coloneqq d - c.
\end{align}
Those vectors allow us to define torsion angle as follows:
\begin{align}
    \label{eqn:torsion_angle}
    \varphi(dx_1, dx_2, dx_3) &\coloneqq \operatorname{atan2}\left(dx_{2} \cdot \left(\left(dx_{1} \times dx_{2}\right) \times \left(dx_{2} \times dx_{3}\right)\right), |dx_{2}| \left(dx_{1} \times dx_{2}\right) \cdot \left(dx_{2} \times dx_{3}\right)\right).
\end{align}
We define the function $\mathcal{T} \colon \mathbb{R}^{4 \times 3} \to \mathbb{R}$ that takes in four column vectors of atomic positions $(a, b, c, d)$ and outputs a torsion angle by \eqref{eqn:torsion_angle}, \ie,  $\mathcal{T} \colon (a,b,c,d) \mapsto \varphi(dx_1(a, b), dx_2(b, c), dx_3(c, d))$.

Following \citet{jing2022torsional}, we can adjust a torsion angle by rotating \emph{all the atoms} on one side of the $bc$ bond about the bond axis $\overrightarrow{\text{bc}}$. This can be described as a torque $\mathfrak{t}$ in the standard \emph{angle-axis representation} where the axis points along $\overrightarrow{\text{bc}}$. A positive sign indicates positive rotation according to the right-hand-rule and the magnitude indicates the rotation's effect on the existing torsion angle. The action of the rotation $R(\mathfrak{t})$ on our system $X_t$ with constituent atoms $x_{i} \in \mathbb{R}^3$, can be written as follows:
\begin{align}
\label{eqn:torsion_rotation}
R(\mathfrak{t}) x_i = 
\begin{cases}
  R'(\mathfrak{t}) (x_i - b) + b  & \text{if } i \in \mathfrak{C}(c) \\
  x_i & \text{otherwise}
\end{cases},
\end{align}
where $R'(\mathfrak{t})$ is the rotation matrix defined by the axis-angle representation of $\mathfrak{t}$, and $\mathfrak{C}(c)$ denotes the connected component of $c$ in an identical graph to the molecule \emph{except} with bond $bc$ \emph{removed}! $\mathfrak{C}(c)$ therefore denotes one ``half,'' or ``side,'' of the molecule when hypothetically divided at bond $bc$.

Since every torque points along $\overrightarrow{\text{bc}}$ by design, we need a convention for ordering torsion angle 4-tuples to determine the direction of rotation. We choose atom $c$ to sit on the side of the $bc$ bond with more atoms \citep{jing2022torsional}. To be more specific, if one broke bond $bc$, there would be two connected components $\mathfrak{C}(b)$ and $\mathfrak{C}(c)$. We choose $c$ such that $\lvert \mathfrak{C}(c) \rvert > \lvert \mathfrak{C}(b) \rvert$. (This is the opposite convention to the definition \citet{jing2022torsional} write in their paper, but the same as in their code.) Given ties, we use the current arbitrary index.

\paragraph{Finding torsional degrees of freedom with RDKit}
Sampling conformations by sampling torsion angles requires identifying the torsional degrees of freedom in our target molecule. We use the package RDKit \citep{landrum2013rdkit} to find and represent the 4-tuples for those degrees of freedom. We look for matches to the following substructure: (1) two atoms must be connected by a bond, (2) neither end of that bond can be aromatic, (3) neither end of that bond is merely connected only to a single hydrogen, and (3) only one end of the bond can be part of a ring. Such a substructure can be encoded in a SMARTS string, specifically \texttt{[!\$(*\#*)\&!D1]-\&!@[!\$(*\#*)\&!D1]}.
This matching can be done on directly on the SMILES string of a molecule.

Substructure matching identifies the torsional degrees of freedom with a list of 4-tuples containing atoms in every position. Recall, this representation is not unique, as described above! Furthermore, this search technique typically finds more torsional degrees of freedom than RDKit returns when querying the number of rotatable bonds. RDKit has a specific definition of rotatable bond as defined implicitly in \texttt{CalcNumRotatableBonds}.

\paragraph{Considerations for torsion angles geometry}
We model a stochastic differential equation in the torsion angle space where we can reason with the base distribution more practically and the dimensionality is reduced. However, this comes with a few additional considerations. For this section, consider a molecule with $d_{\text{rot}}$ torsional degrees of freedom.
Due to the periodic boundary conditions on torsion angles, the space is modeled as a flat torus $\mathbb{D} \coloneqq \left[-\pi,\pi\right)^{d_{\text{rot}}}$ where the extrema are identified with zero displacement due to the torus distance function, or so-called logmap. Let $D_{\text{rot}} = \{1, \ldots, d_{\text{rot}} \}$ denote the index set for each torsion angle.

We use a Dirac source distribution centered at $\varphi=0$ with $\varphi \in \mathbb{D}$ at $t=0$. This choice of origin in $\mathbb{D}$ must have a corresponding unique position in cartesian space. Given atoms $(a_i, b_i, c_i, d_i)$ that are constituents of torsion angle $\varphi_i \in \left[-\pi,\pi\right)$ with $i \in D_{rot}$, we choose $\varphi_i=0$ to be the state in Cartesian space where $a_i,d_i$ are closest in terms of Euclidean distance, \ie,
\begin{equation}
    \mathcal{T}(a_i, b_i, c_i, d_i) = 0 \iff d_i = \argmin \lVert a_i - d_i \rVert_{2}.
\end{equation}
The definition is the same for all $i \in D_{rot}$. We rely on intuition and ignore the details to prove that this choice is unique, and that the distance between $a_i$ and $d_i$ is the only necessary component to specify the torsion angle $\varphi_i$.

We now construct the molecule $X_0$ in cartesian space that has $\varphi_i = 0$ for all $i \in D_{rot}$. We first embed the molecular graph in cartesian space to an arbitrary, approximate conformer using RDKit's \texttt{EmbedMolecule} (or \texttt{EmbedMultipleConfs} with \texttt{numConfs=1}). Once embedded, we can identify atoms that are constituents of torsion angles, $(a_i, b_i, c_i, d_i)$ with $i \in D_{rot}$, using \texttt{GetSubstructMatches} and determine the current torsion angles using $\mathcal{T}$. Given any atomic configuration $X$ created using this method, we can reach $X_0$ by iteratively transforming $X$ using $R(-\mathcal{T}(a_i, b_i, c_i, d_i))X$ for every $i$. Iterative application of this transformation is typical \citep{jing2022torsional} in other work and will bring us to $\varphi_i=0$ for all torsion angles. This is the starting point for Torsional Adjoint Sampling! By setting the torsion angles to zero, but retaining the atom types, molecular topology, bond lengths, and bond angles, we have an initial sample to transform with stochastic control. 

Our SDE evolves in $\mathbb{D}$ starting at $\varphi_0=0$ and ending at $\varphi_1$ with the standard behavior of an SDE on a flat torus. However, our drift estimator of the torque is a function of $X_t$, implying we need to update the $X_t$ positions as $\varphi_t$ evolves under the SDE. Given drift estimate $\hat{\mathfrak{t}}(X_t)$ from our model, we update positions $X_{t+1} = \Pi_{i\in D_{\text{rot}}} R^{i}_{X_t}(\hat{\mathfrak{t}}_i + \epsilon_i) X_{t}$ with $\epsilon_i \sim \mathcal{N}(0, \sigma(t)^2 \Delta t)$. $\Delta t$ the increment between steps and $\sigma(t)$ is determined by the geometric noise schedule in \eqref{eqn:geometric_noise_schedule_sigma} as a function of time.

\paragraph{Regression target} 
In Cartesian Adjoint sampling, the regression target is $\frac{1}{\tau} \nabla_{X} g(X_1)$ where the gradient is taken in Cartesian space. For our problem, we need to compute regression target $\frac{1}{\tau} \nabla_{\varphi} g(X_1)$. These two quantities are related by the Jacobian $\frac{d\varphi}{dX}$, namely $\nabla_{\varphi} g(X_1) = \frac{d\varphi}{dX} \nabla_{X} g(X_1)$. We compute this vector-Jacobian product using automatic differentiation of the function $R(\varphi_1)X_0=X_1$ applied to the $\nabla_{X} g(X_1)$ vector determined by our energy model. To clarify, when we clip the gradient for the torsion model, we clip $\nabla_{\varphi} g(X_1)$.

\paragraph{Torsion space posterior}
Adjoint sampling's innovations include fast sampling of points along the control trajectory using a posterior. In the torsion case, we sample $\varphi_t$ from the posterior $\bar{p}_{t \mid 1}^{base}(\varphi \mid \varphi_1)$. This is done by affine transformation of the coordinates $[0, 1]$ of the flat torus distributions from Section~\ref{sec:flat_tori_derivations} to $[-\pi,\pi)$.

\paragraph{Transformation of torque $\mathfrak{t}$}
When we send $X \mapsto -X$, the torque is transformed $\mathfrak{t} \mapsto -\mathfrak{t}$, such quantities are called pseudovectors and their signed magnitudes are called pseudoscalars. Following \citet{jing2022torsional} and the implementation in \citet{e3nn_software}, our networks predict $d_{\text{rot}}$ pseudoscalars without data augmentation.

\subsection{Coverage Recall and Precision Metrics}\label{app:rmsd_metrics}

\textbf{Root Mean Square Deviation (RMSD)} Similarly to \citet{ganea2021geomol} and \citet{jing2022torsional}, we measure the so-called Average Minimum RMSD (AMR) and Coverage (COV) for Precision (P) and Recall (R). When measuring these metrics, we  generate twice as many conformers as provided by CREST reference conformers. For $K = 2L$ let $\{C^*_l\}_{l \in [1,L]}$ and $\{C_k\}_{k \in [1,K]}$ be the sets of ground truth and generated conformers respectively. In our evaluations, we use $L = \text{max}(L', 128)$, where $L'$ is the number of reference conformers given by CREST, taking the lowest energy conformers as a subset. The RMSD metric finds the best average distance between atoms of molecule with respect to the reference molecule, taking into account all possible symmetries.

\textbf{Coverage Recall}
\begin{align}
    \text{COV-R}(\delta) &:= \frac{1}{L} \left| \left\{ l \in \{1,\ldots,L\} : \exists k \in \{1,\ldots, K\}, \quad \text{RMSD}(C_k, C^*_l) < \delta \right\} \right| \\
    \text{AMR-R} &:= \frac{1}{L} \sum_{l \in \{1,\ldots, L\}} \min_{k \in \{1,\ldots, K\}} \text{RMSD}(C_k, C^*_l)
\end{align}
\textbf{Coverage Precision}
\begin{align}
    \text{COV-P}(\delta) &:= \frac{1}{K} \left| \left\{ k \in \{1,\ldots,K\} : \exists l \in \{1,\ldots, L\}, \quad \text{RMSD}(C_k, C^*_l) < \delta \right\} \right| \\
    \text{AMR-P} &:= \frac{1}{K} \sum_{k \in \{1,\ldots, K\}} \min_{l \in \{1,\ldots, L\}} \text{RMSD}(C_k, C^*_l)
\end{align}
where $\delta > 0$ is the coverage threshold.

\subsection{Hyperparameters and Architecture Details for SPICE and GEOM-DRUGS}

\textbf{Cartesian Adjoint Sampling: }
We use an Equivariant Graph Neural Network (EGNN, \citealt{satorras2021n}) with 12 layers and a hidden feature dimension of 128. The model is trained for $5000$ outer-loop iterations, sampling $100$ batches each iteration from the replay buffer. Each GPU maintains its own buffer with max size of 64000 samples and each iterations we generate $128 \times 8$ new molecules and energy evaluations for the buffer across 8 GPUs. The model is trained with a batch size of 64 per GPU and follows a geometric noise schedule with $\sigma_{\min} = 10^{-3}$ and $\sigma_{\max} = 1$. Weight gradient clipping of $10^{20}$ is applied. Additionally we ues the temperature $\tau=5\times 10^{-3}$ and a regularization constant $\alpha=100.0$ for the regularized energy function. The regression target, the temperature-scaled gradient of the energy function is clipped at $\ell^2$ norm 150. We use a step size of $dt = 0.001$ (\ie $1000$ nfe) for Euler-Maruyama SDE integration during training and evaluation.

\textbf{Torsional Adjoint Sampling: }
We use a different equivariant graph neural networked called \texttt{e3nn} \citep{geiger2022e3nn, e3nn_software} with 6 convolutional layers with 32 scalar features and 8 vector features at each layer. There is additionally a specialized layed called a pseudotorque layer by \citet{jing2022torsional} that we use at output to predict the pseudoscalar torque quantities. Our edge and node embeddings have 32 scalar features. The cutoff radius is $5.0$ and we construct 50 Gaussian radial basis functions to embed distances between atoms.
We use 8 GPUs with $1024 \times 8$ initial molecules and each iteration we generate $128 \times 8$ new molecules and evaluate their energy (gradient), which are then added to the GPU buffers.
The model is trained for $3000$ outer-loop iterations, sampling $100$ batches each iteration from the replay buffer with max size 64000. The model is trained with a batch size of 64 and follows a geometric noise schedule with $\sigma_{\text{min}} = \pi \times 10^{-2}$, $\sigma_{\text{max}} = \pi$,  and a ``number of tiles'' truncation parameter of 6 to approximate $\pbase$ on the flat tori (\Cref{app:torsion_sampling}). Weight gradient clipping of $10^{20}$ is applied. Additionally we use the temperature $\tau= 5\times 10^{-3}$ and the direct energy function without any regularization. The regression target, the temperature-scaled gradient of the energy function w.r.t. torsions is clipped at $\ell^2$ norm 200. We use a step size of $dt =  0.01$ (\ie $100$ nfe) for Euler-Maruyama SDE integration during training and evaluation.

\section{Data Preparation for the Conformation Benchmark}
\label{app:data_prep}

Adjoint Sampling, in this context, takes in SMILES strings and outputs conformations. 
Here we explain the splits of the input SMILES strings and representative coordinates samples of conformations. Additionally, we explain the processes we used to produce, relax, and deduplicate our reference atomic conformations with RDKit \citep{landrum2013rdkit}, CREST \citep{pracht2024crest}, and ORCA \citep{orca} by following an extremely similar procedure to that of GEOM-DRUGS \citep{axelrod2022geom}.

We are releasing the data we used in this paper as a challenging \emph{Conformation Benchmark} with the goal of fostering development of scalable, amortized sampling algorithms. The data has three parts:
\begin{itemize}
    \item A training split of 24,477 SMILES strings from SPICE \citep{eastman2023spice}. These define the molecular topology for the target molecule, but do not have any atomic coordinate information. The molecules have differing levels of flexibility with between 0 and 18 rotatable bonds\footnote{Rotatable bonds are defined by \texttt{RDKit}'s function \texttt{CalcNumRotatableBonds} \citep{landrum2013rdkit}.}.
    \item A test split\footnote{The test split is disjoint from the train split and their union is all of SPICE.} of 80 SMILES strings from SPICE divided into groups of 10, each with between 3 and 10 rotatable bonds. We also release 44,448 DFT annotated and geometry optimized conformations.
    \item A test split of 80 SMILES strings from GEOM-DRUGS divided into groups of 10, each with between 3 and and 10 rotatable bonds. We identify 7024 conformations corresponding to those SMILES strings.
\end{itemize}
The details of are described below in \Cref{app:data_prep_spice}. Since we are repackaging data from SPICE and GEOM-DRUGS, our main contributions include (a) the organization of the benchmark and (b) the computational effort of finding 44,448 conformations from our SPICE test split using a similar procedure to that of GEOM-DRUGS.

One utilizes the benchmark by applying a sampling algorithm to find conformers of a given molecular topology and determining whether the algorithm recovers the samples we provided. In \Cref{fig:recall_plot}, we report this as Coverage Recall \% versus RMSD. We additionally provide tabulated values in \Cref{tab:conformer_generation}.
The conformers we provide should be treated as held-out data generated using an effective and expensive standard method. 
We do not claim to have comprehensively sampled the conformation space for these molecules, but we argue that they are a representative sample given modern methods. Furthermore, since we only release local minima, the sample does not come from a Boltzmann distribution at a fixed temperature, \emph{i.e.} the samples are not drawn from a probability distribution. There is no sense of a conformer being more ``likely'' than another in a probabilistic manner in our data. Instead, the benchmark represents a loosely defined notion of sampling by finding many low energy conformations in an attempt to cover the configuration space. 

\subsection{SPICE Dataset}
\label{app:data_prep_spice}
We used the PubChem subset of the SPICE dataset~\citep{eastman2023spice} to train and evaluate our models. This subset includes small, drug-like molecules comprised of between 18 and 50 atoms with elements Br, C, Cl, F, H, I, N, O, P, and S.
\begin{figure}[htb]
    \centering
    \begin{subfigure}[m]{0.48\textwidth}
        \includegraphics[width=\textwidth]{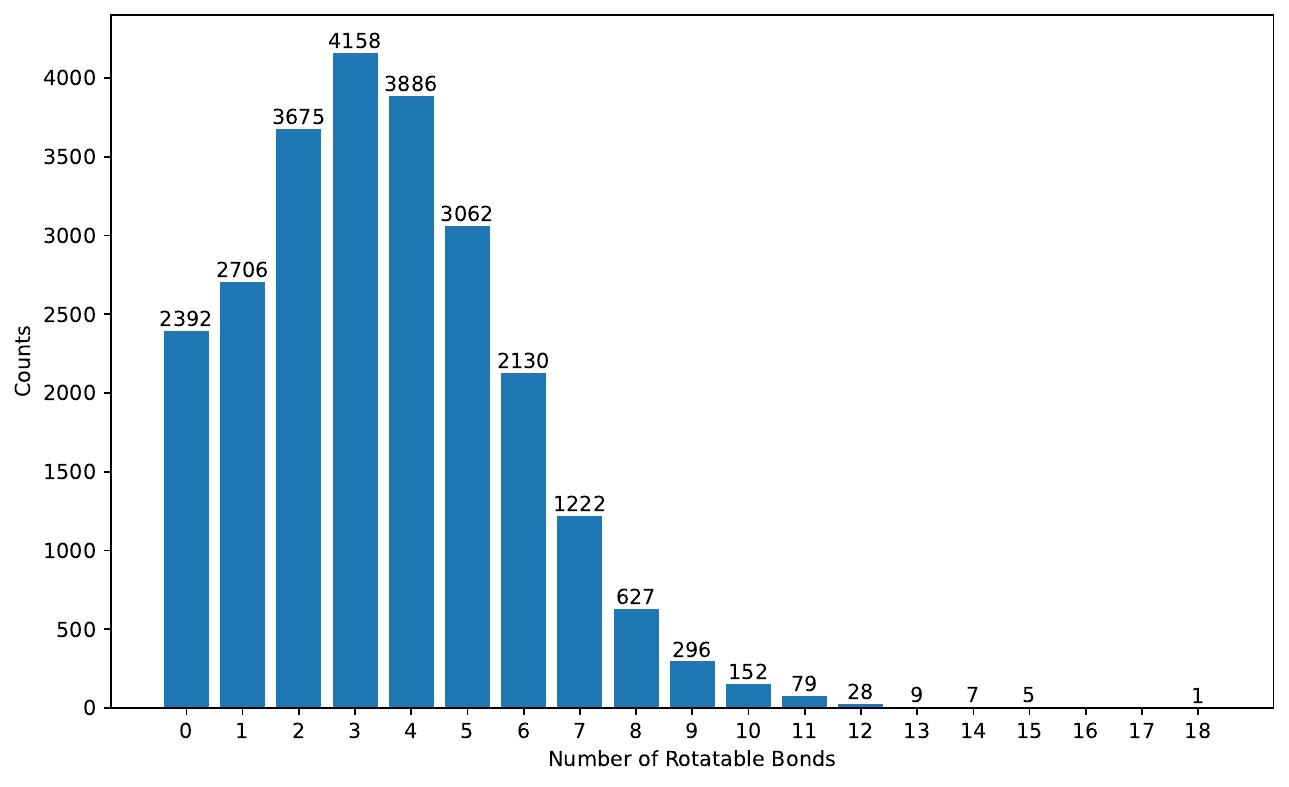}
        \caption*{\texttt{CalcNumRotatableBonds}}
    \end{subfigure}
    \hfill
    \begin{subfigure}[m]{0.48\textwidth}
        \includegraphics[width=\textwidth]{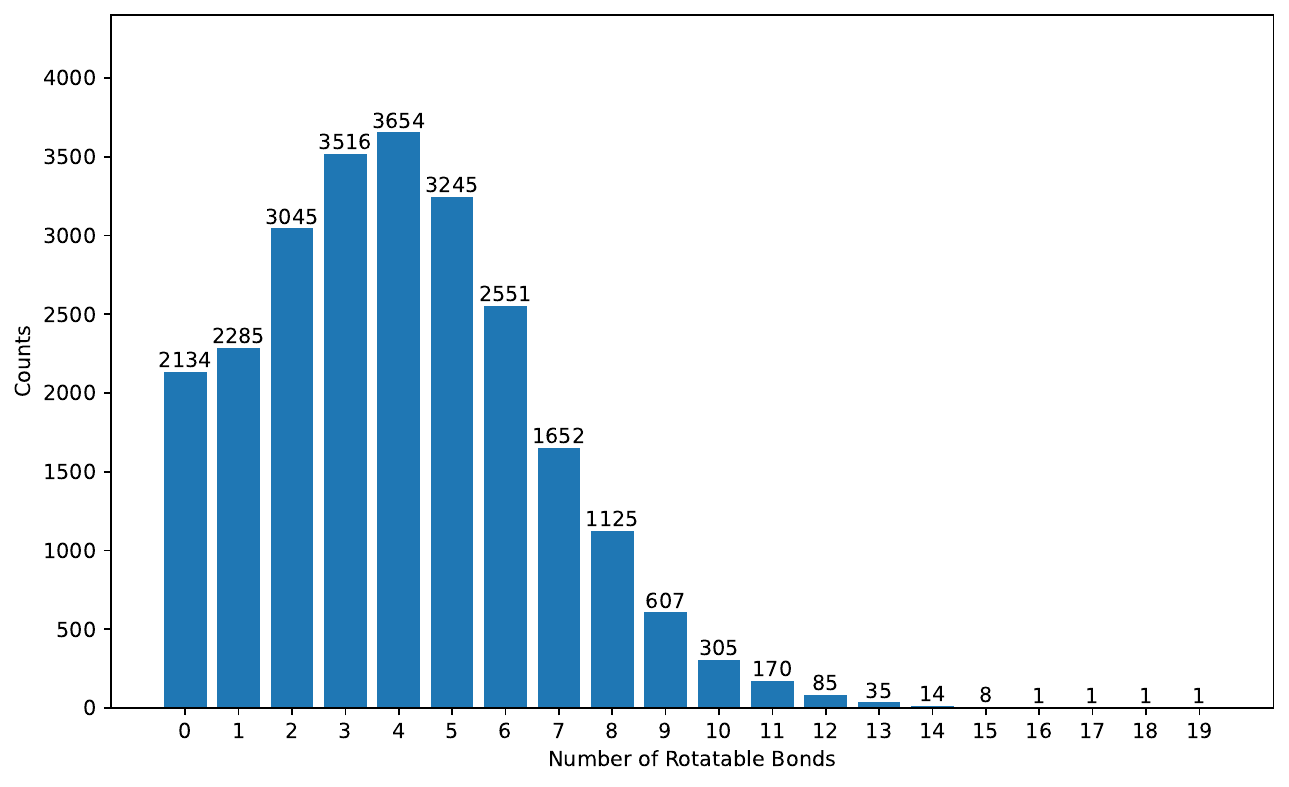}
        \caption*{\texttt{GetSubstructMatches} with SMARTS}
    \end{subfigure}
    \caption{Histograms of rotatable bonds and torsional degrees of freedom in the SPICE training set. Left: Number of rotatable bonds determined using RDKit's \texttt{CalcNumRotatableBonds}. Right: Number of torsional degrees of freedom determined by \texttt{GetSubstructMatches} with SMARTS string \texttt{[!\$(*\#*)\&!D1]-\&!@[!\$(*\#*)\&!D1]}.}
    \label{fig:hist_spice}
\end{figure}
We are interested in conformations of molecules, implying that we care about the rotational degrees of freedom. We identify two different sets describing these degrees of freedom: \emph{number of rotatable bonds} and \emph{torsional degrees of freedom}, or \emph{torsions} for short. In our case, rotatable bonds $\subseteq$ torsional degrees of freedom. We compute the number of rotatable bonds using the RDKit package \texttt{CalcNumRotatableBonds}. We count the number of torsional degrees of freedom using the SMARTS string \texttt{[!\$(*\#*)\&!D1]-\&!@[!\$(*\#*)\&!D1]} and \texttt{GetSubstructMatches}, also from RDKit. We group molecules into classes according to the number of rotatable bonds, but we provide the torsional degrees of freedom to our Adjoint Sampling Torsion model. The distributions of rotatable bonds and the comparison between the two sets of rotational degrees of freedom can be found in \Cref{fig:hist_spice} and \Cref{fig:heatmap}. We find fewer rotatable bonds than torsions matching our substructure query, which is expected. More information about our torsional model is in \Cref{app:conformers_and_torsions}.

We randomly subsampled 10 molecules from each class of molecules with a fixed number of rotatable bonds, in the range of 3-10, to create the test set. We performed pre-processing and CREST~\citep{pracht2024crest} on the resulting 80 molecules. 
We also removed 30 additional molecules from the training set; 10 from each of the classes where the number of rotatable bonds was between 0-2. These do not appear in the test set since they are quite simple. Our procedure follows GEOM-DRUGS~\citep{axelrod2022geom}.

During pre-processing, we first used RDKit to generate 50 conformers with \texttt{EmbedMultipleConfs} using parameters \texttt{pruneRmsThresh=0.01, maxAttempts=5, useRandomCoords=False} corresponding to a pruning threshold of similar conformers of 0.01 \AA, a maximum of five embedding attempts per conformer, coordinate initialization from the eigenvalues of the distance matrix, and a random seed. If no conformers were successfully generated then \texttt{numConfs} was increased to 500. Afterwards, the conformers were optimized in the RDKit default MMFF force field. We deduplicated the conformers using RDKit's \texttt{GetBestRMS}, unless the calculation took longer than 48 hours then we switched to RDKit's \texttt{AlignMol}. (97 used \texttt{GetBestRMS}, 3 used \texttt{AlignMol}.) After removing the duplicate conformers that exhibited an RMSD $< 0.1 \mathring{A}$, the ten conformers with the lowest energy were further optimized with an approximate energy model known as \emph{extended tight binding} (xTB) \citep{grimme2017robust, friede2024dxtb}. The conformer with the lowest xTB energy was used as input to the CREST simulation. We used the default hyperparameters from CREST version 3.0.2 including a 6 kcal/mol cutoff on final conformers. Any SMILES strings containing slash or backslash indicate a cis/trans stereochemistry and were skipped since their SMILES string is not preserved during the above procedure.

The output of the above procedure was a thoroughly sampled potential energy surface by the standards of modern methods; however, the structures are optimized using the xTB level of accuracy. We took the process a step further and performed geometry optimizations using DFT with ORCA on each conformer in the test set. Specifically, we took the output of CREST and ran ORCA relaxation with settings:  
\begin{verbatim}
Opt wB97M-V def2-tzvpd RIJCOSX def2/J NoUseSym DIIS NOSOSCF NormalConv DEFGRID2 
NormalPrint AllPop
%scf Convergence Tight maxiter 300 end
%elprop Dipole true Quadrupole true end
%output Print[P_ReducedOrbPopMO_L] 1 Print[P_ReducedOrbPopMO_M] 1 
Print[P_BondOrder_L] 1 Print[P_BondOrder_M] 1 Print[P_Fockian] 1 Print[P_OrbEn] 2 end
%geom MaxIter 100 end
%scf THRESH 1e-12 TCUT 1e-13 end
\end{verbatim}
Some lines were broken in the latex version of the code snippet above due to space limitations. These results are run at a very high level of DFT theory. Further geometry optimization may be desired at \texttt{DEFGRID3} with \texttt{TightOpt}; however, such an extreme precision was prohibitive given our resources available at the time.

After the geometry optimization, we utilized RDKit's \texttt{GetAllConformerBestRMS} and a threshold of 0.1 \AA\xspace to identify pairwise duplicates and remove them from the dataset. The average number of reference conformers for each number of rotatable bonds is presented in the left side of  \Cref{fig:spice_drugs_confs}.

\looseness=-1000
We investigated the reason that the 80 molecule test set from SPICE generated an order of magnitude more conformers than the 80 molecule GEOM-DRUGS test set. We found, after replicating the GEOM-DRUGS conformers, that the molecules in the SPICE test set are inherently more flexible than the ones from the GEOM-DRUGS test set. GEOM-DRUGS molecules tended to have more rings and hydrogen bonds, leading to fewer conformations. In other words, SPICE tended to have a flatter potentially energy surface. Since we followed the GEOM-DRUGS procedure, we applied the same 6 kcal/mol cutoff in CREST and kept many more conformers from SPICE. For Coverage Recall \%, we capped the reference conformers per molecule at 512.

\begin{figure}[htb]
    \centering
    \includegraphics[width=0.6\textwidth]{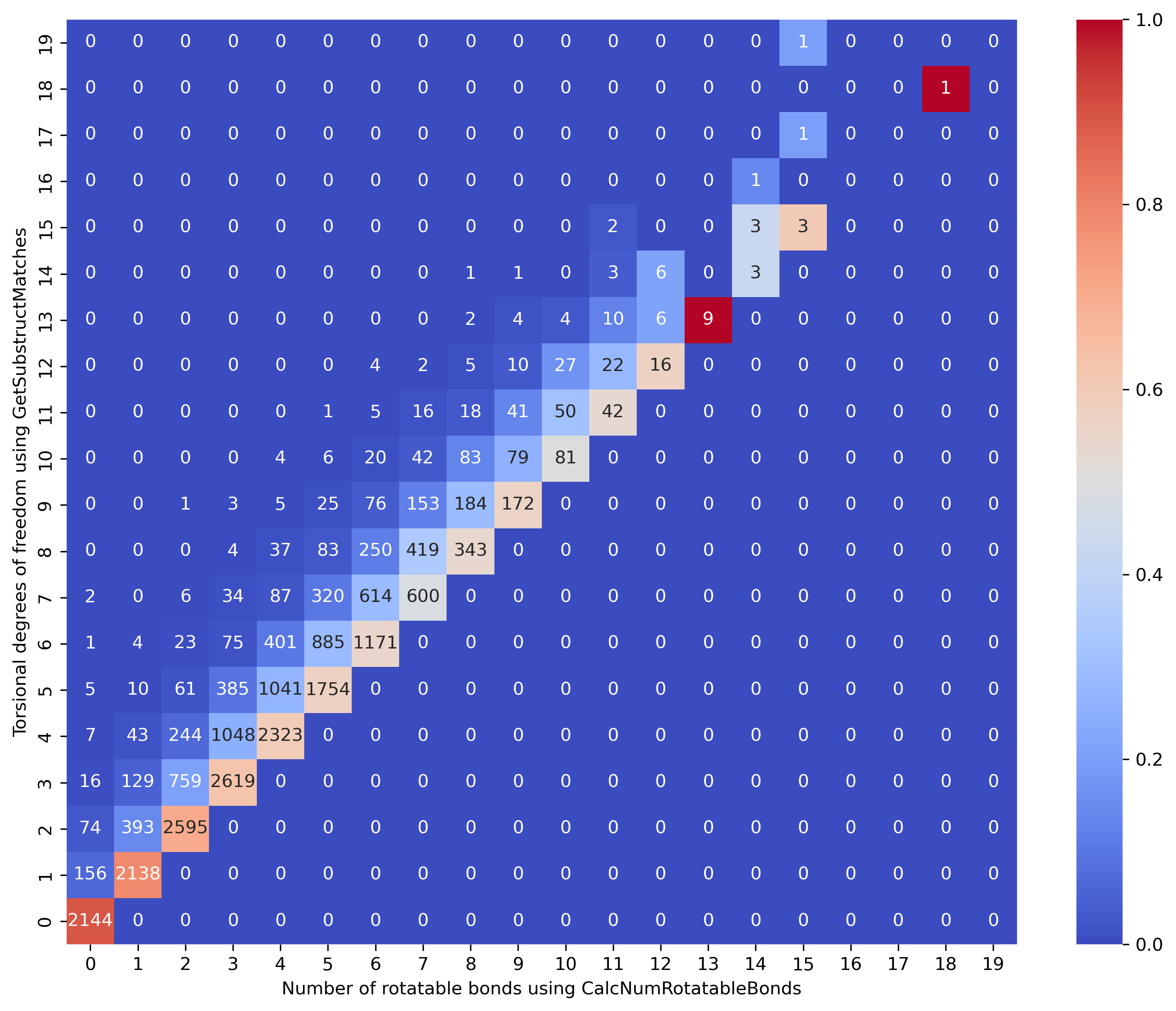}
    \caption{Correlation heatmap between the torsional degrees of freedom (y-axis) versus the number of rotatable bonds (x-axis) in the combined SPICE train and test sets. Torsional degrees of freedom are identified using \texttt{GetSubstructMatches} while rotatable bonds are identified using \texttt{CalcNumRotatableBonds}. There are always more or equal torsional degrees of freedom than rotatable bonds. Colors indicate a conditional probability: for each number of rotatable bonds (x-axis), the color shows the fraction of molecules with a given number of torsional degrees of freedom (y-axis). Numbers in cells show the raw counts.}
    \label{fig:heatmap}
\end{figure}

\begin{figure}[htb]
    \centering
    \begin{subfigure}[m]{\linewidth}
    \begin{subfigure}[m]{0.48\textwidth}
        \includegraphics[width=\textwidth]
        {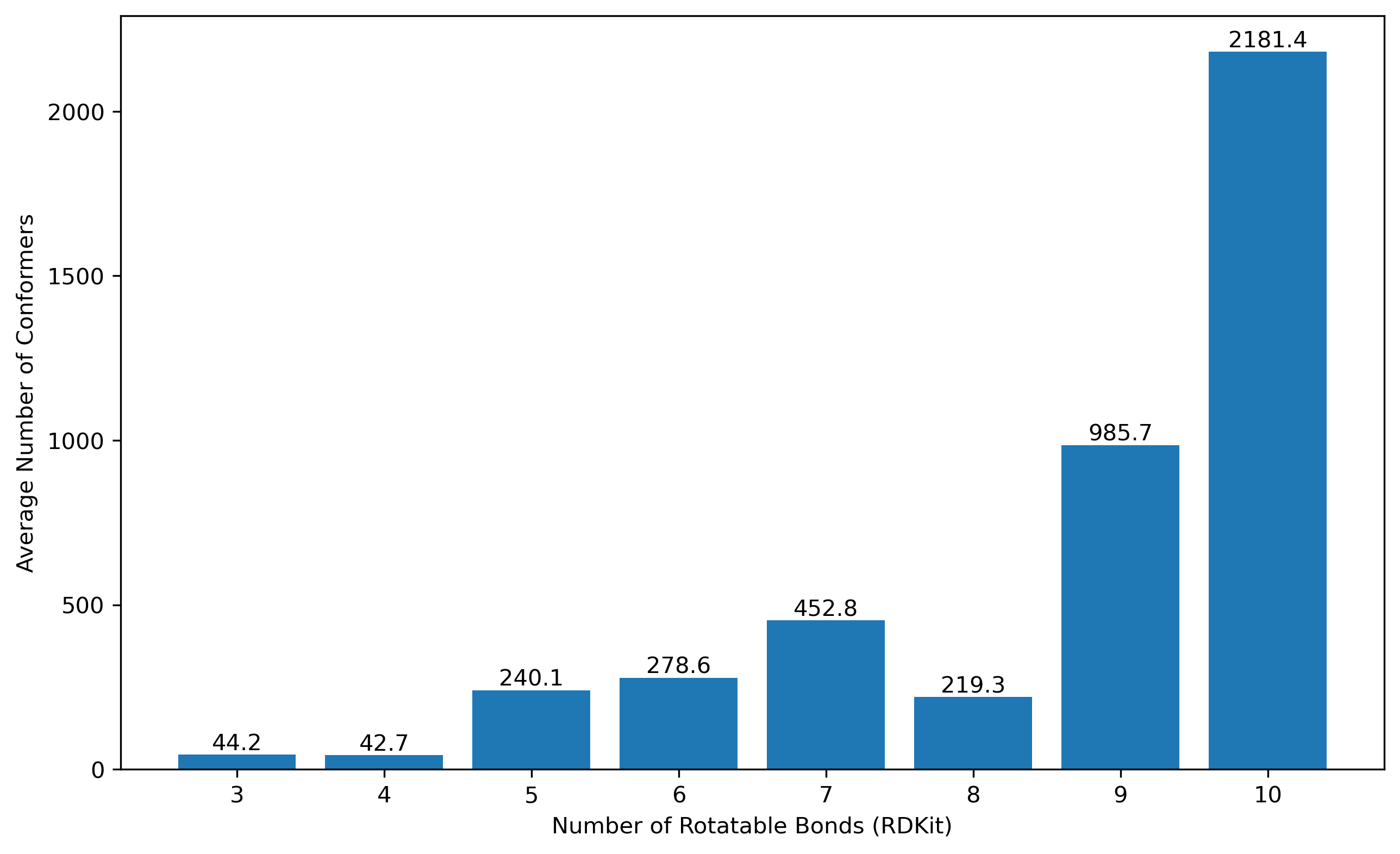}
        \caption*{SPICE}
    \end{subfigure}
    \hfill
    \begin{subfigure}[m]{0.48\textwidth}
        \includegraphics[width=\textwidth]{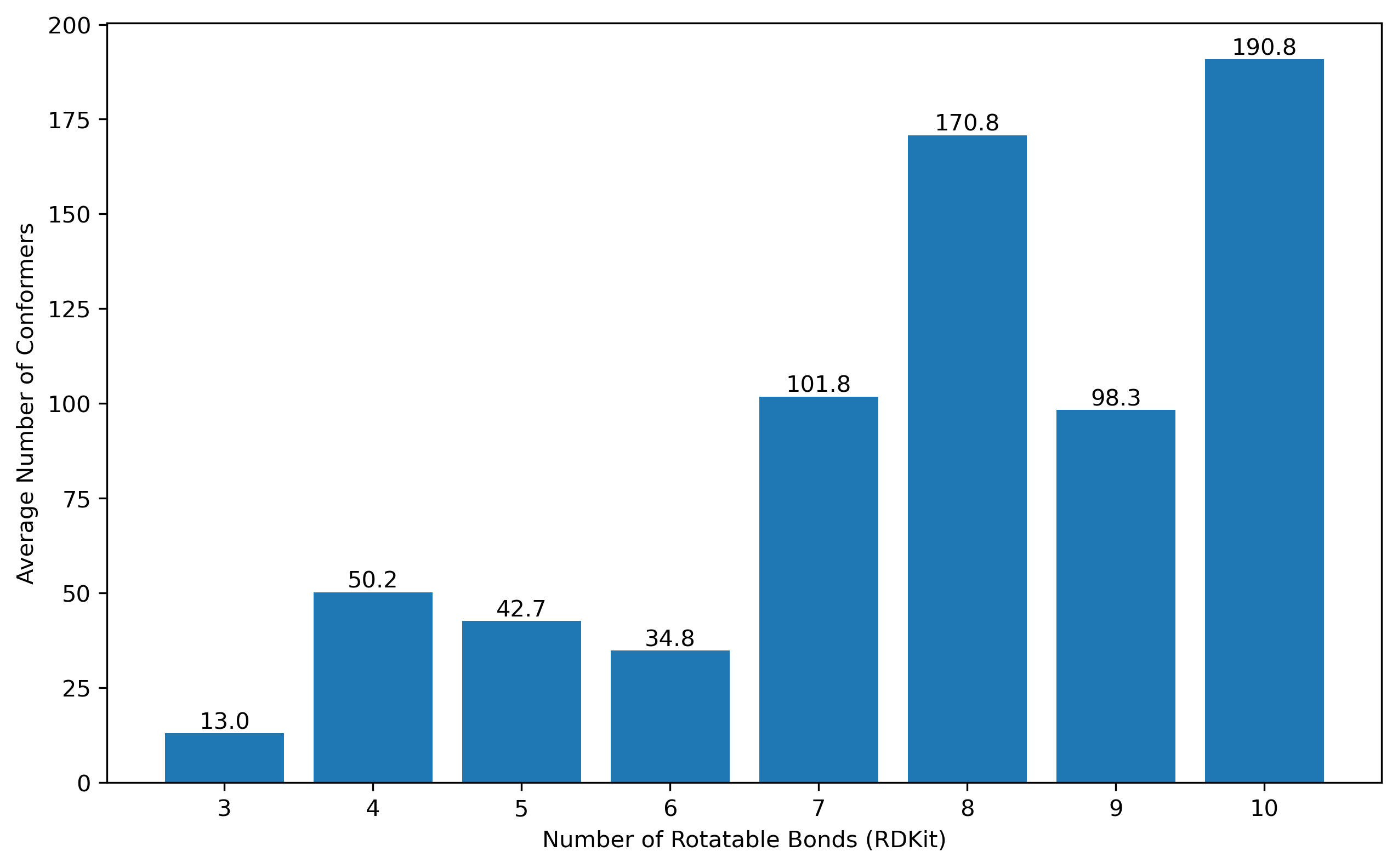}
        \caption*{GEOM-DRUGS}
    \end{subfigure}
    \end{subfigure}
    \caption{Average number of conformers in the SPICE (\emph{left}) and GEOM-DRUGS (\emph{right}) test sets vs number of rotatable bonds calculated by RDKit's function \texttt{CalcNumRotatableBonds}. Note the different y-axes between the figures.}
    \label{fig:spice_drugs_confs}
\end{figure}

\begin{figure}[htb]
    \centering
    \begin{subfigure}[m]{\linewidth}
    \begin{subfigure}[m]{0.50\textwidth}
        \includegraphics[width=\textwidth]{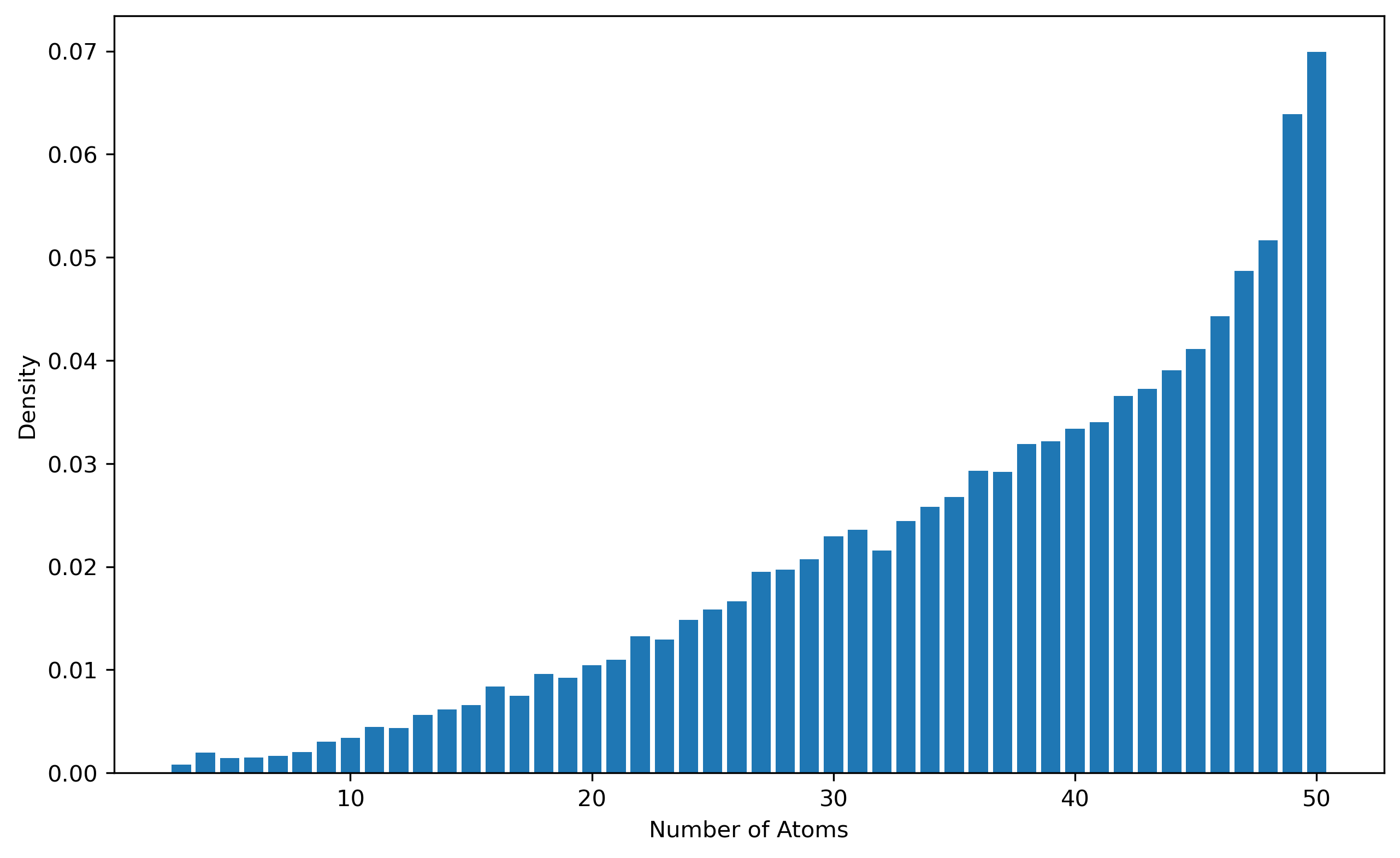}
        \caption*{SPICE: number of atoms.}
    \end{subfigure}
    \hfill
    \begin{subfigure}[m]{0.50\textwidth}
        \includegraphics[width=\textwidth]{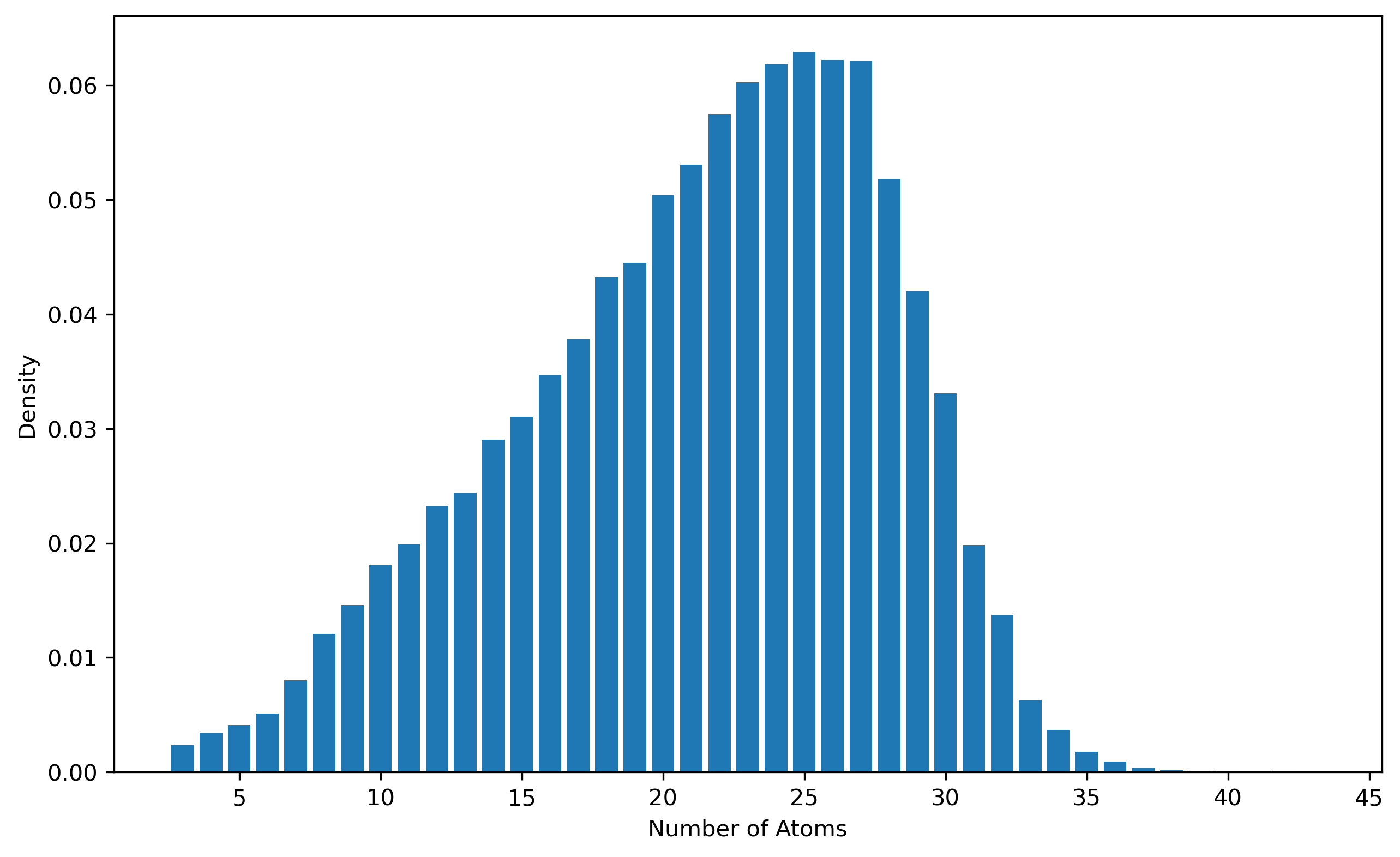}
        \caption*{SPICE: number of heavy atoms (excludes Hydrogen).}
    \end{subfigure}
    \end{subfigure}
    \caption{Atom counts appearing in the SPICE (train + test) dataset: (\emph{left}): Frequency of total atoms. (\emph{right}): Frequency of only heavy atoms (excluding Hydrogen). Our conformer generation predicts positions of \emph{all} atoms.}
    \label{fig:spice_atom_counts}
\end{figure}


\subsection{GEOM-DRUGS Dataset}
\label{app:geom_drugs_info}
We also evaluated our models on the GEOMDRUGS dataset~\citep{axelrod2022geom}. We took the test set from Torsional Diffusion~\citep{jing2022torsional}, and excluded molecules that have elements other than Br, C, Cl, F, H, I, N, O, P, and S. From the remaining molecules, we randomly sampled 10 molecules for each number of rotatable bonds in the range of 3-10 (calculated by RDKit's \texttt{CalcNumRotatableBonds}). The smallest molecule was comprised of 19 atoms and the largest of 65, so slightly larger than SPICE.

We use the conformers in the GEOM-DRUGS dataset as the reference for evaluation. The conformers are represented as RDKit molecule object. To use the same format as the SPICE dataset, we extracted the Cartesian coordinates from the RDKit molecules and wrote them into \texttt{.xyz} files for each molecule.
The average number of reference conformers for each number of rotatable bonds are presented in the right side of \Cref{fig:spice_drugs_confs}.

\clearpage

\section{Additional Experimental Results}

\subsection{Energy Histograms for Synthetic Energy Experiments}\label{app:energy_histograms}
In this section, we provide additional qualitative comparisons between the energy distributions generated by our synthetic energy benchmarks and those obtained via ground-truth MCMC sampling. \Cref{fig:energy_histograms} illustrate histograms for three distinct systems --- DW4, LJ13, and LJ55. These results clearly indicate that Adjoint Sampling is more effective at avoiding high-energy regions, especially in the case of LJ55, which features the highest-dimensional energy surface.

\begin{figure*}[h]\label{fig:energy_histograms}
    \centering
    \begin{subfigure}[t]{0.30\textwidth}
        \centering
        \includegraphics[height=0.95in]{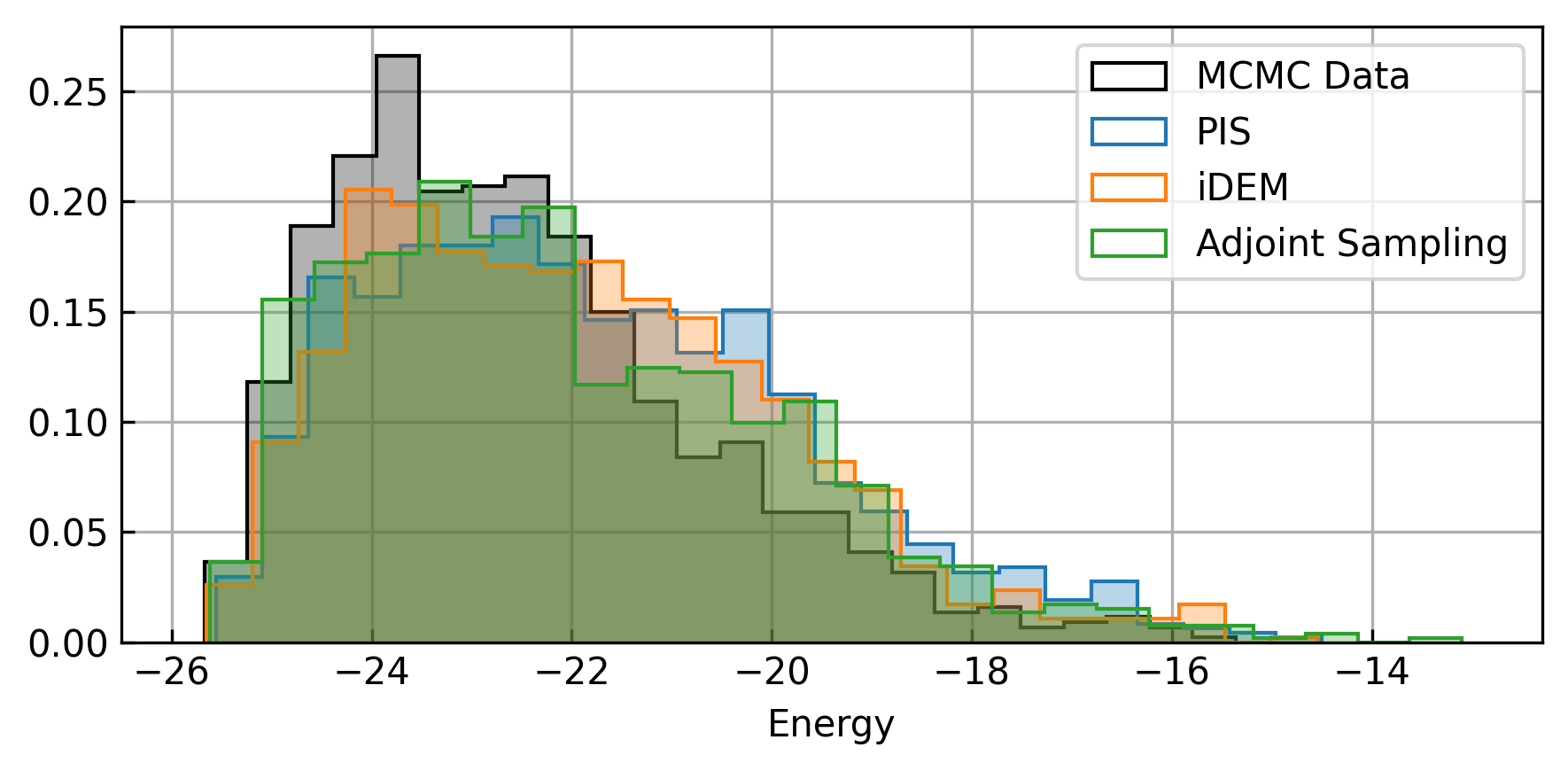}
        \caption{DW4}
    \end{subfigure}%
    ~ 
    \begin{subfigure}[t]{0.30\textwidth}
        \centering
        \includegraphics[height=0.95in]{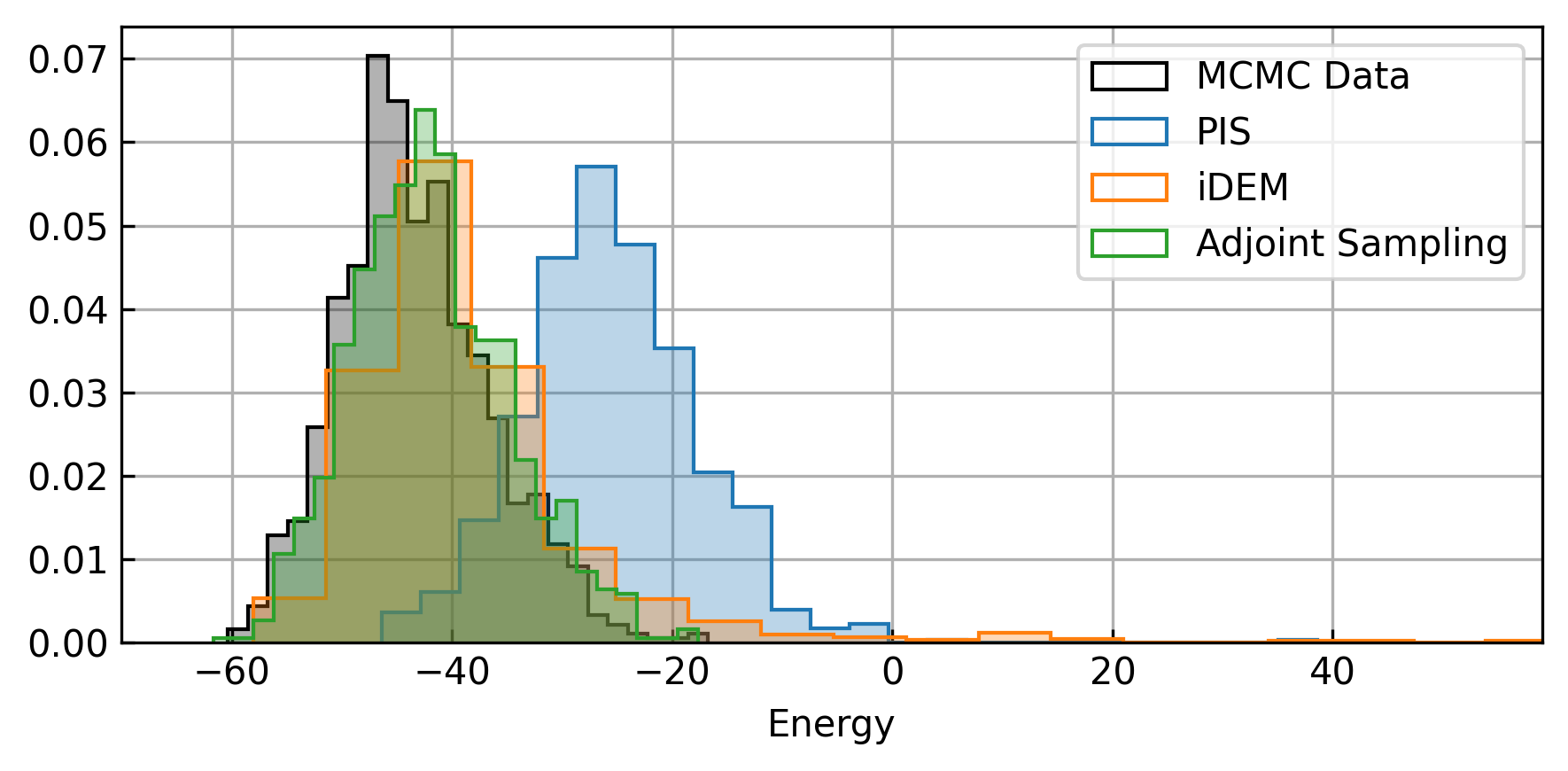}
        \caption{LJ13}
    \end{subfigure}
    ~
    \begin{subfigure}[t]{0.30\textwidth}
        \centering
        \includegraphics[height=0.95in]{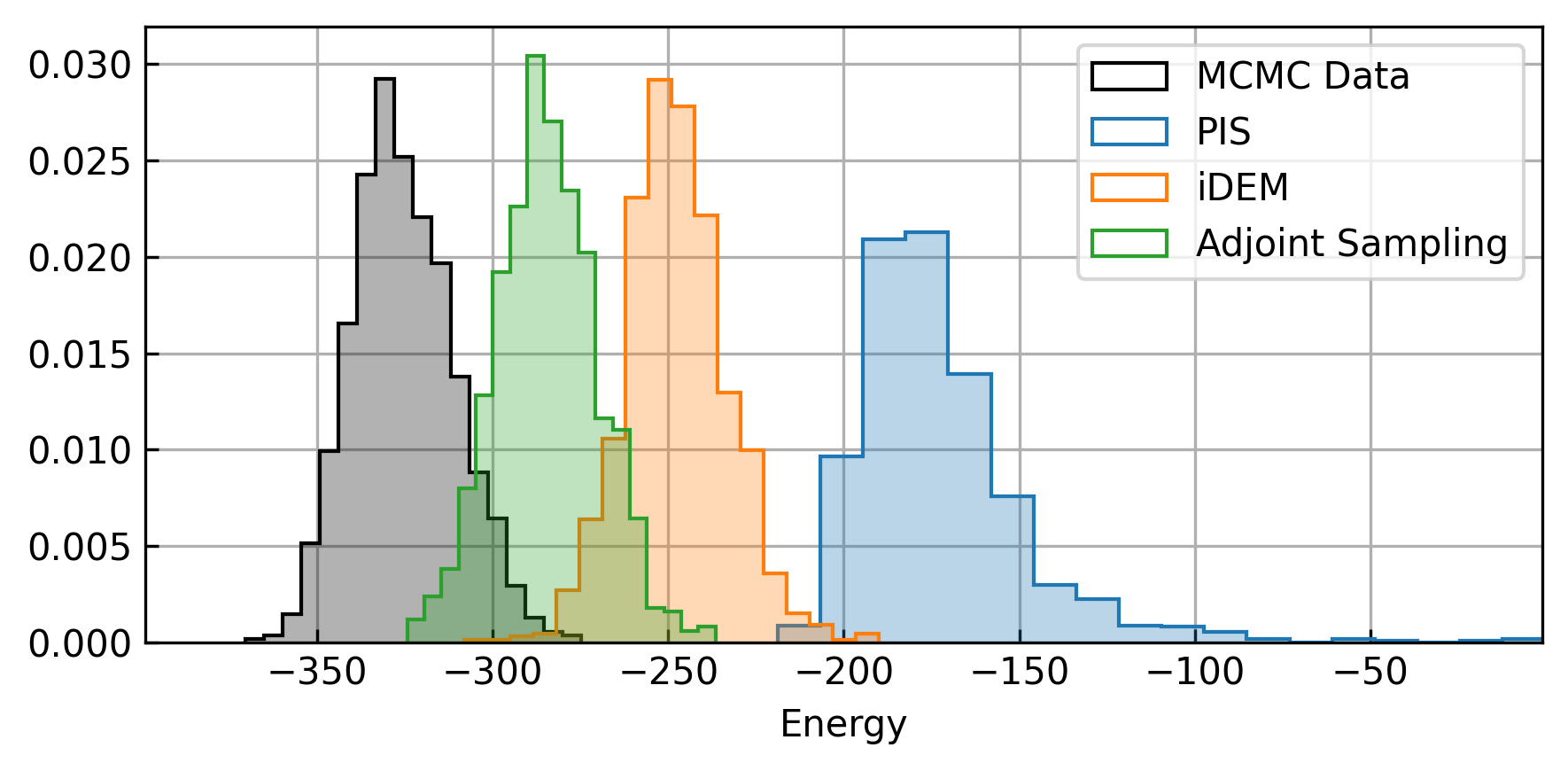}
        \caption{LJ55}
    \end{subfigure}
    \caption{Energy Histograms of the Synthetic energy benchmark overlayed on the ground-truth MCMC generated samples.}
\end{figure*}

\subsection{Runtime Analysis of Molecular Conformer Generation}

This section further examines the computational efficiency of our proposed Adjoint Sampling technique through a detailed runtime analysis of molecular conformer generation task. 

As illustrated in \Cref{fig:runtime}, Adjoint Sampling overcomes two significant computational bottlenecks found in prior approaches, namely the intensive SDE simulation and the costly energy evaluation steps. By mitigating these challenges, our method is able to perform substantially more gradient updates within the same execution time, leading to enhanced overall performance in tasks with expensive energy function evaluation, such as conformer generation.

\begin{figure}[h]
    \centering
    \includegraphics[width=0.8\linewidth]{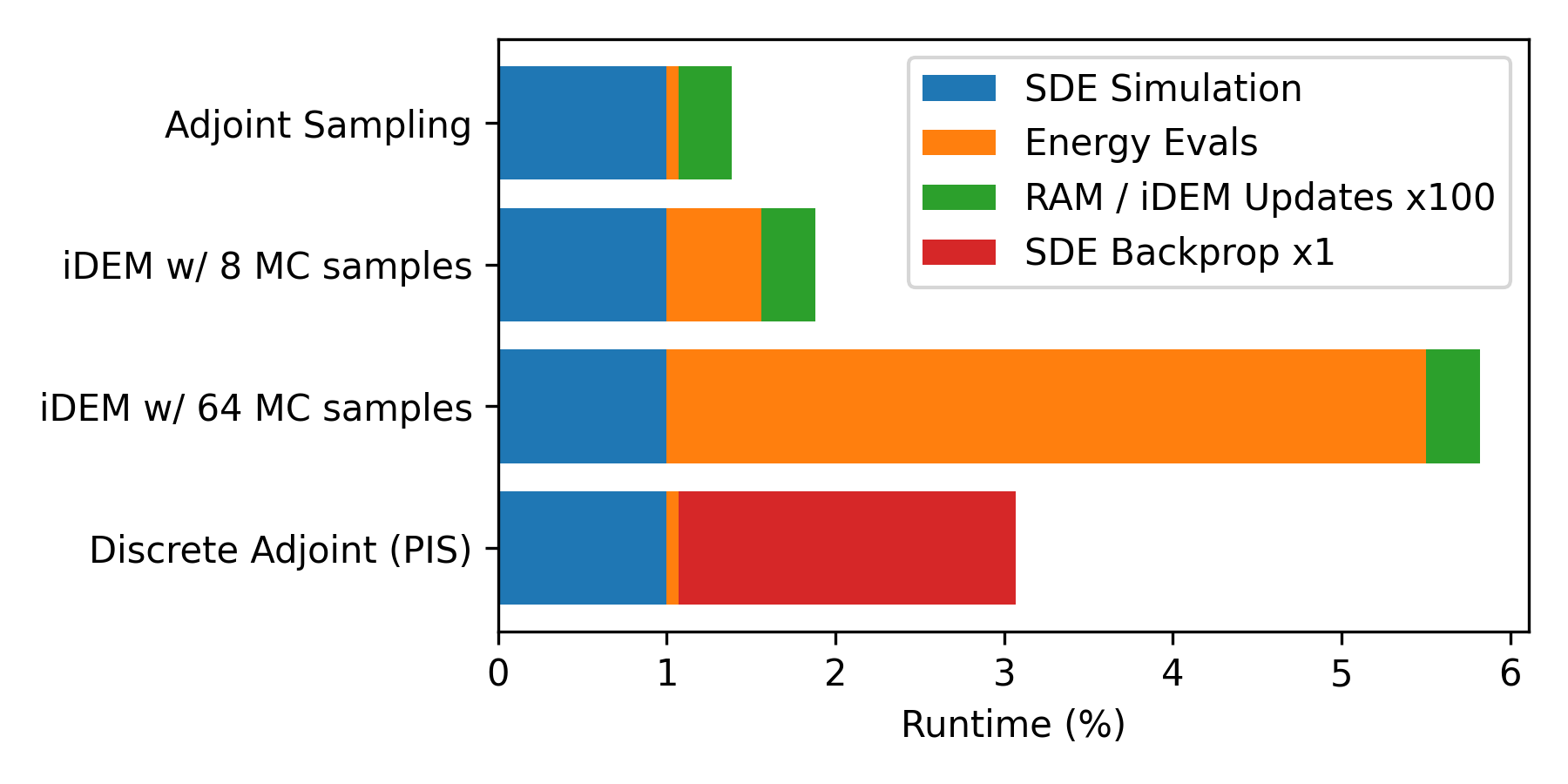}
    \caption{Conformer Generation Run-time: A run-time analysis breakdown of Adjoint Sampling compared to selected baselines at the level of gradient updates. Adjoint Sampling overcomes the two primary bottle-knecks of previous methods: SDE simulation and energy evaluation, allowing Adjoint Sampling to perform  far more gradient updates in the same run-time.}
    \label{fig:runtime}
\end{figure}

\subsection{Choice of energy function}

\begin{table}[h]
\caption{
Test set MAE for eSEN-900k and MACE-OFF-23-M on the SPICE-MACE-OFF test dataset. Energy (\textbf{E}) MAE is in meV/atom. Force (\textbf{F}) MAE is in meV/\AA.
\label{tab:spice_mace_off}
}
\resizebox{\textwidth}{!}{%
\begin{tabular}{lcccccccccccccc}
\toprule
& \multicolumn{2}{c|}{PubChem} & \multicolumn{2}{c|}{DES370K M.} & \multicolumn{2}{c|}{DES370K D.} & \multicolumn{2}{c|}{Dipeptides} & \multicolumn{2}{c|}{Sol. AA} & \multicolumn{2}{c|}{Water} & \multicolumn{2}{c}{QMugs} \\
Model & \textbf{E} & \textbf{F} & \textbf{E} & \textbf{F} & \textbf{E} & \textbf{F} & \textbf{E} & \textbf{F} & \textbf{E} & \textbf{F} & \textbf{E} & \textbf{F} & \textbf{E} & \textbf{F} \\
\midrule
eSEN-900k & 0.66 & 14.37 & 0.50 & 5.32 & 0.52 & 5.62 & 0.37 & 8.52 & 1.41 & 17.16 & 0.71 & 12.66 & 0.44 & 14.56 \\
MACE-OFF-23-M & 0.91 & 20.57 & 0.63 & 9.36 & 0.58 & 9.02 & 0.52 & 14.27 & 1.21 & 23.26 & 0.76 & 15.27 & 0.69 & 23.58 \\
\bottomrule
\end{tabular}
}
\end{table}

The eSEN energy model~\citep{fu2025learning}, used in this work, has been shown to accurately predict DFT energy and forces on the SPICE dataset. We use an eSEN network with 2 layers, $L_{\mathrm{max}}=2, M_{\mathrm{max}}=0$, 64 channels, and a radius cutoff of $4.5$\ \AA. This model has around 900k trainable parameters. To demonstrate the robustness of the adjoint matching algorithm to the choice of the energy function, we conduct additional molecular conformation sampling experiments using adjoint matching on a pretrained MACE-OFF-23 energy model~\citep{batatia2022mace, kovacs2023mace}. The test-set prediction errors of eSEN-900k and the MACE-OFF-23-M~\citep{kovacs2023mace} model are reported in \Cref{tab:spice_mace_off}. The same energy model is used for both relaxation and for optimizing adjoint sampling, where applicable. \Cref{tab:energy_function_choice} shows that adjoint matching with MACE-OFF-M and eSEN achieve similar performance in generating molecular conformation.

\begin{table*}[ht]
\caption{Recall and precision metrics for conformer generation using different energy models. Coverage values are for thresholds of 1.25\AA. Standard deviations are computed across molecules in the test set. (These experiments were done with an earlier version of the code.)}
\centering
\renewcommand{\arraystretch}{1.3}
\resizebox{0.5\textwidth}{!}{%
\begin{tabular}{@{} l l cccc}
    \toprule
    & & \multicolumn{4}{c}{{SPICE}} \\
    \cmidrule(lr){3-6}
    & & \multicolumn{2}{c}{{Recall} } & \multicolumn{2}{c}{{Precision}} \\
    & Method & Cov. $\uparrow$ & AMR $\downarrow$ & Cov. $\uparrow$ & AMR $\downarrow$ \\
    \midrule
    & eSEN 
& 80.28{\color{gray}\tiny$\pm$27.68} & 0.96{\color{gray}\tiny$\pm$0.28} & 47.97{\color{gray}\tiny$\pm$32.24} & 1.25{\color{gray}\tiny$\pm$0.37} \\
& MACE 
& 74.93{\color{gray}\tiny$\pm$30.36} & 1.02{\color{gray}\tiny$\pm$0.27} & 50.22{\color{gray}\tiny$\pm$32.22} & 1.28{\color{gray}\tiny$\pm$0.39} \\
& eSEN (+pretrain) 
& 88.58{\color{gray}\tiny$\pm$19.96} & 0.85{\color{gray}\tiny$\pm$0.25} & 65.12{\color{gray}\tiny$\pm$29.90} & 1.13{\color{gray}\tiny$\pm$0.34}  \\
& MACE (+pretrain) 
& \cellhi 89.62{\color{gray}\tiny$\pm$19.25} & \cellhi 0.84{\color{gray}\tiny$\pm$0.25} & \cellhi 67.07{\color{gray}\tiny$\pm$29.48} & \cellhi 1.11{\color{gray}\tiny$\pm$0.34} \\

    \midrule
    \parbox[t]{2mm}{\multirow{4}{*}{\rotatebox[origin=c]{90}{\color{cadetblue}w/ relaxation}}} 
    & eSEN 
& 91.49{\color{gray}\tiny$\pm$18.27} & 0.71{\color{gray}\tiny$\pm$0.29} & 60.00{\color{gray}\tiny$\pm$29.97} & 1.02{\color{gray}\tiny$\pm$0.38} \\
& MACE 
& 91.06{\color{gray}\tiny$\pm$17.94} & 0.72{\color{gray}\tiny$\pm$0.28} & 65.19{\color{gray}\tiny$\pm$28.27} & 1.05{\color{gray}\tiny$\pm$0.37} \\
& eSEN (+pretrain) 
& \cellhi 96.36{\color{gray}\tiny$\pm$\hphantom{0}8.92} & \cellhi 0.60{\color{gray}\tiny$\pm$0.25} & 76.46{\color{gray}\tiny$\pm$26.22} & \cellhi 0.92{\color{gray}\tiny$\pm$0.35} \\
& MACE (+pretrain) 
& 95.73{\color{gray}\tiny$\pm$11.32} &  0.61{\color{gray}\tiny$\pm$0.25} & \cellhi 77.08{\color{gray}\tiny$\pm$25.73} & \cellhi 0.92{\color{gray}\tiny$\pm$0.34} \\

    \bottomrule
\end{tabular}
}
\label{tab:energy_function_choice}
\end{table*}

\clearpage

\subsection{Ablating Reciprocal Projection}\label{app:adjoint_ablation}
Here we present the same ablation of the Reciprocal Projection as performed for the Synthetic experiments in~\Cref{tab:synthetic}, but for the molecular conformer generation task. Recall that instead of using the Reciprocal projection (RP) to sample $X_t$ given $X_1$, we simply store sample pairs $(X_1, X_t)$ in the buffer and train on the Adjoint Matching objective \eqref{eq:am}. We call this \emph{Adjoint Sampling w/o RP}, which helps demonstrate the effectiveness of the Reciprocal projection. For the purpose of this ablation, we only consider the MACE energy and do not perform any post generation relaxation.

\begin{table*}[ht]
\caption{Recall and precision metrics for Adjoint Sampling with and without use of the Reciprocal Projection (RP). Coverage values are for thresholds of 1.25\AA. Standard deviations are computed across molecules in the test set. We see that the Reciprocal Projection results in significant improvements across all metrics, especially when generalizing to the unseen dataset GEOM-DRUGS. No pretraining and no relaxation is performed post generation. (These experiments were done with an earlier version of the code.)}
\centering
\renewcommand{\arraystretch}{1.3}
\resizebox{\textwidth}{!}{%
\begin{tabular}{@{} l l cccc cccc }
    \toprule
    & & \multicolumn{4}{c}{{SPICE}} & \multicolumn{4}{c}{{GEOM-DRUGS}} \\
    \cmidrule(lr){3-6} \cmidrule(lr){7-10}
    & & \multicolumn{2}{c}{{Recall} } & \multicolumn{2}{c}{{Precision}} & \multicolumn{2}{c}{{Recall}} & \multicolumn{2}{c}{{Precision}} \\
    & Method & Cov. $\uparrow$ & AMR $\downarrow$ & Cov. $\uparrow$ & AMR $\downarrow$ & Cov. $\uparrow$ & AMR $\downarrow$ &
    Cov. $\uparrow$ & AMR $\downarrow$ \\
    \midrule
& Adjoint Sampling w/o RP (\textbf{Ablation})
& 78.25{\color{gray}\tiny$\pm$30.06} & 0.98{\color{gray}\tiny$\pm$0.34} & 57.64{\color{gray}\tiny$\pm$32.00} & \cellhi 1.23{\color{gray}\tiny$\pm$0.43}
& 53.46{\color{gray}\tiny$\pm$36.17} & 1.30{\color{gray}\tiny$\pm$0.48} & 35.16{\color{gray}\tiny$\pm$33.60} & 1.74{\color{gray}\tiny$\pm$0.68} \\
& Adjoint Sampling (\textbf{Ours})
& \cellhi 80.28{\color{gray}\tiny$\pm$27.68} & \cellhi 0.96{\color{gray}\tiny$\pm$0.28} & \cellhi \cellhi 47.97{\color{gray}\tiny$\pm$32.24} & 1.25{\color{gray}\tiny$\pm$0.37} 
& \cellhi 61.36{\color{gray}\tiny$\pm$38.74} & \cellhi 1.17{\color{gray}\tiny$\pm$0.60} & \cellhi 41.19{\color{gray}\tiny$\pm$35.93} & \cellhi 1.46{\color{gray}\tiny$\pm$0.67} \\
\bottomrule
\end{tabular}
}
\label{tab:adjoint_sampling_ablation}
\end{table*}

\clearpage

\subsection{Threshold Ablation for Molecular Conformation Generation}

\begin{figure}[h]
    \centering
    \begin{subfigure}[b]{0.5\linewidth}
    \includegraphics[width=\linewidth]{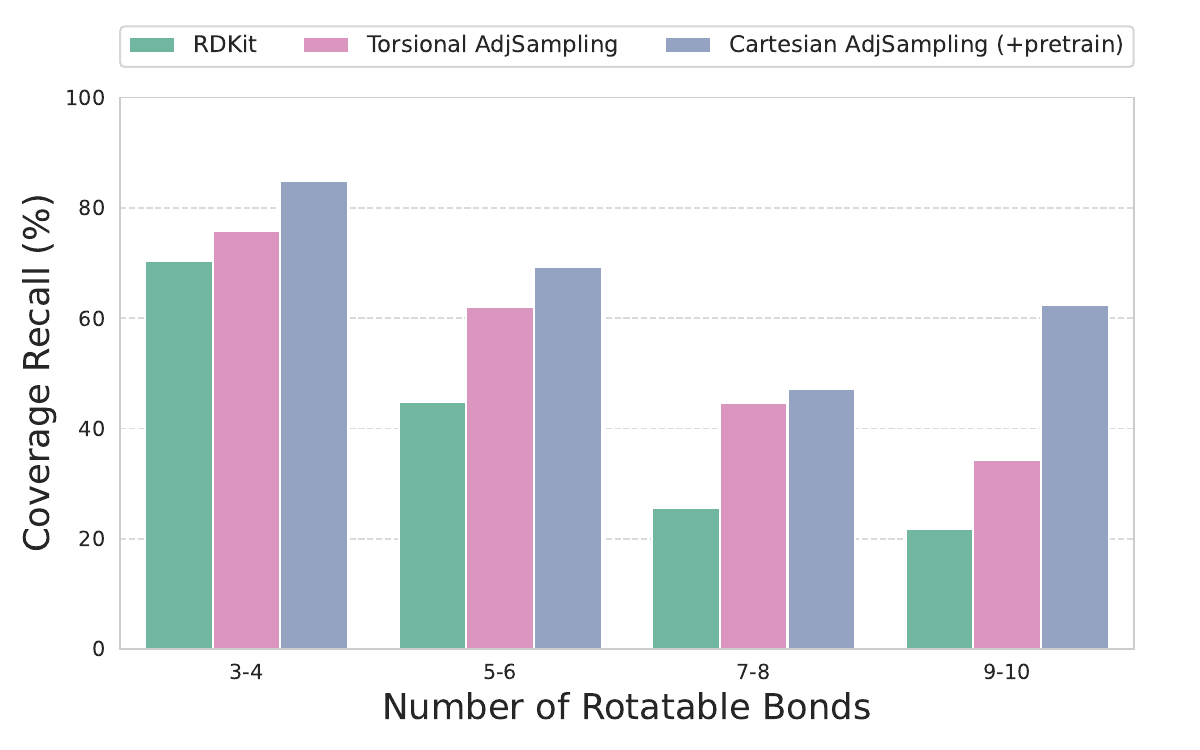}
    \caption*{SPICE (Threshold=0.75\AA)}
    \end{subfigure}%
    \begin{subfigure}[b]{0.5\linewidth}
    \includegraphics[width=\linewidth]{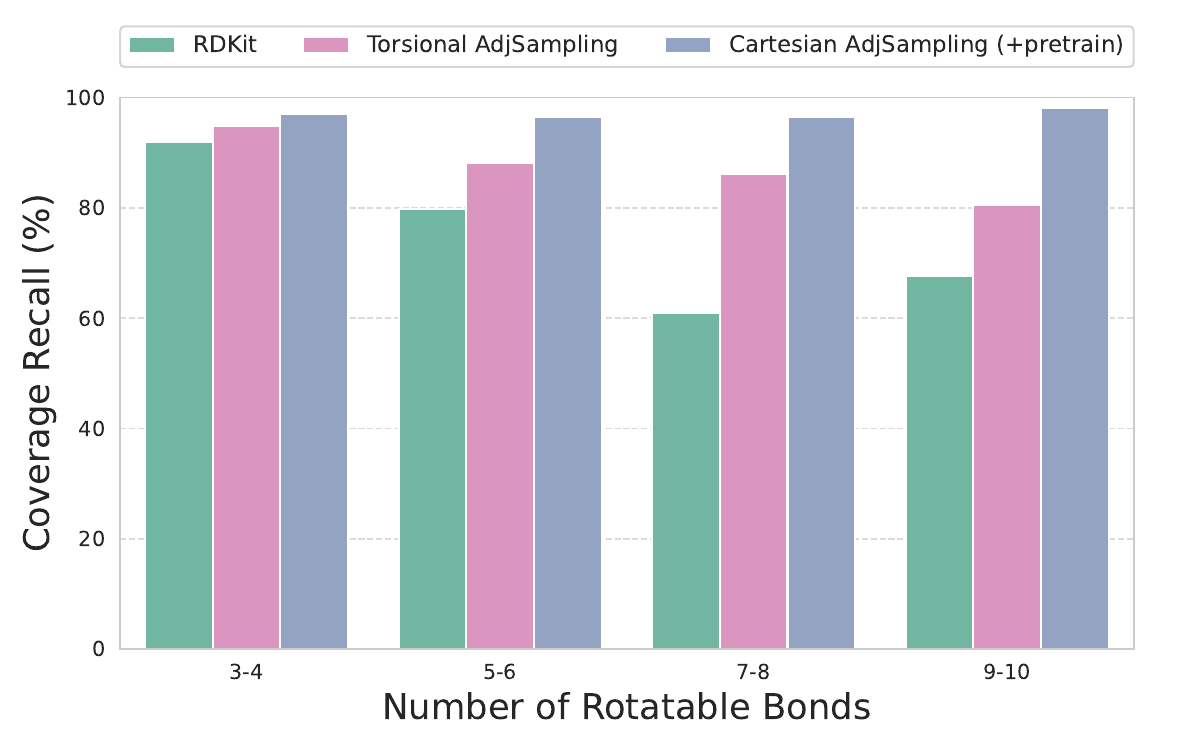}    
    \caption*{SPICE (Threshold=1.25\AA)}
    \end{subfigure}\\
    \begin{subfigure}[b]{0.5\linewidth}
    \includegraphics[width=\linewidth]{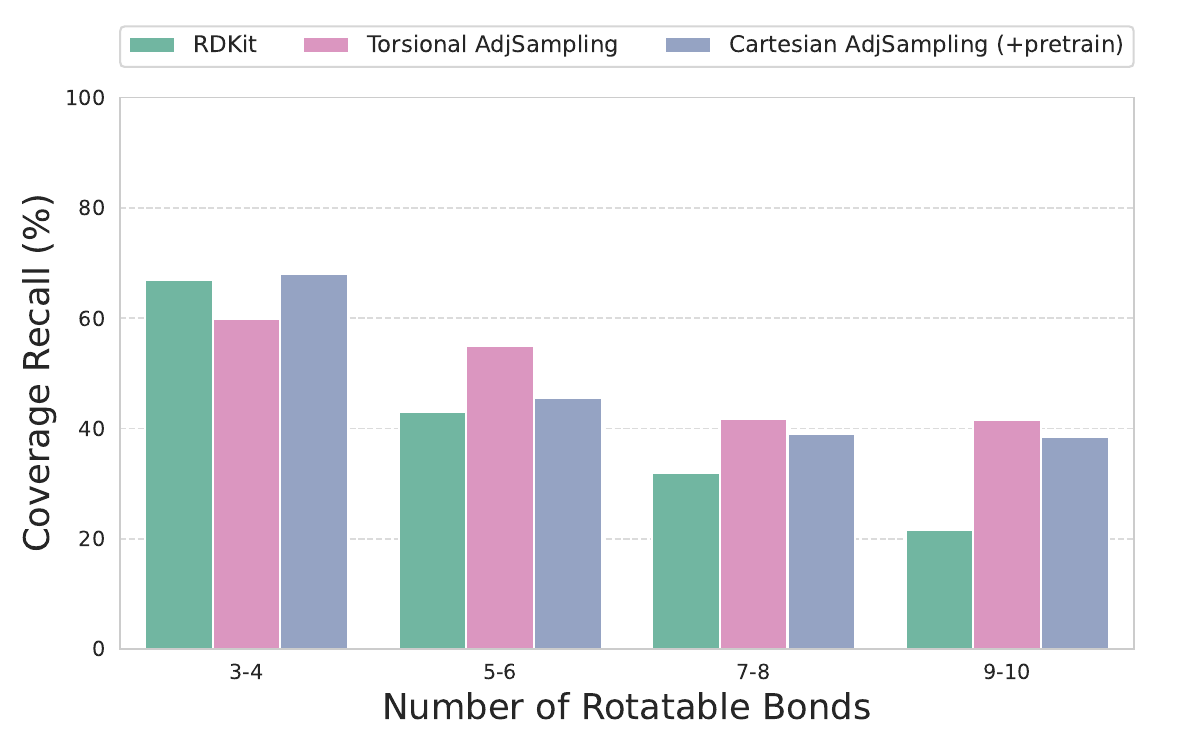}    
    \caption*{GEOM-DRUGS (Threshold=0.75\AA)}
    \end{subfigure}%
    \begin{subfigure}[b]{0.5\linewidth}
    \includegraphics[width=\linewidth]{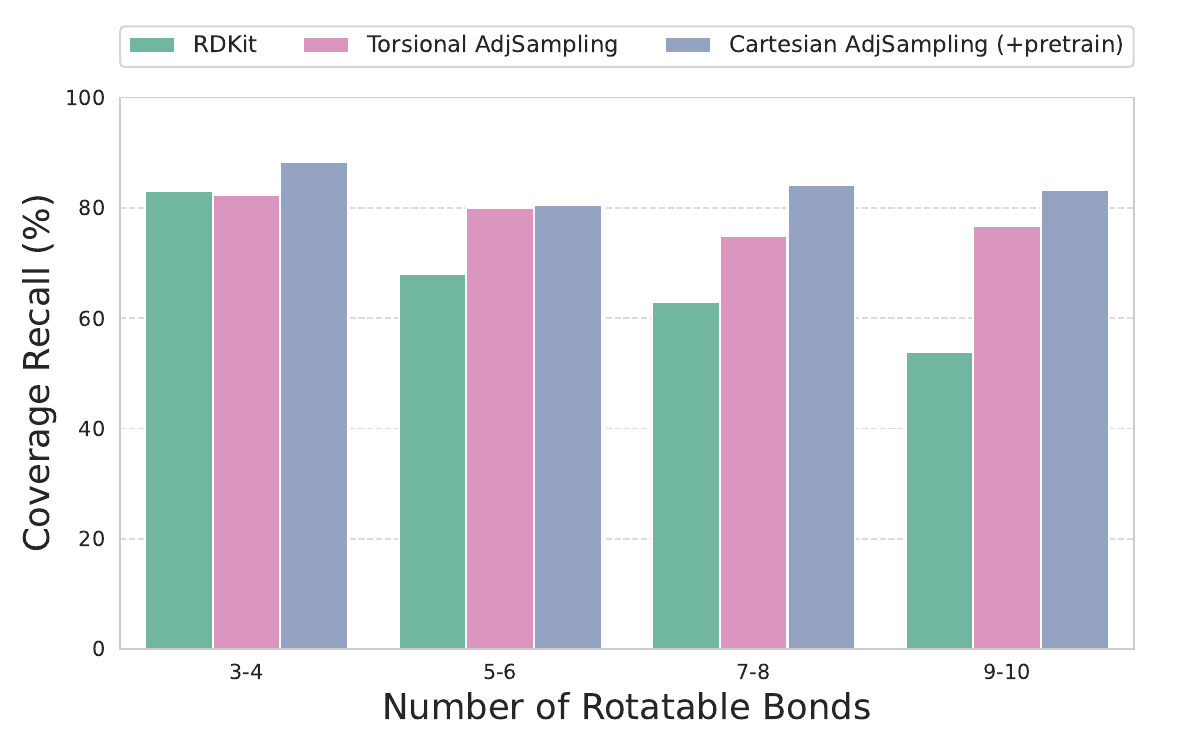}   
    \caption*{GEOM-DRUGS (Threshold=1.25\AA)}
    \end{subfigure}\\
    \caption{Coverage recall (\%) after relaxation for molecules of different number of rotatable bonds. We see that the gap between our model and RDKit increases for higher number of rotatable bonds.}
    \label{fig:recall_comparison_across_rotb}
\end{figure}

\end{document}